\algrenewcommand\algorithmicindent{1em}%
\pgfplotsset{compat=newest}
\algnewcommand{\LineComment}[1]{\State \(\triangleright\) #1}
\algnewcommand{\GoTo}[1]{\textbf{goto} #1}
\newtheorem{lemma}{Lemma}
\newtheorem{corollary}{Corollary}
\newtheorem{remark}{Remark}
\newtheorem{procedure}{procedure} 
\newcommand{\cont}[1]{\mathcal{C}^{#1}}
\DeclareMathOperator{\F}{\mathcal{F}}
\newcommand{\Revision}[1]{{#1}}
\DeclareRobustCommand{\divby}{%
  \mathrel{\vbox{\baselineskip.65ex\lineskiplimit0pt\hbox{.}\hbox{.}\hbox{.}}}%
}
\begin{document}
  \title[Fillet-based RRT*]{Fillet-based RRT*: A Rapid Convergence Implementation of RRT* for Curvature Constrained Vehicles}

  \author*[1]{\fnm{James} \sur{Swedeen}}\email{james.swedeen@usu.edu}

  \author[1]{\fnm{Greg} \sur{Droge}}\email{greg.droge@usu.edu}

  \author[1]{\fnm{Randall} \sur{Christensen}}\email{rchristensen@blueorigin.com}

  \affil*[1]{\orgdiv{Department of Electrical and Computer Engineering},
             \orgname{Utah State University},
             \orgaddress{\street{Old Main Hill},
                         \city{Logan},
                         \postcode{84322},
                         \state{UT},
                         \country{USA.}}}

  \abstract{
    Rapidly exploring random trees (RRTs) have proven effective in quickly finding feasible solutions to complex motion planning problems.
    RRT* is an extension of the RRT algorithm that provides probabilistic asymptotic optimality guarantees when using straight-line motion primitives.
    This work provides extensions to RRT and RRT* that employ fillets as motion primitives, allowing path curvature constraints to be considered when planning.
    Two fillets are developed, an arc-based fillet that uses circular arcs to generate paths that respect maximum curvature constraints and a spline-based fillet that uses B\'{e}zier curves to additionally respect curvature continuity requirements.
    Planning with these fillets is shown to far exceed the performance of RRT* using Dubin's path motion primitives, approaching the performance of planning with straight-line path primitives.
    Path sampling heuristics are also introduced to accelerate convergence for nonholonomic motion planning.
    Comparisons to established RRT* approaches are made using the Open Motion Planning Library (OMPL).
  }


  \keywords{Motion planning, sample-based algorithms, rapidly-exploring random trees, RRT*}

  Paper Categories: (1), (2), (3)

  \maketitle

  \section{Introduction}
  The ability to plan paths through complex obstacles is a fundamental requirement of many mobile robot applications and has been shown to be an NP-complete problem in general \citep{Lavalle2006}.
A variety of methods exist to decompose this NP-complete problem into manageable subproblems.
One class of methods that has seen explosive growth in recent years is sample-based motion planning techniques \citep{Gammell2014, Hussain2015, Karaman2011, Moon2015, Nasir2013, Noreen2016, Yang2014}.
Of particular note is the Rapidly-exploring Random Tree (RRT) and its optimal variant, RRT* \citep{Karaman2011}.
RRT quickly plans obstacle free paths for systems with arbitrary motion primitives.
RRT* provides probabilistic guarantees for asymptotically converging to the optimal path, although it is not well-suited for nonholonomic motion constraints.
This work contributes to the RRT* literature by developing techniques that can naturally consider the curvature constraints of a mobile robot with convergence times similar to that of straight-line motion primitives.

In its most simple form, RRT iteratively builds a search tree to randomly explore the environment.
A sample point is randomly selected at each iteration and primitive motions are used to extend the tree in the direction of the sampled point.
The use of general primitive motions enables the application of RRT to a wide range of problems with guarantees of probabilistic completeness \citep{Lavalle2006}.
However, the path that RRT finds is typically far from optimal.
\cite{Karaman2011} developed RRT*, which makes two modifications to the RRT algorithm that probabilistically result in asymptotic optimality.
Both modifications perform local optimizations to the tree using a neighborhood of nodes around each new point being added to the tree. These local optimizations work to add and remove edges between existing nodes, requiring the motion primitives to be able to find a continuous path to connect the corresponding states represented by the nodes. This exact connection requirement is not required by the original RRT algorithm, so the RRT* changes are not generally applicable to all applications of RRT.
Moreover, the same random sampling that ensures that RRT finds a solution causes the asymptotic convergence to the optimal solution by RRT* to be quite slow \citep{Akgun2011, Gammell2014, Kobilarov2012, Nasir2013, Noreen2016}.

Exactly connecting two states can become difficult when considering the motion constraints of wheeled vehicles, such as path curvature.
One common method for considering curvature constraints is to plan with straight-lines and arcs using techniques such as Dubin's and Reed-Shepp paths \citep{Beard2012, Lavalle2006}.
While these techniques provide the shortest paths between oriented waypoints, they are not well-suited for the local optimization procedures in RRT* due to the inclusion of orientation \citep{cui2018}.
The path exactly connecting two oriented points can vary significantly with small changes in orientation.
An alternative motion primitive in the form of a fillet was used in \citep{Yang2014, spline_rrt*}. Instead of connecting two points, a fillet connects two straight-line segments with a curve that starts on the first segment and ends on the second segment. Small changes in each line will produce small changes in the path length, making the fillet approach amenable to the local optimizations required by RRT*.
In the case of \citep{Yang2014,spline_rrt*}, B\'ezier curves were used to connect the line segments, with the added benefit that continuous change in curvature is guaranteed.

While fillets enable the use of local optimization techniques, convergence to the optimal solution is naturally rather slow in RRT*.
To overcome slow convergence rates, many alternative sampling and path refinement procedures have been introduced \citep{Akgun2011, Gammell2014, Kobilarov2012, Nasir2013, Tahir2018}.
This work utilizes two such approaches.
In \citep{Gammell2014}, Informed RRT* \mbox{(I-RRT*)} attempts to reduce the sampling space by providing a conservative estimate of the area that will contain the optimal solution.
In \citep{Nasir2013}, Smart RRT* \mbox{(S-RRT*)} provides an alternative sampling heuristic as well as a path refinement procedure to avoid waiting for the probabilistic sampling to straighten the path.
We then combine ideas from \citep{Gammell2014} and \citep{Nasir2013} to develop the novel Smart and Informed RRT* \mbox{(SI-RRT*)} which provides greedy refinement of the solution without as many parameters to tune as \mbox{S-RRT*}.

This paper develops a Fillet-based RRT* \mbox{(FB-RRT*)} algorithm for curvature constrained path planning.
Similar to \citep{Yang2014,spline_rrt*}, a fillet approach is used to locally connect points.
Contributions to \citep{Yang2014, spline_rrt*} include the generalization of the fillet structure for RRT planning, a relaxation of connection assumptions that increases flexibility in growing the tree, and a newly developed rewiring procedure to ensure continuity and cost improvement in the resulting path.
Established sampling and path refinement procedures are also extended to the fillet structure.
A minor contribution of this work is the combination of two sampling heuristics \citep{Gammell2014, Nasir2013} within the fillet framework.


The remainder of the paper proceeds as follows.
In Section \ref{section:backround}, the basics of RRT and RRT* are introduced.
Section \ref{section:fillet} then introduces the fillet approach to local planning.
Section \ref{sec:fillet-rrt-star} develops procedures for incorporating the fillet approach into RRT* \Revision{with a brief description of reverse fillet considerations given in Appendix \ref{sec:reverse_fillet}.}
Section \ref{section:improving_convergence} then develops the smart-and-informed sampling and path refinement procedures and presents the Fillet-based RRT* algorithm.
Results are presented in Section \ref{sec:results} using the Open Motion Planning Library (OMPL) \citep{ompl} to benchmark the performance of RRT* using a straight-line motion primitive, an arc-based fillet, a B\'{e}zier curve fillet, Dubin's paths, and various sampling techniques.
Concluding remarks are given in Section \ref{sec:conclusion}.

  \section{The Rapidly-exploring Random Tree}
  \label{section:backround}
%

RRT-based algorithms are commonly broken into a series of generalized space sampling and tree growing procedures.
RRT's variants (including RRT*) refine and augment these procedures.
This section defines several basic procedures, giving them context within RRT and RRT*.
The procedures in this section are found in \citep{Karaman2011, bezier_curves} with variations in notation.
They are included for the sake of completeness in presenting the Fillet-Based RRT* (FB-RRT*) formulation and sampling heuristics in Sections \ref{section:fillet} through \ref{section:improving_convergence}.

\subsection{Notation}
RRT-based algorithms iteratively construct a rooted, out-branching tree to find a path through the state space.
The tree is an acyclic directed graph denoted as $T=\{V,E\}$, where $V$ is the set of nodes or vertices within the tree and $E \subset V \times V$ denotes the set of edges between vertices.
If an edge points from $v_1$ to $v_2$, $v_2$ is referred to as the child of $v_1$ and $v_1$ as the parent of $v_2$.
The root has no parent while all other vertices have exactly one parent.
Each vertex can have multiple children.
Each vertex within $V$ corresponds to a state in the $d$-dimensional state space denoted as $X \subset \mathbb{R}^d$.
Note that we will assume that $X \subset \mathbb{R}^2$, although that is certainly not the case for general RRT formulations.
The tree is initialized with solely the root node, i.e. $V = \{x_{r}\}$, $E = \varnothing$.
Nodes are added to the tree to find a path to the target set, $X_{t} \subset X$. The path must avoid the space blocked by obstacles, $X_{obs} \subset X$, staying within the free space, $X_{free} = X \setminus X_{obs}$.

Paths through the state space are written as an ordered subset of $X$.
In RRT*, the paths are assigned a cost, typically the path length.
Allowing $P(X)$ to denote the power set of $X$, the cost is a mapping $c:P(X) \rightarrow \mathbb{R}_+$.
The closed ball of radius $r \in \mathbb{R}_+$ centered at $x \in \mathbb{R}^d$ is denoted as $\mathcal{B}_{x,r} = \left\{y \in X : \left\|y-x\right\| \leq r \right\}$.
$X_{s} \subset X_{free}$ is the set of states from which the additional nodes will be sampled.
Additional notation is summarized in Table \ref{tab:notation}.

\begin{table*}[htp]
\caption{Notation used throughout the paper}
\label{tab:notation}
  \centering
  {\footnotesize
  \setlength{\tabcolsep}{1pt}
  \newcolumntype{L}[1]{>{\raggedright\let\newline\\\arraybackslash\hspace{0pt}}p{#1}}
  \newlength{\fullwidth}
  \newlength{\namewidthone}
  \newlength{\namewidthtwo}
  \setlength{\fullwidth}{\textwidth - 8\tabcolsep}
  \setlength{\namewidthone}{\widthof{$c\!\left(X_{path}\right)$}}
  \setlength{\namewidthtwo}{\widthof{$x_{nearest}$}}
  \begin{tabularx}{\textwidth}{L{\namewidthone} L{(0.5 \fullwidth) - \namewidthone} L{\namewidthtwo} L{(0.5 \fullwidth) - \namewidthtwo}}
    \toprule
    \textbf{Name} & \textbf{Description} & \textbf{Name} & \textbf{Description} \\
    \toprule
    \multicolumn{4}{l}{General sets} \\
    $\mathbb{Z}$   & The set of all real-valued integers                &
    $\mathbb{R}_+$ & The set of all positive real numbers        \\
    $\mathbb{R}$   & The set of all real-valued numbers                 &
    $\mathbb{R}^d$ & The set of all real $d$-dimensional vectors \\
    \hline
    \multicolumn{4}{l}{Configuration space} \\
    $X$        & The full configuration space, $X \subset \mathbb{R}^2$ &
    $X_{obs}$  & Obstacle filled configuration space                                                                                 \\
    $X_{free}$ & Obstacle free configuration space                                                                                   &
    $X_{t}$    & The target set                                                                                                      \\
    $X_{s}$    & The sampling set                                                                                                    &
    $X_{b}$    & The beacon set                                                                                                      \\
    $X_{path}$ & An ordered set of states                                                                                            &
    $X_{near}$ & A set of nodes near a given node                                                                                    \\
    $X_i$      & The set that, if sampled, will improve the path                                                                     &
    $X_{sol}$  & The solution path                                                                        \\
    $x$        & An element of $X$, a 2D position                                                                                    &
    $\psi$     & Used to denote orientation                                                                                          \\
    \hline
    \multicolumn{4}{l}{Special sets} \\
    $\mathcal{E}_{x_a,x_b}$ & A subset of $X$ in an ellipse with focal points $x_a$ and $x_b$            &
    $\cont{y}$              & The set of all $y \in \mathbb{Z}_+$ times continuously differentiable paths \\
    $\mathcal{B}_{x,r}$     & The closed ball of radius $r$ and centered at $x$                           \\
    \hline
    \multicolumn{4}{l}{Operators} \\
    $R(\theta)$                & The right-handed rotation matrix parameterized by angle $\theta$                                        &
    $a \divby b$               & Tests if $a$ is divisible by $b$ (i.e., true if $a \mbox{ } mod \mbox{ } b == 0$) \\
    $c\!\left(X_{path}\right)$ & Cost of a path through the state space                                                                  &
    $\left\|x\right\|$         & The 2-norm of $x$                                                                                       \\
    $\land$, $\lor$            & Logical \enquote{and} and \enquote{or} operators                                                        &
    $\binom{n}{k}$             & N choose k, i.e. $\binom{n}{k} = \frac{n!}{k!(n-k)!}$                                                   \\
    \hline
    \multicolumn{4}{l}{Tree notation} \\
    $V$   & A set of vertices                             &
    $T$   & An acyclic directed graph, $T = \left\{V,E\right\}$ \\
    $E$   & A set of edges that connect vertices          &
    $x_r$ & The root node                                 \\
    \hline
    \multicolumn{4}{l}{Planning Parameters} \\
    $\alpha$ & The max number of neighbors to consider      &
    $\eta$   & The max distance to travel when steering a point \\
    $\rho$   & The radius for searching for nearest neighbors               &
    $b_t$    & Determines how frequently $X_t$ will be sampled              \\
    $\gamma$ & The angular displacement between two vectors                 &
    $b_b$    & Determines how frequently $X_b$ will be sampled              \\
    \hline
    \multicolumn{4}{l}{Tree Search Variables} \\
    $x_p$         & A parent node                                           &
    $x_c$         & A child node                                            \\
    $x_{gp}$      & A parent of a parent (i.e., grandparent node)           &
    $x_{gc}$      & A child node of a child (i.e., grandchild node)         \\
    $x_n$         & A new node to be added to the tree                      &
    $x_{rand}$    & A randomly sampled state                                \\
    $c_y$         & The cost of the node $x_y$                              &
    $x_{nearest}$ & The point that is closest to $x_n$ in $V$               \\
    $x_{best}$    & The last node in the best $X_{sol}$ set found so far    &
    $x_{near}$    & A point that is close to $x_n$, $x_{near} \in X_{near}$ \\
    \hline
    \multicolumn{4}{l}{Fillets} \\
    $\kappa_{max}$             & The maximum path curvature allowed                                                                                                &
    $s$                        & A spatial indexing                                                                                                                \\
    $x_1$, $x_2$, $x_3$        & The point where a fillet begins, the center point, and the ending point                                  &
    $s_0$, $s_1$, $s_2$, $s_3$ & The index of the fillet at $x_1$, $x_s$, $x_e$, and $x_3$ respectively                                                              \\
    $x_s$, $x_e$               & Points where fillet curve starts and ends                                                                                         &
    $\F_i$                     & Abbreviation for the full fillet length                                                                                           \\
    $\Psi(s)$                  & The spatially indexed curve of the fillet                                                                                         &
    $\Psi_i$                   & The path length of the curve centered at $x_i$                                                                                    \\
    $b_i$                      & Distance from $x_{i-1}$ to the beginning of the fillet curve centered at $x_i$                                                    &
    $e_i$                      & Distance from $x_{i+1}$ to the end of the fillet curve centered at $x_i$                                                          \\
    $d(\gamma)$                & The distance from $x_s$ or $x_e$ to $x_2$ given the angular displacement between fillet lines                                     &
    $m_{i,j}$                  & Distance on the line $\overline{x_i x_j}$ that isn't replaced by a fillet curve, i.e. $\left\|x_i - x_j\right\| - d(\gamma_i) - d(\gamma_j)$ \\
    \hline
    \multicolumn{4}{l}{Arcs} \\
    $r$      & The radius of the circle made from executing $\kappa_{max}$ &
    $\zeta$  & Distinguishes clockwise and counterclockwise arcs           \\
    $\theta$ & An angular variable ranging from $[0,\gamma)$            \\
    \hline
    \multicolumn{4}{l}{B\'ezier fillets} \\
    $B_{n,i}(\tau)$ & A Berstein polynomial of degree $n$ and iteration $i$ &
    $\tau$          & A path parameterization index ranging from $0$ to $1$ \\
    $P_n(\tau)$     & A B\'ezier curve of degree $n$                        &
    $p_i$           & Control points for the B\'ezier curve                 \\
    $\nu_{1-4}$     & Constant scalars                                      &
    $h$,$g$,$k$ & Weights that are dependent on $d(\gamma)$                 \\
    \hline
    \multicolumn{4}{l}{Vectors and lines} \\
    $u_{ab}$                  & The unit vector formed from $(x_b-x_a)/\left\|x_b-x_a\right\|$ &
    $\overline{x_ax_b}$       & The line that intersects points $x_a$ and $x_b$     \\
    $\overrightarrow{x_ax_b}$ & The vector that goes from $x_a$ to $x_b$            &
    $x_{a,i}$                 & Vector $a$'s $i$th element                          \\
    \bottomrule
  \end{tabularx}
  }
\end{table*}

\subsection{Common Sampling-based Planning Procedures}
The literature on sample-based planning defines planning algorithms using a number of procedures.
We now define several generic procedures that can be found in \citep{Karaman2011} with notation changed to match the sequel.

\begin{procedure}[$T \leftarrow Initialize(x_{r})$]
  Returns an initialized tree with $x_{r} \in X$ as the root node and no edges, i.e. $T \leftarrow \{V,E\}$, $V \leftarrow \{x_{r}\}$, and $E \leftarrow \varnothing$.
\end{procedure}

\begin{procedure}[$x_{rand} \leftarrow Sample(X_{s})$]
  \label{procedure:sample}
  Returns a random state from the set $X_{s} \subset X$.
\end{procedure}

\begin{procedure}[$x_{nearest} \leftarrow Nearest(x_{rand},T)$]
  Finds the nearest vertex in the set $V$ to the state $x_{rand} \in X$ using the 2-norm as a distance.
\end{procedure}

\begin{procedure}[$X_{near} \leftarrow Near_{\rho,\alpha}(x_{rand},T)$]
  Finds the nearest $\alpha \in \mathbb{Z}_+$ vertices that are within a given radius\footnote{In this work $\rho$ is held constant, but many RRT* based algorithms vary $\rho$ \citep{Karaman2011}.}, $\rho \in \mathbb{R}_+$, of the point $x_{rand} \in X$. 
\end{procedure}
The constant $\alpha$ is used to prevent too many connections from being attempted in a given iteration \citep{Lavalle2006}.

\begin{procedure}[$x_{n} \leftarrow Steer_\eta(x,y)$]
  \label{procedure:steer}
  Returns a point that is within a predefined distance $\eta \in \mathbb{R}_+$ from $x \in X$ in the direction of $y \in X$, i.e.
  \begin{equation*}
    Steer_\eta(x,y) = argmin_{\{z \in X : \|z - x \| \leq \eta\}} \|z - y\|
  \end{equation*}
\end{procedure}
The Steer function prevents long edges from being added to RRT search trees.
This is important because it reduces the expected extension length at each iteration and likewise reduces the likelihood that a given iteration will fail to expand the search tree due to its edge being blocked by an obstacle \citep{lan2015}.

\begin{procedure}[$T \leftarrow InsertNode(x_{n},x_{p},T)$]
  Adds the node $x_{n} \in X_{free}$ to the tree with $x_{p} \in V$ as the new node's parent, i.e. $V \leftarrow V \cup \{x_{n} \}; E \leftarrow E \cup \{(x_{p},x_{n}) \}$.
\end{procedure}

\begin{procedure}[$X_{sol} \leftarrow Solution(x_v,T)$]
  Finds the path through $T$, $X_{sol} \subset V$, that leads from the root node to $x_v$.
\end{procedure}

\begin{procedure}[$X_{path} \leftarrow Path(x_{start},x_{end})$]
  Builds an ordered set of states that connect the state $x_{start} \in X$ to $x_{end} \in X$ without considering obstacles.
\end{procedure}
Note that $Solution$ is used to search the tree while $Path$ is used to search $X$ in an attempt to grow the tree.

\begin{procedure}[$bool \leftarrow CollisionFree(X_{path})$]
  Returns true if and only if $X_{path}$ is obstacle free, i.e. $X_{path} \subset X_{free}$.
\end{procedure}

\begin{procedure}[$x_{p} \leftarrow Parent(x_{c},T)$]
  Returns the parent node of $x_{c} \in V$ in the tree $T$, or $\varnothing$ if $x_{c}$ is the root node.
\end{procedure}

\begin{procedure}[$X_{children} \leftarrow Children(x_{p},T)$]
  Returns every node from the set $V$ in $T$ that has $x_{p}$ as its parent.
\end{procedure}

\begin{procedure}[$c_v \leftarrow Cost(x_{v},T)$]
  Returns the cost of $x_{v} \in V$.
  The cost of a vertex is defined as the path length traveled from $x_{r}$ to $x_{v}$ along the tree, i.e.
  \begin{equation*}
    Cost(x_{v},T) = c(Solution(x_{v},T))
  \end{equation*}
\end{procedure}

\begin{procedure}[$c_n \leftarrow CostToCome(x_{n},x_{p},T)$]
  Calculates the cost of $x_{n} \in X$ if it were connected to the tree through $x_{p} \in V$, returning an infinite cost if the path is not obstacle free.
  It is defined in Algorithm \ref{alg:cost_to_come}.
\end{procedure}
\begin{algorithm}[t]
  \caption{$c_n \leftarrow CostToCome(x_{n},x_{p},T)$}
  \label{alg:cost_to_come}
  \begin{algorithmic}[1]
    \State $X_{path} \leftarrow Path(x_{p},x_{n})$ \label{alg:cost_to_come:path}
    \If{$CollisionFree(X_{path})$} \label{alg:cost_to_come:collision_check}
      \State \Return $Cost(x_{p},T) + c(X_{path})$ \label{alg:cost_to_come:cost} \Comment{Path length calculation}
    \Else
      \State \Return $\infty$
    \EndIf
  \end{algorithmic}
\end{algorithm}

\subsection{RRT}
\label{section:rrt}
RRT quickly searches $X_{free}$ to find a feasible (not optimal), obstacle free solution and can be used with complex motion primitives while maintaining probabilistic completeness \citep{nrr}.
The RRT algorithm is composed of two main steps that are repeatedly performed until a solution is found.
The first is taking a biased sample from $X$.
The second is growing the search tree toward the random sample using the $Extend$ procedure.
These two procedures are now stated.

\begin{procedure}[$x_{rand} \leftarrow Biased\text{-}Sample(i, b_t, X_t)$]
  Returns a random point at iteration $i \in \mathbb{Z}_+$ given the sampling bias, $b_t \in \mathbb{Z}_+$, and the target set, $X_t$, as described in Algorithm \ref{alg:biased_sampling}.
\end{procedure}
\begin{algorithm}[tb]
  \caption{$x_{rand} \leftarrow Biased\text{-}Sample(i, b_t, X_t)$}
  \label{alg:biased_sampling}
  \begin{algorithmic}[1]
    \If{$i \divby b_{t}$}\Comment{Bias towards target set}
      \State $x_{rand} \leftarrow Sample(X_{t})$
    \Else \Comment{Otherwise sample a random point}
      \State $x_{rand} \leftarrow Sample(X)$
    \EndIf
    \State \Return $x_{rand}$
  \end{algorithmic}
\end{algorithm}
The sample is biased towards the target set by selecting the sample from $X_t$ every $b_t$ iterations.
$b_t$ is a design parameter affecting exploration and exploitation.
A small $b_t$ will attempt to connect the tree to the target set more frequently.

\begin{procedure}[$\{x_{n},x_{p},c_n\} \leftarrow Extend(x_{rand},T)$]
  Given a sample, $x_{rand} \in X$, and tree, $T = \{V,E\}$, the $Extend$ procedure finds the closest vertex to $x_{rand}$ that is already in $V$ and checks if a valid extension can be made from the tree towards $x_{rand}$, see Algorithm \ref{alg:extend}.
\end{procedure}
The $Extend$ procedure is illustrated in Figure \ref{fig:extend}.
Note that RRT does not make use of the extension cost $c_n$; it is included for use in RRT*.
\begin{algorithm}[t]
  \caption{$\{x_{n},x_{p},c_n\} \leftarrow Extend(x_{rand},T)$}
  \label{alg:extend}
  \begin{algorithmic}[1]
    \State $x_{nearest} \leftarrow Nearest(x_{rand},T)$ \label{alg:extend:nearest}
    \State $x_{n} \leftarrow Steer_\eta(x_{nearest},x_{rand})$ \label{alg:extend:steer} \Comment{Steer towards the nearest point}
    \State $c_{n} \leftarrow CostToCome(x_{n},x_{nearest},T)$ \Comment{Evaluate cost of resulting path}
    \If{$\infty \neq c_{n}$} \label{alg:extend:collision_check}
      \State \Return $\{x_{n},x_{nearest},c_{n}\}$ \label{alg:extend:return}
    \Else
      \State \Return $\{\varnothing,\varnothing,\infty\}$
    \EndIf
  \end{algorithmic}
\end{algorithm}
\tikzstyle{node}    =[circle,draw=black!100,thick]
\tikzstyle{new-node}=[circle,draw=red!100,  thick]
\begin{figure}[t]
  \centering
  \newcommand{\W}{0.8\linewidth}
  \begin{subfigure}[t]{0.47\linewidth}
    \centering
    \resizebox{\W}{!}{
    \begin{tikzpicture}
      \centering

      \node[node](A) at (0,0) {A};

      \node[node](C) at ($(A)!0.26!(-11,2.5)$) {C};
      \draw[->,thick](A) -- (C) node[midway,below] {$3$};

      \node[node](B) at ($(A)!0.35!(-5,-6)$) {B};
      \draw[->,thick](A) -- (B) node[midway,left,yshift=0.1cm] {$3$};

      \node[node](D) at ($(B)!0.38!(-9,-1.5)$) {D};
      \draw[->,thick](B) -- (D) node[midway,above,yshift=0cm] {$3$};

      \node[white](Rand) at ($(D)!0.33!(-9.5,6)$) {};
    \end{tikzpicture}
    }
    \caption{The tree before $x_{rand}$ is sampled and Algorithm \ref{alg:extend} starts.}
    \label{fig:extend:sub1}
  \end{subfigure}
  \begin{subfigure}[t]{0.47\linewidth}
    \centering
    \resizebox{\W}{!}{
    \begin{tikzpicture}
      \centering

      \node[node](A) at (0,0) {A};

      \node[node](C) at ($(A)!0.26!(-11,2.5)$) {C};
      \draw[->,thick](A) -- (C) node[midway,below] {$3$};

      \node[node](B) at ($(A)!0.35!(-5,-6)$) {B};
      \draw[->,thick](A) -- (B) node[midway,left,yshift=0.1cm] {$3$};

      \node[node](D) at ($(B)!0.38!(-9,-1.5)$) {D};
      \draw[->,thick](B) -- (D) node[midway,above,yshift=0cm] {$3$};

      \node[black,label=right:$x_{rand}$](Rand) at ($(D)!0.33!(-9.5,6)$) {};
      \filldraw[black,label=above:$x_{rand}$] (Rand) circle(1.5pt);
      \draw[dashed,thick](D) -- (Rand);
    \end{tikzpicture}
    }
    \caption{$x_{rand}$ is sampled and node D is found to be the closest node to $x_{rand}$.}
    \label{fig:extend:sub2}
  \end{subfigure}
  \begin{subfigure}[t]{0.47\linewidth}
    \centering
    \resizebox{\W}{!}{
    \begin{tikzpicture}
      \centering

      \node[node](A) at (0,0) {A};

      \node[node](C) at ($(A)!0.26!(-11,2.5)$) {C};
      \draw[->,thick](A) -- (C) node[midway,below] {$3$};

       \node[node](B) at ($(A)!0.35!(-5,-6)$) {B};
       \draw[->,thick](A) -- (B) node[midway,left,yshift=0.15cm] {$3$};

       \node[node,label=left:$x_{nearest}$](D) at ($(B)!0.38!(-9,-1.5)$) {D};
       \draw[->,thick](B) -- (D) node[midway,above,yshift=0cm] {$3$};

       \node[black,label=right:$x_{rand}$](Rand) at ($(D)!0.33!(-9.5,6)$) {};
       \filldraw[black,label=above:$x_{rand}$] (Rand) circle(1.5pt);

      \node[new-node,label=right:$x_{n}$](E) at ($(D)!0.84!(Rand)$) {E};
      \draw[dashed,thick](E) -- (Rand);
      \draw[dashed,thick](D) -- (E);
      \draw[decorate,decoration={brace,amplitude=10pt}]
        (E) -- (D) node [black,midway,xshift=15pt,yshift=7pt]
        {$\eta$};
    \end{tikzpicture}
    }
    \caption{$x_{rand}$ is steered toward node D resulting in the new node, node E.}
    \label{fig:extend:sub3}
  \end{subfigure}
  \begin{subfigure}[t]{0.47\linewidth}
    \centering
    \resizebox{\W}{!}{
    \begin{tikzpicture}
      \centering

      \node[node](A) at (0,0) {A};

       \node[node](C) at ($(A)!0.26!(-11,2.5)$) {C};
       \draw[->,thick](A) -- (C) node[midway,below] {$3$};

       \node[node](B) at ($(A)!0.35!(-5,-6)$) {B};
       \draw[->,thick](A) -- (B) node[midway,left,yshift=0.1cm] {$3$};

       \node[node](D) at ($(B)!0.38!(-9,-1.5)$) {D};
       \draw[->,thick](B) -- (D) node[midway,above,yshift=0cm] {$3$};

       \node[](Rand) at ($(D)!0.33!(-9.5,6)$) {};

       \node[node](E) at ($(D)!0.84!(Rand)$) {E};
       \draw[->,thick](D) -- (E) node[midway,right,yshift=0.05cm] {$3$};
     \end{tikzpicture}
     }
     \caption{After checking for obstacles, node E is added to the tree.}
    \label{fig:extend:sub4}
  \end{subfigure}
  \caption{An illustration of the $Extend$ procedure.}
  \label{fig:extend}
\end{figure}

\begin{algorithm}[t]
  \caption{$X_{sol} \leftarrow RRT(x_{r},X_{t},b_{t})$}
  \label{alg:rrt}
  \begin{algorithmic}[1]
    \State $T \leftarrow Initialize(x_{r})$
    \For{$i = 1,\cdots,\inf$}
      \State $x_{rand} \leftarrow Biased\text{-}Sample(i, b_t, X_t)$ \Comment{Biased configuration sampling}
      \State $\{x_{n},x_{p},c_n\} \leftarrow Extend(x_{rand},T)$ \label{alg:rrt:extend} \Comment{Extend tree towards new point}
      \If{$x_{n} \neq \varnothing$}
        \State $T \leftarrow InsertNode(x_{n},x_{p},T)$ \label{alg:rrt:insert} \Comment{Add point to tree}
        \If{$x_{n} \in X_{t}$} \label{alg:rrt:if_finished}
          \State \Return $Solution(x_{n},T)$ \label{alg:rrt:return} \Comment{Return solution if found}
        \EndIf
      \EndIf
    \EndFor
  \end{algorithmic}
\end{algorithm}

The RRT algorithm can now be described.
First, a random point is sampled from the configuration space.
If the tree can be extended, the new point is added to the tree.
If the new point is in the target set then RRT returns a solution, as shown in Algorithm \ref{alg:rrt}.
RRT is known to quickly find solutions for complex problems as it naturally explores unexplored areas of the state space, a property called the Voronoi property \citep{rrt_connect}.
When using the vertices of the tree to create a Voronoi diagram, unexplored regions correspond to larger Voronoi cells.
The probability that a Voronoi cell is sampled is proportional to the size of that cell.
Thus, the RRT tree naturally extends towards regions that have not yet been explored, avoiding problems with local minima and nonconvex obstacles.

\subsection{RRT*}
\label{section:rrt*}
RRT* is an extension of RRT that adds probabilistic guarantees for asymptotic optimality to the probabilistic completeness guarantees of RRT \citep{Karaman2011}.
RRT* does so by performing local optimizations on the edges in the tree whenever a new node is added.
As the number of iterations goes to infinity, the repeated local optimization transforms the tree into a set of globally optimal paths from the root node to every reachable point in the obstacle free configuration space.

RRT* includes two significant changes to RRT, both of which concern the neighborhood set of the node being added to the tree, i.e. $X_{near} = Near_{\rho,\alpha}(x_{n},T)$.
The first modification is replacing the $Extend$ procedure with $Extend^*$.
As illustrated in Figure \ref{fig:optimal_extend}, the $Extend^*$ procedure selects a parent from $V$ within a specified distance of the new point that minimizes the cost of the new node.

The second modification happens after $x_{n}$ is added to the tree in a new procedure called $Rewire$.
Each node in a local neighborhood is tested to see if its cost would be improved by going through the new node instead of its current parent node.
If so, the edges are changed so that $x_n$ becomes the node's new parent as illustrated in Figure \ref{fig:rewire}.
\begin{procedure}[$\{x_{n},x_{p}\} \leftarrow Extend^*(x_{rand},T)$]
  Given a tree, $T = \{V,E\}$, the $Extend^*$ procedure finds the best ``local'' connection for extending the tree in the direction of $x_{rand} \in X$.
  It returns a new point to be added to the tree, $x_n$, and the parent, $x_p \in V$, as defined in Algorithm \ref{alg:optimal_extend}.
\end{procedure}
\begin{algorithm}[t]
  \caption{$\{x_{n},x_{p}\} \leftarrow Extend^*\!(x_{rand},T)$ \label{def:optimal_extend}}
  \label{alg:optimal_extend}
  \begin{algorithmic}[1]
    \State $\{x_{n},x_{p},c_{min}\} \leftarrow Extend(x_{rand},T)$ \label{alg:optimal_extend:extend}
    \If{$x_{n} \neq \varnothing$} \label{alg:optimal_extend:collistion_free} \Comment{If an extension is possible}
      \State $X_{near} \leftarrow Near_{\rho,\alpha}(x_{n},T)$ \label{alg:optimal_extend:near}
      \ForAll{$x_{near} \in X_{near}$} \label{alg:optimal_extend:it_over_near} \Comment{Check for a lower cost connection}
        \State $c_{tmp} \leftarrow CostToCome(x_{n},x_{near},T)$ \label{alg:optimal_extend:cost_to_come}
        \If{$c_{tmp} < c_{min}$} \label{alg:optimal_extend:if_better}
          \State $x_{p} \leftarrow x_{near}$ \label{alg:optimal_extend:update_best_vertex}
          \State $c_{min} \leftarrow c_{tmp}$ \label{alg:optimal_extend:update_best_cost}
        \EndIf
      \EndFor
      \State \Return $\{x_{n},x_{p}\}$ \label{alg:optimal_extend:return}
    \EndIf
    \State \Return $\{\varnothing,\varnothing\}$;
  \end{algorithmic}
\end{algorithm}
\tikzstyle{node}     =[circle,draw=black!100,thick]
\tikzstyle{new-node} =[circle,draw=red!100,  thick]
\tikzstyle{near-node}=[circle,draw=green!100,thick]
\begin{figure}[t]
  \centering
  \newcommand{\W}{0.8\linewidth}
  \begin{subfigure}[t]{0.47\linewidth}
    \centering
    \resizebox{\W}{!}{
    \begin{tikzpicture}
      \centering

       \node[node](A) at (0,0) {A};

       \node[node](C) at ($(A)!0.26!(-11,2.5)$) {C};
       \draw[->,thick](A) -- (C) node[midway,below] {$3$};

       \node[node](B) at ($(A)!0.35!(-5,-6)$) {B};
       \draw[->,thick](A) -- (B) node[midway,left,yshift=0.1cm] {$3$};

       \node[node](D) at ($(B)!0.38!(-9,-1.5)$) {D};
       \draw[->,thick](B) -- (D) node[midway,above,yshift=0cm] {$3$};

       \node[](Rand) at ($(D)!0.33!(-9.5,6)$) {};

       \node[node](E) at ($(D)!0.84!(Rand)$) {E};
       \draw[->,thick](D) -- (E) node[midway,right,yshift=0.05cm] {$3$};
     \end{tikzpicture}
     }
     \caption{Tree after $Extend$ finishes.}
     \label{fig:optimal_extend:sub1}
  \end{subfigure}
  \begin{subfigure}[t]{0.47\linewidth}
    \centering
    \resizebox{\W}{!}{
    \begin{tikzpicture}
      \centering

      \node[node](A) at (0,0) {A};

      \node[near-node](C) at ($(A)!0.26!(-11,2.5)$) {C};
      \draw[->,thick](A) -- (C) node[midway,below] {$3$};

      \node[node](B) at ($(A)!0.35!(-5,-6)$) {B};
      \draw[->,thick](A) -- (B) node[midway,left,yshift=0.1cm] {$3$};

      \node[node,label=left:$x_{nearest}$](D) at ($(B)!0.38!(-9,-1.5)$) {D};
      \draw[->,thick](B) -- (D) node[midway,above,yshift=0cm] {$3$};

      \node[](Rand) at ($(D)!0.33!(-9.5,6)$) {};
      \node[new-node,label=above:$x_{n}$](E) at ($(D)!0.84!(Rand)$) {E};
      \draw[dashed,thick](D) -- (E) node[midway,left,yshift=-0.1cm] {$3$};

      \draw[color=blue] ([xshift=3.5cm]E) arc (0:16:3.5);
      \draw[color=blue] ([xshift=3.5cm]E) arc (0:-55:3.5);
      \node[](temp) at ([xshift=3.25cm,yshift=-1.65cm]E) {};
      \draw[decorate,decoration={brace,amplitude=10pt}]
        (E) -- (temp) node [black,midway,xshift=6pt,yshift=15pt]
        {$\rho$};
    \end{tikzpicture}
    }
    \caption{$x_{n}$'s neighborhood set is found to be node C.}
    \label{fig:optimal_extend:sub2}
  \end{subfigure}
  \begin{subfigure}[t]{0.47\linewidth}
    \centering
    \resizebox{\W}{!}{
    \begin{tikzpicture}
      \centering

      \node[node](A) at (0,0) {A};

      \node[near-node,label=below:$X_{near}$](C) at ($(A)!0.26!(-11,2.5)$) {C};
      \draw[->,thick](A) -- (C) node[midway,below] {$3$};

      \node[node](B) at ($(A)!0.35!(-5,-6)$) {B};
      \draw[->,thick](A) -- (B) node[midway,left,yshift=0.1cm] {$3$};

      \node[node,label=left:$x_{nearest}$](D) at ($(B)!0.38!(-9,-1.5)$) {D};
      \draw[->,thick](B) -- (D) node[midway,above,yshift=0cm] {$3$};

      \node[](Rand) at ($(D)!0.33!(-9.5,6)$) {};
      \node[new-node,label=above:$x_{n}$](E) at ($(D)!0.84!(Rand)$) {E};
      \draw[dashed,thick](D) -- (E) node[midway,left,yshift=-0.1cm] {$3$};
      \draw[dashed,thick](C) -- (E) node[midway,above,yshift=0cm] {$3.2$};
    \end{tikzpicture}
    }
    \caption{Connecting through nodes C and D result in total costs of~$6.2$ and $9$ respectively.}
    \label{fig:optimal_extend:sub3}
  \end{subfigure}
  \begin{subfigure}[t]{0.47\linewidth}
    \centering
    \resizebox{\W}{!}{
    \begin{tikzpicture}
      \centering

      \node[node](A) at (0,0) {A};

      \node[node](C) at ($(A)!0.26!(-11,2.5)$) {C};
      \draw[->,thick](A) -- (C) node[midway,below] {$3$};

      \node[node](B) at ($(A)!0.35!(-5,-6)$) {B};
      \draw[->,thick](A) -- (B) node[midway,left,yshift=0.1cm] {$3$};

      \node[node](D) at ($(B)!0.38!(-9,-1.5)$) {D};
      \draw[->,thick](B) -- (D) node[midway,above,yshift=0cm] {$3$};

      \node[](Rand) at ($(D)!0.33!(-9.5,6)$) {};
      \node[node](E) at ($(D)!0.84!(Rand)$) {E};
      \draw[->,thick](C) -- (E) node[midway,above,yshift=0cm] {$3.2$};
    \end{tikzpicture}
    }
    \caption{Because connecting through node C yields a lower total cost, node E is connected to node C.}
    \label{fig:optimal_extend:sub4}
  \end{subfigure}
  \caption{An illustration of the $Extend^*$ procedure.}
  \label{fig:optimal_extend}
\end{figure}
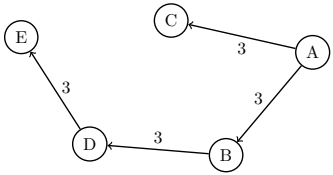
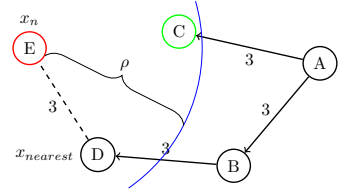
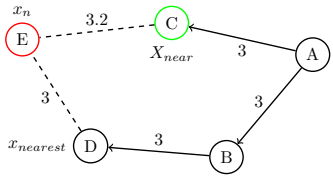
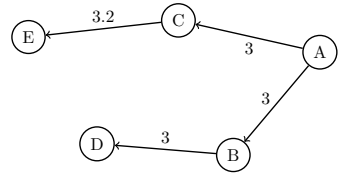

\begin{procedure}[$T \leftarrow Rewire(x_{n},X_{near},T)$]
  Given a tree, $T = \{V,E\}$, with node $x_n \in V$ and set $X_{near} \subset V$, $Rewire$ returns a tree with a modified edge set such that $x_n$ is made the  parent of elements in $X_{near}$ if it results in a lower cost for the elements of $X_{near}$.
  It is defined in Algorithm \ref{alg:rewire}.
\end{procedure}
\begin{algorithm}[t]
  \caption{$T \leftarrow Rewire(x_{n},X_{near},T)$}
  \label{alg:rewire}
  \begin{algorithmic}[1]
    \ForAll{$x_{near} \in X_{near}$} \label{alg:rewire:for}
      \State $c_{near} \leftarrow CostToCome(x_{near},x_{n},T)$
      \LineComment{Check $x_{near}$'s feasibility and that its cost is reduced}
      \If{$c_{near} < Cost(x_{near},T)$} \label{alg:rewire:if_optimal}
        \State $x_{p} \leftarrow Parent(x_{near},T)$ \label{alg:rewire:parent} \Comment{Rewire around $x_{near}$}
        \State $E \leftarrow \left(E \setminus \{x_{p},x_{near}\} \right) \cup \{x_{n},x_{near}\}$ \label{alg:rewire:rewire}
      \EndIf
    \EndFor
    \State \Return $T$
  \end{algorithmic}
\end{algorithm}
\tikzstyle{node}     =[circle,draw=black!100,thick]
\tikzstyle{new-node} =[circle,draw=red!100,  thick]
\tikzstyle{near-node}=[circle,draw=green!100,thick]
\begin{figure}[t]
  \centering
  \newcommand{\W}{0.8\linewidth}
  \begin{subfigure}[t]{0.47\linewidth}
    \centering
    \resizebox{\W}{!}{
    \begin{tikzpicture}
      \centering

      \node[node](A) at (0,0) {A};

      \node[node](B) at ($(A)!0.3!(10,2)$) {B};
      \draw[->,thick](A) -- (B) node[midway,above] {$3$};

      \node[node](C) at ($(A)!0.5!(-5,2.45)$) {C};
      \draw[->,thick](A) -- (C) node[midway,above] {$2.9$};

      \node[node](D) at ($(B)!0.26!(9,8)$) {D};
      \draw[->,thick](B) -- (D) node[midway,right,yshift=-0.1cm] {$2.8$};

      \node[node](E) at ($(C)!0.18!(5,14)$) {E};
      \draw[->,thick](C) -- (E) node[midway,above,xshift=-0.1cm] {$3$};

      \node[node](F) at ($(B)!0.26!(3,10)$) {F};
      \draw[->,thick](B) -- (F) node[midway,right] {$2.9$};

      \node[new-node,label=right:$x_{n}$](G) at ($(A)!0.23!(3,10)$) {G};
      \draw[->,thick](A) -- (G) node[midway,above,xshift=-0.25cm] {$2.5$};
    \end{tikzpicture}
    }
    \caption{The tree after $Extend^*$ adds node G to the tree.}
    \label{fig:rewire:sub1}
  \end{subfigure}
  \begin{subfigure}[t]{0.47\linewidth}
    \centering
    \resizebox{\W}{!}{
    \begin{tikzpicture}
      \centering

      \node[node](A) at (0,0) {A};

      \node[near-node](B) at ($(A)!0.3!(10,2)$) {B};
      \draw[->,thick](A) -- (B) node[midway,above] {$3$};

      \node[node](C) at ($(A)!0.5!(-5,2.45)$) {C};
      \draw[->,thick](A) -- (C) node[midway,above] {$2.9$};

      \node[node](D) at ($(B)!0.26!(9,8)$) {D};
      \draw[->,thick](B) -- (D) node[midway,right,yshift=-0.1cm] {$2.8$};

      \node[near-node](E) at ($(C)!0.18!(5,14)$) {E};
      \draw[->,thick](C) -- (E) node[midway,above,xshift=-0.1cm] {$3$};

      \node[near-node](F) at ($(B)!0.26!(3,10)$) {F};
      \draw[->,thick](B) -- (F) node[midway,right] {$2.9$};

      \node[new-node](G) at ($(A)!0.23!(3,10)$) {G};
      \draw[->,thick](A) -- (G) node[midway,above,xshift=-0.25cm] {$2.5$};
      \draw[color=blue] ([xshift=3cm]G) arc (0:-215:3);
      \draw[color=blue] ([xshift=3cm]G) arc (0:35:3);
    \end{tikzpicture}
    }
    \caption{Nodes B, E, and F are found to be G's neighborhood set.}
    \label{fig:rewire:sub2}
  \end{subfigure}
  \begin{subfigure}[t]{0.47\linewidth}
    \centering
    \resizebox{\W}{!}{
    \begin{tikzpicture}
      \centering

      \node[node](A) at (0,0) {A};

      \node[near-node](B) at ($(A)!0.3!(10,2)$) {B};
      \draw[->,thick](A) -- (B) node[midway,above] {$3$};

      \node[node](C) at ($(A)!0.5!(-5,2.45)$) {C};
      \draw[->,thick](A) -- (C) node[midway,above,xshift=0.1cm] {$2.9$};

      \node[node](D) at ($(B)!0.26!(9,8)$) {D};
      \draw[->,thick](B) -- (D) node[midway,right,yshift=-0.1cm] {$2.8$};

      \node[near-node](E) at ($(C)!0.18!(5,14)$) {E};
      \draw[->,thick](C) -- (E) node[midway,above,xshift=-0.1cm] {$3$};

      \node[near-node](F) at ($(B)!0.26!(3,10)$) {F};
      \draw[->,thick](B) -- (F) node[midway,right] {$2.9$};

      \node[new-node](G) at ($(A)!0.23!(3,10)$) {G};
      \draw[->,thick](A) -- (G) node[midway,above,xshift=-0.25cm] {$2.5$};

      \draw[dashed,thick](G) -- (E) node[midway,above,xshift=0.1cm] {$1.9$};
      \draw[dashed,thick](G) -- (F) node[midway,above] {$2$};
      \draw[dashed,thick](G) -- (B) node[midway,above] {$3$};
    \end{tikzpicture}
    }
    \caption{The original costs of nodes B, E, and F are $3$, $5.9$, and $5.9$ and their potential costs are $5.5$, $4.4$, and $4.5$ respectively.}
    \label{fig:rewire:sub3}
  \end{subfigure}
  \begin{subfigure}[t]{0.47\linewidth}
    \centering
    \resizebox{\W}{!}{
    \begin{tikzpicture}
      \centering

      \node[node](A) at (0,0) {A};

      \node[node](B) at ($(A)!0.3!(10,2)$) {B};
      \draw[->,thick](A) -- (B) node[midway,above] {$3$};

      \node[node](C) at ($(A)!0.5!(-5,2.45)$) {C};
      \draw[->,thick](A) -- (C) node[midway,above,xshift=0.1cm] {$2.9$};

      \node[node](D) at ($(B)!0.26!(9,8)$) {D};
      \draw[->,thick](B) -- (D) node[midway,right,yshift=-0.1cm] {$2.8$};

      \node[node](G) at ($(A)!0.23!(3,10)$) {G};
      \draw[->,thick](A) -- (G) node[midway,above,xshift=-0.25cm] {$2.5$};

      \node[node](E) at ($(C)!0.18!(5,14)$) {E};
      \draw[->,thick](G) -- (E) node[midway,above,xshift=0.1cm] {$1.9$};

      \node[node](F) at ($(B)!0.26!(3,10)$) {F};
      \draw[->,thick](G) -- (F) node[midway,above] {$2$};

    \end{tikzpicture}
    }
    \caption{Node B is not rewired because its cost would not be lowered by the operation, however, nodes E and F are rewired.}
    \label{fig:rewire:sub4}
  \end{subfigure}
  \caption{An illustration of the $Rewire$ procedure.}
  \label{fig:rewire}
\end{figure}

\begin{algorithm}[t]
  \caption{$X_{sol} \leftarrow RRT^*\!(x_{r},X_{t},b_{t},n)$}
  \label{alg:rrt*}
  \begin{algorithmic}[1]
    \State $T \leftarrow Initialize(x_{r})$
    \State $x_{best} \leftarrow \varnothing$ \label{alg:rrt*:init_x_best}
    \For{$i = 1,\ldots,n$} \label{alg:rrt*:for}
      \State $x_{rand} \leftarrow Biased\text{-}Sample(i, b_t, X_t)$ \label{alg:rrt*:end_choose_x_rand} \Comment{Biased configuration sampling}
      \State $\{x_{n},x_{p}\} \leftarrow Extend^*\!(x_{rand},T)$ \label{alg:rrt*:optimal_extend} \Comment{Extend tree towards new point}
      \If{$x_{n} \neq \varnothing$}
        \State  $T \leftarrow InsertNode(x_{n},x_{p},T)$ \Comment{Add point to tree}
        \State $T \leftarrow Rewire(x_{n},Near_{\rho,\alpha}(x_{n},T),T)$ \label{alg:rrt*:rewire} \Comment{Rewire edges around new point}
        \If{$x_{n} \in X_{t} \land \left(x_{best} = \varnothing \lor Cost(x_{n},T) < Cost(x_{best},T)\right)$}
          \State $x_{best} \leftarrow x_{n}$ \label{alg:rrt*:store_target} \Comment{Update best path found}
        \EndIf
      \EndIf
    \EndFor
    \If{$x_{best} \neq \varnothing$ \label{alg:rrt*:solution_found}} \Comment{Return best solution found}
      \State \Return $Solution(x_{best},T)$ \label{alg:rrt*:return}
    \Else
      \State \Return $\{\varnothing\}$
    \EndIf
  \end{algorithmic}
\end{algorithm}
\par
The RRT* algorithm is shown in Algorithm \ref{alg:rrt*}.
Note that RRT and RRT* are very similar with the main difference being the addition of the $Extend^*$ and $Rewire$ procedures.
Additionally, RRT* runs for a specific number of iterations, $n \in \mathbb{N}_+$, instead of stopping when the first solution is found.
\par
RRT* is both probabilistically complete and asymptotically optimal \citep{Karaman2011}.
However, RRT* tends to converge slowly because of the Voronoi property.
The Voronoi property helps RRT-based algorithms find valid solutions by encouraging exploration.
As the solution improves in RRT*, the Voronoi regions around the solution get smaller, resulting in a diminishing probability that a given sample will improve the solution \citep{Akgun2011}.

  \section{The Fillet Approach for Local Planning}
  \label{section:fillet}
  In many cases, planned paths must obey nonholonomic constraints, e.g. \citep{cui2018,lan2015,Lavalle2006}.
The $Extend$ and $Path$ procedures can be modified to use basic atomic motions that satisfy such constraints during RRT-based planning, enabling RRT to be used with virtually any set of dynamics.
RRT* variants, however, have no additional benefit if the underlying dynamics or primitive motions do not allow the connection of any two states using a single edge in open space \citep{Li2016}.
The reason being that the $Rewire$ procedure cannot be performed if the nodes in the neighborhood set cannot be exactly connected to each other.



This constraint is detrimental when planning with motion primitives that enforce dynamic path constraints, such as maximum curvature.
A common technique for considering maximum curvature constraints is to use Dubin's paths.
Dubin's paths connect orientated points with the shortest path while considering maximum curvature constraints \citep{Lavalle2006}.
The issue with using Dubin's paths in a sample-based path planner is that a poor choice in the orientation of the points along the solution can cause a significant increase in path length, as shown in Figure \ref{fig:fillet_vs_dubins}.
Furthermore, the orientation that minimizes overall path length changes as the solution converges to optimality.

Instead of attempting to connect two oriented points, fillets connect two line segments (defined with three unoriented points) with a curve transitioning smoothly between them, as shown in Figure \ref{fig:general_fillet}.
Without the addition of orientation, fillets naturally allow incremental improvements to the solution.
The result is a path that is continuous in position and orientation.
Additional path qualities may be achieved depending on the choice of fillet.
%
This section will introduce the general fillet concept and requirements for creating a path using fillets.
Two fillets are then defined, one using an arc and one using B\'ezier curves.
Section \ref{sec:fillet-rrt-star} utilizes these fillets as motion primitives in RRT-based algorithms.

\begin{figure}[t]
\centering
\begin{tikzpicture}[scale=0.3, rotate=90]
  \coordinate (x1) at (0,0);
  \coordinate (x2) at (0,10);
  \coordinate (x3) at (10,10);
  \coordinate (x4) at (10,15);
  \newcount\r;
  \r = 3;

  \node[white]() at (0,-0.01) {};

  \draw[black] (x1) -- (x2);
  \draw[black] (x2) -- (x3);
  \draw[black] (x3) -- (x4);

  \node[label={above:{\scriptsize $x_s$}},inner sep=1pt]() at (x1) {};
  \node[label={below:{\scriptsize $x_e$}},inner sep=1pt]() at (x4) {};

  \draw[red,thick] (x1) -- (x2);
  \draw[red,thick] (x2) arc (-0:-115:-\r) coordinate(arc1_end) {};
  \draw[red,thick] (x3) arc (90:60:-\r)  coordinate(arc2_end) {};
  \draw[red,thick] (arc1_end) -- (arc2_end);

  \draw[red,thick] (x3) arc (270:-25:-\r) coordinate(arc1_end) {};
  \draw[red,thick] (x4) arc (-0: -25:\r)  coordinate(arc2_end) {};
  \draw[red,thick] (arc1_end) -- (arc2_end);

  \draw[blue,thick] (x1) --
                    (0, 7) arc (0:-90:-\r) --
                    (7, 10) arc (90:180:-\r) --
                    (x4) {};

  \filldraw[black] (x1) circle(3pt);
  \filldraw[black] (x2) circle(3pt);
  \filldraw[black] (x3) circle(3pt);
  \filldraw[black] (x4) circle(3pt);

  \draw[black,->,line width=0.25mm] (x1) -- (0,1);
  \draw[black,->,line width=0.25mm] (x2) -- (0,11);
  \draw[black,->,line width=0.25mm] (x3) -- (11,10);
  \draw[black,->,line width=0.25mm] (x4) -- (10,16);

\end{tikzpicture}
  \caption{Given the set of nodes that start with $x_s$ and end with $x_e$ their respective orientations are denoted with arrows pointing from them.
           The blue path is the path that is made by the arc-fillet path generation and the red path is the path that is made by Dubin's paths.}
  \label{fig:fillet_vs_dubins}
\end{figure}
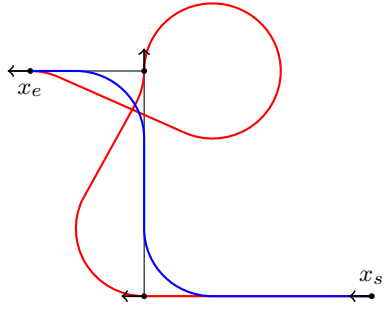

\subsection{General Fillets} \label{sec:general_fillets}
Given three input points $x_1,x_2,x_3 \in X$, a fillet connects $x_1$ to $x_3$ with the combination of two straight-line segments and a curve.
The line segments constitute portions of the lines $\overline{x_1 x_2}$ and $\overline{x_2x_3}$.
The curve intersects line $\overline{x_1x_2}$ at point $x_s$ and line $\overline{x_2x_3}$ at $x_e$.
The resulting fillet moves in a straight line from $x_1$ to $x_s$, along the curve from $x_s$ to $x_e$, and then along the straight line from $x_e$ to $x_3$, as depicted in Figure \ref{fig:general_fillet}.
The major differentiator between the different fillets is the definition of the curve portion, which affects the placement of $x_s$ and $x_e$.

This work assumes symmetric fillets, resulting in an equivalent distance between the node that the fillet is centered at and the two ends of the fillet curve, $x_s$ and $x_e$.
This distance is a function of the fillet curve type as well as the change in orientation between $\overrightarrow{x_1x_2}$ and $\overrightarrow{x_2x_3}$.

The position along the fillet can be described using a spatial index $s$.
Allow $s_0$ to be where the fillet meets $x_1$, $s_1$ to be the index where the fillet's curve begins, $s_2$ to be the index of where the fillet's curve ends, and $s_3$ to be the index of where the fillet reaches $x_3$.
Allow $\Psi(s)$ to be the fillet's curve such that $\Psi(0) = x_s$ and $\Psi(s_2-s_1) = x_e$.
Also, define the unit vector from $x_i$ to $x_j$ as $u_{ij}$ (see Table \ref{tab:notation}).
The position along the fillet can be written in a piecewise form as\
\begin{equation}
\label{eq:fillet_equation}
x(s) = \begin{cases}
	x_1 + s u_{12}      & s_0 \leq s \leq s_1 \\
	\Psi(s-s_1)         & s_1 <    s \leq s_2 \\
	x_e + (s-s_2)u_{23} & s_2 <    s \leq s_3
\end{cases}.
\end{equation}

Defining $\gamma_i \in [0,\pi)$ as the angle measured from $\overrightarrow{x_{i-1}x_i}$ to $\overrightarrow{x_i x_{i+1}}$, the fillet distance for $x_1,x_2,x_3$ can be defined as $d(\gamma_2) = \left\|x_s-x_2\right\| = \left\|x_e-x_2\right\|$.
Once $d(\gamma_2)$ is found, the start and end points of the curve can be written as
\begin{equation}
\label{eq:start_end_arc}
\begin{split}
x_s &= x_2 + d(\gamma_2) u_{21} \\
x_e &= x_2 + d(\gamma_2) u_{23}
\end{split}.
\end{equation}

\subsection{Fillet Paths}
A smooth path to a destination node can be created using a sequence of points where fillets are formed from point triplets and then combined, as shown in Figure \ref{fig:general_fillet}.
Without loss of generality, it is assumed that the path starts at node $1$ and moves to node $n$ using the sequence $x_1, x_2, ..., x_n$.
The path is thus made from $n$ nodes, using $n-2$ fillets to arrive at $x_n$.
There are two major concerns when formulating the path.
The first is the path continuity; not every sequence of points can be combined using fillets to create a continuous path.
The second major consideration is path length; the $Extend^*$ and $Rewire$ procedures depend upon path length for local optimizations.

\subsubsection*{Path Continuity}
The first key to using the fillets for planning purposes is to ensure that the path resulting from joining multiple fillets is continuous.
There are two conditions to ensure feasibility: one to ensure that the fillet curve ends before the final point in the fillet and one condition to ensure that all fillets end before the next one begins.
Assuming $x_i$ is the middle node, these conditions can be expressed as
\begin{equation}
  \label{equ:conditions}
    \begin{split}
      d(\gamma_i)                   &\leq \| x_i - x_{i+1} \| \\
      d(\gamma_{i-1}) + d(\gamma_i) &\leq \| x_i - x_{i-1} \|
    \end{split}.
\end{equation}

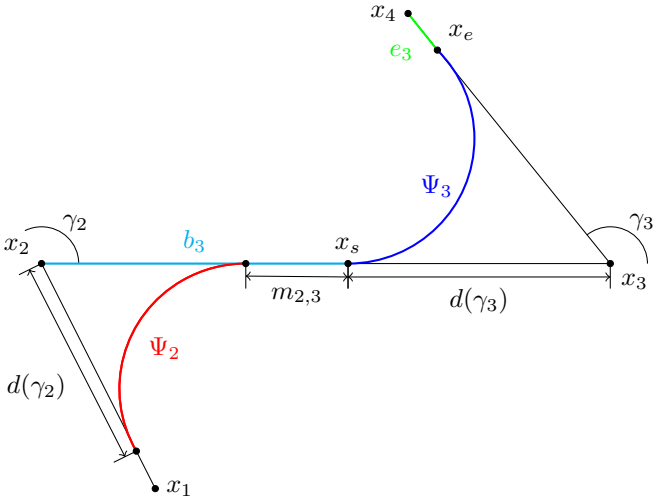
\begin{figure}[t]
\centering
\resizebox{0.75\linewidth}{!}{
\begin{tikzpicture}[scale=0.85]
  \newcount\r;
  \r = 2;

  \node[label={170:  $x_2$},inner sep=0pt](x2) at (0,0) {};
  \node[label={right:$x_1$},inner sep=0pt](x1) at ($(2,-4)! 0.1 !(x2)$) {};
  \node[label={-30:  $x_3$},inner sep=0pt](x3) at (9,0) {};
  \node[label={left: $x_4$},inner sep=0pt](x4) at ($(5,5)! 0.2 ! (x3)$) {};

  \draw[black](x1) -- (x2) -- (x3) -- (x4);

  \coordinate (arc0_start) at ($(2,-4)!0.25!(x2)$);
  \draw[red,thick] (arc0_start) arc (30:-90:-\r) coordinate(arc0_end) {} node[midway,label={-30:$\Psi_2$},inner sep=0pt] {};

  \node[label={above:$x_s$},inner sep=0pt](arc1_start) at ($(-1,0)!0.585!(x3)$) {};
  \draw[blue,thick,label={left:$\Psi_2$}] (arc1_start) arc (-90:45:\r) coordinate(arc1_end) {} node[midway,label={left:$\Psi_3$},inner sep=0pt] {};
  \node[label={30:$x_e$},inner sep=0pt]() at (arc1_end) {};

  \coordinate (temp) at ($(x1)!1.1!(x2)$);
  \draw pic[draw,angle radius=0.5cm,"$\gamma_2$" shift={(3mm,3mm)}] {angle=x3--x2--temp};

  \coordinate (temp) at ($(x2)!1.1!(x3)$);
  \draw pic[draw,angle radius=0.5cm,"$\gamma_3$" shift={(3mm,3mm)}] {angle=temp--x3--x4};

  \draw[black] (arc0_start) -- ++($(x2) !0.4cm! ($(x2)!{sin(-90)}!90:(x1)$)$) coordinate[midway] (brace_start);
  \draw[black] (x2)         -- ++($(x2) !0.4cm! ($(x2)!{sin(-90)}!90:(x1)$)$) coordinate[midway] (brace_end);
  \draw[<->,black] (brace_start) -- (brace_end) node[black,midway,label={-150:$d(\gamma_2)$},inner sep=0pt] {};

  \draw[black] (arc1_start) -- ++(0,-0.4) coordinate[midway] (brace_start);
  \draw[black] (x3)         -- ++(0,-0.4) coordinate[midway] (brace_end);
  \draw[<->,black] (brace_start) -- (brace_end) node[black,midway,label={below:$d(\gamma_3)$},inner sep=0pt] {};

  \draw[black] (arc0_end)   -- ++(0,-0.4) coordinate[midway] (brace_start);
  \draw[black] (arc1_start) -- ++(0,-0.4) coordinate[midway] (brace_end);
  \draw[<->,black] (brace_start) -- (brace_end) node[black,midway,label={below:$m_{2,3}$},inner sep=0pt] {};

  \draw[green,thick] (x4) -- (arc1_end) node[midway,label={-100:$e_3$},inner sep=0pt] {};
  \draw[cyan,thick] (arc1_start) -- (x2) node[midway,label={above:$b_3$},inner sep=0pt] {};

  \draw[red,thick] (arc0_start) arc (30:-90:-\r);

  \foreach \point in {arc1_start,arc1_end,arc0_end,arc0_start,x1,x2,x3,x4}
  {
    \filldraw[black] (\point) circle(1.4pt);
  }
\end{tikzpicture}\hfill
}
\caption{The fillet generated to connect $x_2$ and $x_4$ is shown in blue with the fillet that comes before it shown in red.
         Note that $d(\gamma_2) + d(\gamma_3) \leq \left\|x_2-x_3\right\|$, providing a continuous path.}
\label{fig:general_fillet}
\end{figure}


\subsubsection*{Length of Fillet Paths}
In RRT*'s $Extend^*$ and $Rewire$ procedures, local optimizations are made to the search tree to find the shortest path. These procedures are defined for straight-line paths where the path length can be calculated as the distance between nodes in the path. This is not the case for paths created from fillets. Thus, the path length to a particular node and the length relation to other nodes are now evaluated.

Each fillet consists of two line segments and a fillet curve. The length of a single fillet with $x_i$ as the middle node of the fillet can be expressed as
\begin{equation}
  \label{eq:fillet_definition}
	\F_i = b_{i} + \Psi_i + e_{i}
\end{equation}
where $b_{i}$, $\Psi_i$, and $e_{i}$ are the beginning component length, the curve length, and the end component length (see Figure \ref{fig:general_fillet}).
Given \eqref{equ:conditions}, the two straight-line lengths can be expressed as
\begin{equation}
  \label{eq:b_e}
  \begin{split}
    b_i &= \left\|x_{i-1} - x_{i}\right\| - d(\gamma_i) \\
    e_i &= \left\|x_{i+1} - x_i\right\|   - d(\gamma_i)
  \end{split}.
\end{equation}
This definition of fillet length allows for the expression of a recursive relationship for calculating the path length in the following lemma.
\begin{lemma}
  \label{lem:recursive_length}
  Assume an ordered sequence of nodes is used to create a path using fillets with \eqref{equ:conditions} satisfied for every intermediate node.
  Given a resulting path length of $c_i$ to arrive at node $x_i$, $i \geq 3$, the path length to arrive at node $x_{i+1}$ can be expressed as
  \begin{equation}
    \label{eq:recursive_fillet_length}
    c_{i+1} = c_i + \F_i - \left\|x_i - x_{i-1}\right\|.
  \end{equation}
\end{lemma}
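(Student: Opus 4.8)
The plan is to prove \eqref{eq:recursive_fillet_length} by directly decomposing each fillet path into its straight and curved pieces and comparing the path that terminates at $x_i$ with the one that terminates at $x_{i+1}$. The central observation is that these two paths share an identical prefix: everything from $x_1$ up through the end of the curve of fillet $i-1$ is unchanged, so only the terminal portion near $x_i$ must be accounted for. Because \eqref{equ:conditions} is assumed at every intermediate node, each straight segment below has nonnegative length and the path is literally the concatenation of these pieces, so summing their lengths is valid.

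First I would write out the decomposition explicitly (cf. Figure \ref{fig:general_fillet}). The path ending at $x_i$ opens with the segment $b_2 = \|x_1-x_2\| - d(\gamma_2)$, then alternates curve lengths $\Psi_j$ with the interior straight runs $m_{j,j+1} = \|x_j - x_{j+1}\| - d(\gamma_j) - d(\gamma_{j+1})$ between consecutive fillets, and finally runs straight into the terminal node $x_i$. Since $x_i$ is the endpoint, no fillet trims this last segment, so it has the full length $e_{i-1} = \|x_i - x_{i-1}\| - d(\gamma_{i-1})$ measured from the end of fillet $i-1$'s curve to $x_i$. This gives
\begin{equation*}
  c_i = b_2 + \sum_{j=2}^{i-1}\Psi_j + \sum_{j=2}^{i-2} m_{j,j+1} + e_{i-1}.
\end{equation*}

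Next I would repeat this for $x_{i+1}$, where extending the path reintroduces the corner at $x_i$ as fillet $i$. The previously terminal segment of length $e_{i-1}$ is now trimmed on the $x_i$ side and becomes the interior run $m_{i-1,i}$, after which the curve $\Psi_i$ and the new terminal segment $e_i$ are appended; the result is the analogous sum with one additional curve, one additional interior segment, and $e_{i-1}$ replaced by $e_i$. Subtracting, all common terms telescope and leave
\begin{equation*}
  c_{i+1} - c_i = m_{i-1,i} + \Psi_i + e_i - e_{i-1}.
\end{equation*}

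Finally I would reconcile this with \eqref{eq:fillet_definition}. Substituting the definitions of $m_{i-1,i}$ and $e_{i-1}$ gives $m_{i-1,i} - e_{i-1} = -d(\gamma_i)$, so $c_{i+1}-c_i = \Psi_i + e_i - d(\gamma_i)$; then using $b_i = \|x_{i-1}-x_i\| - d(\gamma_i)$ from \eqref{eq:b_e} I rewrite $-d(\gamma_i) = b_i - \|x_i - x_{i-1}\|$ to obtain $c_{i+1}-c_i = b_i + \Psi_i + e_i - \|x_i - x_{i-1}\| = \F_i - \|x_i - x_{i-1}\|$, which is \eqref{eq:recursive_fillet_length}. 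I expect the only real obstacle to be the bookkeeping in the terminal portion: one must recognize that the last segment of the path to $x_i$ has the \emph{full} length $e_{i-1}$ (nothing trims it at the endpoint), whereas in the extended path the segment $\overline{x_{i-1}x_i}$ is trimmed on both ends to $m_{i-1,i}$, and it is exactly this replacement, together with the added $\Psi_i + e_i$, that yields the claimed recursion.
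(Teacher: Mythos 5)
Your proof is correct and follows essentially the same route as the paper's: the identical summation decomposition $c_i = b_2 + \sum\Psi_j + \sum m_{j,j+1} + e_{i-1}$, the same telescoping to $c_{i+1}-c_i = \Psi_i + m_{i-1,i} + e_i - e_{i-1}$, and the same substitution via \eqref{eq:b_e} and \eqref{eq:m} to recover $\F_i - \left\|x_i - x_{i-1}\right\|$. The only difference is that you spell out the intermediate identity $m_{i-1,i} - e_{i-1} = -d(\gamma_i)$ explicitly, which the paper leaves implicit.
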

\begin{proof}
  The path length to node $x_i$ along a sequence of curves and lines can be written as a summation of individual parts.
  The path to $x_i$ contains $i-2$ curves of length $\Psi_2$ through $\Psi_{i-1}$.
  Let $m_{j,j+1}$ be the length of the straight-line segment that connects curve $j$ to curve $j+1$, i.e.
  \begin{equation}
    \label{eq:m}
    m_{j,j+1} = \left\|x_j - x_{j+1}\right\| - d(\gamma_j) - d(\gamma_{j+1}),
  \end{equation}
  which is positive assuming \eqref{equ:conditions} is satisfied for all fillets.
  The length of the path to arrive at $x_i$ is
  \begin{equation}
    \label{eq:summed_length}
    c_i = b_2 + \sum_{k=2}^{i-1}\Psi_k + \sum_{k=2}^{i-2}m_{k,k+1} + e_{i-1}.
  \end{equation}
  Note that the path connecting to node $x_{i+1}$ could be expressed similarly with a length of
  \begin{equation*}
    c_{i+1} = b_2 + \sum_{k=2}^{i}\Psi_k + \sum_{k=2}^{i-1}m_{k,k+1} + e_{i},
  \end{equation*}
  which can be written in terms of $c_i$ as
  \begin{equation}
    \label{eq:c_i+1}
    c_{i+1} = c_i + \Psi_i + m_{i-1,i} + e_{i} - e_{i-1}.
  \end{equation}
  Given \eqref{eq:b_e} and \eqref{eq:m}, \eqref{eq:c_i+1} becomes
  \begin{equation}
    \label{eq:c_plus_1}
    c_{i+1} = c_i + b_i + \Psi_i + e_{i} - \left\|x_i - x_{i-1}\right\|.
  \end{equation}
  Given the definition of $\F_i$ in \eqref{eq:fillet_definition}, \eqref{eq:c_plus_1} simplifies to \eqref{eq:recursive_fillet_length}.
\end{proof}
A few properties can now be stated using the recursive relationship in Lemma \ref{lem:recursive_length}, beginning with the relationship between the path length to a node and the path length to one of its descendants.
\begin{corollary}
  \label{cor:no_descendant}
  The path length to a node is not dependent upon the choice of any nodes that come after it.
\end{corollary}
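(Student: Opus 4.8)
The plan is to read the claim straight off the closed-form expression for the path length already derived in the proof of Lemma~\ref{lem:recursive_length}, namely equation~\eqref{eq:summed_length}. I would argue that every summand in that expression is determined entirely by the nodes $x_1,\ldots,x_i$ and that no term references a node of index greater than $i$; the corollary then follows immediately, since modifying $x_{i+1},\ldots,x_n$ cannot change a quantity that those nodes never enter.

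Concretely, I would first dispatch the low-index cases: $c_1 = 0$ and $c_2 = \left\|x_1 - x_2\right\|$ plainly involve only $x_1$ and $x_2$. For $i \geq 3$ I would invoke \eqref{eq:summed_length} and walk through its four pieces, tracking the index dependence of each via the definitions \eqref{eq:b_e} and \eqref{eq:m} together with the fact that the angle $\gamma_j$ is fixed by the triple $x_{j-1}, x_j, x_{j+1}$. The beginning term $b_2$ depends only on $x_1, x_2, x_3$; each curve length $\Psi_k$ for $2 \le k \le i-1$ is centred at $x_k$ and hence depends on $x_{k-1}, x_k, x_{k+1}$, with largest index $i$; each connecting segment $m_{k,k+1}$ for $2 \le k \le i-2$ depends on $x_k, x_{k+1}$ together with $\gamma_k$ and $\gamma_{k+1}$, again with largest index $i$; and the end term $e_{i-1}$ depends on $x_{i-1}, x_i$ and $\gamma_{i-1}$, with largest index $i$. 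In every case the largest node index that can appear is $i$.

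The one point that genuinely needs care --- and what I would flag as the \emph{crux} --- is verifying that the angle $\gamma_i$ at the node $x_i$ itself never enters \eqref{eq:summed_length}, since $\gamma_i$, and therefore $d(\gamma_i)$, is the quantity that depends on $x_{i+1}$. This is exactly why the summation limits in \eqref{eq:summed_length} stop short of the final fillet: the curve sum terminates at $\Psi_{i-1}$ rather than $\Psi_i$, the connecting-segment sum terminates at $m_{i-2,i-1}$ rather than $m_{i-1,i}$, and the boundary contribution is $e_{i-1}$ rather than $e_i$. Because $\Psi_i$, $e_i$, and $m_{i-1,i}$ are precisely the quantities carrying the dependence on $\gamma_i$ and on $x_{i+1}$, their absence is what makes $c_i$ blind to everything past $x_i$. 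Having established that no summand in \eqref{eq:summed_length} involves a node of index exceeding $i$, I would conclude that $c_i$ is unchanged under any modification of $x_{i+1},\ldots,x_n$, which is the statement of the corollary.

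As an alternative I could argue inductively from the recursion \eqref{eq:recursive_fillet_length}, but that recursion expresses $c_{i+1}$ forward in terms of $c_i$ and the fillet $\F_i$ (which does involve $x_{i+1}$), so it is less directly suited to an independence claim than the explicit sum. I would therefore favour the direct inspection of \eqref{eq:summed_length} and retain the recursion only as a sanity check.
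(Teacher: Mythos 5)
Your proposal is correct and takes essentially the same route as the paper: the paper's proof is precisely the observation that no variable in \eqref{eq:summed_length} depends on any node after $x_i$. Your more detailed index-tracking through \eqref{eq:b_e} and \eqref{eq:m}, and your identification of the absence of $\Psi_i$, $e_i$, and $m_{i-1,i}$ as the crux, simply makes explicit what the paper leaves to inspection.
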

\begin{proof}
  This can be seen by examining the summation form of the path length in \eqref{eq:summed_length} and noting that none of the variables depend upon any node after node $x_i$.
\end{proof}

The recursive path length calculation using fillets depends on the parent as well as the grandparent node.
This is different from the straight-line motion primitive where the path length to a node can be calculated using knowledge of solely the parent node.
This leads to the following lemma about path length, which has significant implications for rewiring a tree connected with fillets.
\begin{lemma}
  \label{lem:rewiring_tests}
  Given a node $x_i$ with a child node $x_{i+1}$ and multiple possible parent nodes, choosing the parent for $x_i$ to minimize $c_i$ \underline{\textit{will not}} necessarily result in the smallest possible value for $c_{i+1}$.
\end{lemma}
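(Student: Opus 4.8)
Because the statement asserts only that the greedy choice \emph{will not necessarily} be optimal, it suffices to exhibit a single configuration in which it fails; the plan is therefore to construct an explicit counterexample. First, though, I would isolate exactly how the parent choice enters $c_{i+1}$. Substituting the segment lengths \eqref{eq:b_e} into the fillet length \eqref{eq:fillet_definition} and then into the recursion \eqref{eq:recursive_fillet_length} of Lemma~\ref{lem:recursive_length}, the parent distance $\left\|x_i - x_{i-1}\right\|$ cancels and one is left with
\begin{equation*}
  c_{i+1} = c_i + \left\|x_{i+1} - x_i\right\| + \Psi_i - 2d(\gamma_i).
\end{equation*}
Since $x_i$ and its child $x_{i+1}$ are held fixed, the distance $\left\|x_{i+1} - x_i\right\|$ is identical for every candidate parent, so minimizing $c_{i+1}$ is equivalent to minimizing $c_i + \Psi_i - 2d(\gamma_i)$. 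This objective differs from the greedy rewiring objective, which minimizes $c_i$ alone, precisely by the turn-dependent term $\Psi_i - 2d(\gamma_i)$; that term is controlled by the angle $\gamma_i$ between $\overrightarrow{x_{i-1}x_i}$ and $\overrightarrow{x_i x_{i+1}}$, which changes with the parent. This mismatch is the whole content of the lemma, and exposing it is the crux of the argument.

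With the mechanism in hand, I would choose a geometry that makes the two objectives disagree. Instantiating the arc fillet of the following subsection gives the closed forms $\Psi_i = r\gamma_i$ and $d(\gamma_i) = r\tan(\gamma_i/2)$, so that $\Psi_i - 2d(\gamma_i) = r\bigl(\gamma_i - 2\tan(\gamma_i/2)\bigr)$ is strictly decreasing in $\gamma_i$; a sharper turn at $x_i$ therefore lowers $c_{i+1}$ through increased corner cutting. I would then place two admissible parents: $x_a$, positioned so that $\overrightarrow{x_a x_i}$ is nearly collinear with $\overrightarrow{x_i x_{i+1}}$ (a gentle turn, small $\gamma$) and carrying the marginally \emph{smaller} cost-to-come, and $x_b$, positioned so that $\overrightarrow{x_b x_i}$ makes a sharp angle with $\overrightarrow{x_i x_{i+1}}$ (a large $\gamma$) while carrying a marginally larger cost-to-come. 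Fixing explicit coordinates and cost values with $c_i^{(a)} < c_i^{(b)}$, I would verify the reversal $c_{i+1}^{(a)} > c_{i+1}^{(b)}$, so that $x_a$ minimizes $c_i$ while $x_b$ minimizes $c_{i+1}$, contradicting any claim of universality.

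The obstacle is arithmetic rather than conceptual: one must guarantee that the turn-term advantage of $x_b$, namely $\bigl(\Psi_i^{(a)} - 2d(\gamma_i^{(a)})\bigr) - \bigl(\Psi_i^{(b)} - 2d(\gamma_i^{(b)})\bigr)$, strictly exceeds the cost-to-come advantage $c_i^{(b)} - c_i^{(a)}$ of $x_a$, all while keeping \emph{both} candidate fillets feasible in the sense of \eqref{equ:conditions}. Feasibility is the delicate part, because a sharp turn at $x_i$ forces a large $d(\gamma_i)$, which the conditions $d(\gamma_i) \leq \left\|x_i - x_{i+1}\right\|$ and $d(\gamma_{i-1}) + d(\gamma_i) \leq \left\|x_i - x_{i-1}\right\|$ bound against the segment lengths. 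Because the turn term grows without bound as $\gamma_i \to \pi$ while these constraints only require sufficiently long segments, there is ample slack to make the turn-term gap dominate the cost-to-come gap, and the construction closes.
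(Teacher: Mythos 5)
Your proposal is correct, and it rests on the same counterexample mechanism the paper uses; the difference is in how you get there. The paper's proof is purely concrete: in Figure \ref{fig:connection_alternatives} it pits a collinear approach to $x_3$ (cost $1.9$, and $2.9$ to $x_4$) against a perpendicular approach (cost $2$ to $x_3$, but only $1 + 1/2 + \pi/4 + 1/2 \approx 2.785$ to $x_4$ because the arc fillet cuts the corner), and simply reads off the reversal. You instead first isolate the mechanism algebraically: substituting \eqref{eq:b_e} into \eqref{eq:fillet_definition} and \eqref{eq:recursive_fillet_length} gives $c_{i+1} = c_i + \left\|x_{i+1} - x_i\right\| + \Psi_i - 2d(\gamma_i)$, so the greedy objective ($c_i$) and the true objective ($c_{i+1}$) differ exactly by the parent-dependent turn term $\Psi_i - 2d(\gamma_i)$; for the arc fillet this is $r\bigl(\gamma_i - 2\tan(\gamma_i/2)\bigr)$ (note that $d(\gamma)$ in \eqref{eq:d_gamma_arc} equals $r\tan(\gamma/2)$ by the half-angle identity), which is strictly decreasing in $\gamma_i$ since its derivative is $-r\tan^2(\gamma_i/2)$. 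This buys something the paper's proof does not: it shows that the failure of greedy parent selection is generic rather than an accident of particular numbers --- any two admissible parents whose turn-term gap exceeds their (arbitrarily small) cost-to-come gap produce the reversal --- and it explains why the paper's figures work, since $2 - 1.9 = 0.1 < \frac{1}{2}\bigl(2 - \pi/2\bigr) \approx 0.215$. The one thing left to finish is what you yourself flag: you never fix explicit coordinates and verify feasibility under \eqref{equ:conditions}. That step is routine (the paper's own configuration, with gentle-turn angle $0$, sharp-turn angle $\pi/2$, and $r = 1/2$, satisfies it), but a ``will not necessarily'' lemma is only proved once a single concrete instance is pinned down, so you should include one.
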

\begin{proof}
  The proof is given through a simple example where the shorter path to $x_i$ results in a longer path to $x_{i+1}$.
  Consider Figure \ref{fig:connection_alternatives}.
  Let the path from $x_r$ to $x_6$ have the same length as the path from $x_r$ to $x_1$ and the same be true of $x_5$ and $x_2$, i.e.
  \begin{equation*}
    \begin{split}
      Cost\left(x_1,T\right) = Cost\left(x_6,T\right) \\
      Cost\left(x_2,T\right) = Cost\left(x_5,T\right)
    \end{split}.
  \end{equation*}
  Path A, shown in red, is the shortest path to $x_3$ with a path length of $1.9$.
  Path A also results in a path length to $x_4$ of $2.9$.
  On path B, shown in blue, let the fillet that connects $x_5$ to $x_4$ be an arc-fillet with $r = 0.5$, $d(\pi/2) = 1/2$, and a resulting arc length of $\pi/4$.
  Starting at $x_6$ and following path B results in a path length to $x_3$ of $2$, which is greater than the path length to get to $x_3$ on path A.
  To minimize the path length to $x_3$ path A is chosen.
  However, the length of the path from $x_6$ to $x_4$ is $1 + 1/2 + \pi/4 + 1/2 \approx 2.785$.
  Thus, minimizing path length to $x_3$ does not minimize path length to $x_4$.
\end{proof}
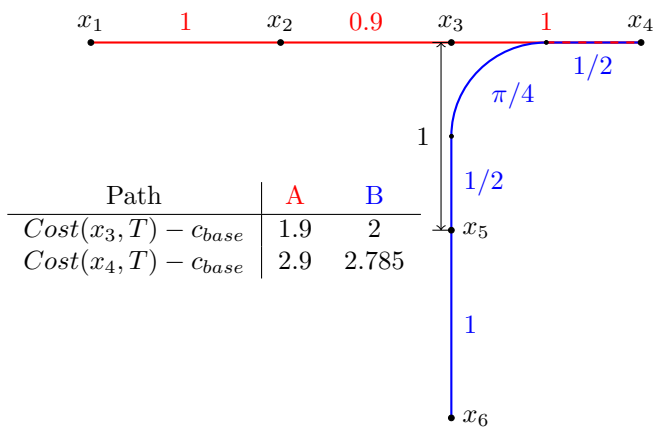
\begin{figure}[t]
  \centering
    \begin{tikzpicture}[scale=2.5]
      \node[label={above:$x_1$},inner sep=0pt](x1) at (1,3) {};
      \node[label={above:$x_2$},inner sep=0pt](x2) at (2,3) {};
      \node[label={above:$x_3$},inner sep=0pt](x3) at (2.9,3) {};
      \node[label={above:$x_4$},inner sep=0pt](x4) at (3.9,3) {};
      \node[label={right:$x_5$},inner sep=0pt](x5) at (2.9,2) {};
      \node[label={right:$x_6$},inner sep=0pt](x6) at (2.9,1) {};

      \draw[red,thick] (x1) -- (x2) node[midway,inner sep=0pt,label={above:$1$}] {}
                            -- (x3) node[midway,inner sep=0pt,label={above:$0.9$}] {}
                            -- (x4) node[midway,inner sep=0pt,label={above:$1$}] {};

      \draw[blue,thick] (x6) -- (x5)          node[midway,inner sep=0pt,label={right:$1$}] {}
                             -- ++(0,0.5)     node[midway,inner sep=0pt,label={right:$1/2$}] {} coordinate(xs) {}
                             arc (180:90:0.5) node[midway,inner sep=0pt,label={-45:$\pi/4$}] {} coordinate(xe) {}
                             -- (x4)          node[midway,inner sep=0pt,label={below:$1/2$}] {};

      \draw[red,dashed,thick] (xe) -- (x4);

      \filldraw[black] (x1) circle(0.015);
      \filldraw[black] (x2) circle(0.015);
      \filldraw[black] (x3) circle(0.015);
      \filldraw[black] (x4) circle(0.015);
      \filldraw[black] (x5) circle(0.015);
      \filldraw[black] (x6) circle(0.015);
      \filldraw[black] (xs) circle(0.01);
      \filldraw[black] (xe) circle(0.01);

      \draw[black] (x3) -- ++(-0.1,0) coordinate[midway] (brace_start);
      \draw[black] (x5) -- ++(-0.1,0) coordinate[midway] (brace_end);
      \draw[<->,black] (brace_start) -- (brace_end) node[black,midway,label={left:$1$},inner sep=0pt] {};

      \node at (1.65,2)
        {
          \begin{tabular}{c | c c}
            Path                     & \textcolor{red}{A} & \textcolor{blue}{B} \\
            \hline
            $Cost(x_3,T) - c_{base}$ & $1.9$              & $2$ \\
            $Cost(x_4,T) - c_{base}$ & $2.9$              & $2.785$
          \end{tabular}
        };
    \end{tikzpicture}\hfill
  \caption{Let $c_{base} = Cost(x_1,T) = Cost(x_6,T)$ and $Cost(x_2,T) = Cost(x_5,T)$.
           Path A, shown in red, yields a shorter path length for $x_3$ than path B, shown in blue.
           However, path B yields a shorter path length to $x_4$ than path A.}
  \label{fig:connection_alternatives}
\end{figure}
Therefore, in a $Rewire$ procedure, it is not sufficient to solely check the path to the node being rewired, but the path lengths to descendants must also be evaluated. As the tree grows larger, checking all descendants would be overly cumbersome. The following corollary establishes that the only descendants that need to be checked are the children nodes.
\begin{corollary}
  \label{cor:unchanged}
  If a node's parent and grandparent remain unchanged, but the tree is rewired such that the cost of the node's parent is lowered, then the cost of the node will be lowered by the same amount as its parent.
\end{corollary}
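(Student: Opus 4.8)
The plan is to read the claim directly off the recursive length formula of Lemma~\ref{lem:recursive_length}. Label the node in question $x_{i+1}$, its (unchanged) parent $x_i$, and its (unchanged) grandparent $x_{i-1}$, so that the lemma gives
\[
  c_{i+1} = c_i + \F_i - \left\|x_i - x_{i-1}\right\|.
\]
The entire corollary will follow once I establish that the increment $\F_i - \left\|x_i - x_{i-1}\right\|$ is invariant under the rewiring being considered, since then any change in $c_i$ transfers verbatim to $c_{i+1}$.

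First I would verify that this increment depends only on the positions of the triple $x_{i-1}, x_i, x_{i+1}$. Expanding $\F_i = b_i + \Psi_i + e_i$ via \eqref{eq:b_e}, the terms $b_i$ and $e_i$ are built from $\left\|x_{i-1} - x_i\right\|$, $\left\|x_{i+1} - x_i\right\|$, and $d(\gamma_i)$, while the curve length $\Psi_i$ and the angle $\gamma_i$ (measured from $\overrightarrow{x_{i-1}x_i}$ to $\overrightarrow{x_i x_{i+1}}$) are likewise fixed by those same three points. Hence $\F_i$, and therefore $\F_i - \left\|x_i - x_{i-1}\right\|$, is a function of $x_{i-1}, x_i, x_{i+1}$ alone and of no ancestor of $x_{i-1}$.

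Next I would invoke the hypothesis. A rewiring that lowers $c_i$ must act strictly above $x_i$ in the tree: it may alter which ancestors feed cost into the grandparent, but by assumption it leaves $x_{i-1}$, $x_i$, and $x_{i+1}$ (and the edges among them) untouched. Consequently the increment is unchanged, so if the rewiring replaces $c_i$ by $c_i - \Delta$ for some $\Delta \ge 0$, the formula yields a new cost
\[
  c_{i+1}' = (c_i - \Delta) + \F_i - \left\|x_i - x_{i-1}\right\| = c_{i+1} - \Delta,
\]
i.e.\ the node's cost drops by exactly the same amount $\Delta$ as its parent's.

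I do not expect a genuine obstacle here: this is essentially a one-line consequence of the recursion, paralleling Corollary~\ref{cor:no_descendant}, which already records the companion fact that $c_i$ is insensitive to everything below $x_i$. The only point requiring care is making explicit that lowering $c_i$ does not secretly perturb the local geometry at $x_i$ — that the fillet length $\F_i$ really is determined solely by the unchanged triple $x_{i-1}, x_i, x_{i+1}$ — which is precisely what the ``parent and grandparent remain unchanged'' hypothesis guarantees.
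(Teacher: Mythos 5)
Your proposal is correct and follows exactly the paper's own argument: both invoke the recursion $c_{i+1} = c_i + \F_i - \left\|x_i - x_{i-1}\right\|$ from Lemma~\ref{lem:recursive_length} and observe that, since the parent and grandparent (and hence $\F_i$ and $\left\|x_i - x_{i-1}\right\|$) are unchanged, any reduction $\Delta$ in $c_i$ passes through verbatim to $c_{i+1}$. Your additional unpacking of $\F_i$ into $b_i + \Psi_i + e_i$ to confirm it depends only on the triple $x_{i-1}, x_i, x_{i+1}$ is a slightly more explicit rendering of the same one-line observation the paper makes.
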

\begin{proof}
  If a node's parent and grandparent remain the same then an examination of \eqref{eq:recursive_fillet_length} shows that the only portion of its path length that will change is the length to the parent's node, $c_i$, since both $\F_i$ and $\left\|x_i - x_{i-1}\right\|$ remain unchanged.
  Thus, the same change in cost seen by a parent will be reflected in the cost of the node in question if the grandparent node remains unchanged.
\end{proof}


\subsection{The Arc Fillet}
\label{ssec:arc_fillet}
The arc-fillet formulates the curve portion of the fillet using a circle of radius $r \in \mathbb{R}_+$ that is tangential to the two line segments at $x_s$ and $x_e$, see Figure \ref{fig:arc_fillet}.
To respect curvature constraints, the radius can be chosen as the inverse of the maximum curvature, i.e. $r = 1/\kappa_{max}$.
The distance between the curve intersection points and the intermediary point of the fillet is expressed in the following Lemma:

\begin{lemma}
  The arc-fillet distance, $d(\gamma)$, for a circular curve of radius $r$ is
  \begin{equation}
  \label{eq:d_gamma_arc}
  d(\gamma) = \frac{r\bigl(1 - \cos(\gamma)\bigr)}{\sin(\gamma)}.
  \end{equation}
\end{lemma}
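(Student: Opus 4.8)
The plan is to exploit the elementary geometry of a circle inscribed in the corner formed at $x_2$ by the two line segments $\overline{x_1 x_2}$ and $\overline{x_2 x_3}$. First I would fix the circle's center, call it $O$, as the point lying at distance $r$ from both lines on the interior side of the corner; since the circle is tangent to each line, the tangent points are precisely $x_s$ on $\overline{x_1 x_2}$ and $x_e$ on $\overline{x_2 x_3}$, and the radii $\overrightarrow{O x_s}$ and $\overrightarrow{O x_e}$ meet the respective lines at right angles. By symmetry $O$ lies on the bisector of the corner, which is consistent with the assumption $\|x_s - x_2\| = \|x_e - x_2\| = d(\gamma)$ made in the text.

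Next I would translate the turning angle $\gamma$ into the interior angle of the corner. Because $\gamma$ is measured from $\overrightarrow{x_1 x_2}$ to $\overrightarrow{x_2 x_3}$, i.e. it is the exterior (turning) angle, the interior angle of the wedge between the rays $\overrightarrow{x_2 x_1}$ and $\overrightarrow{x_2 x_3}$ equals $\pi - \gamma$. The bisector on which $O$ sits therefore splits this wedge into two angles of $(\pi-\gamma)/2$ each.

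The key computation is then a single right triangle with vertices $x_2$, $x_s$, and $O$: the right angle sits at $x_s$ by tangent--radius perpendicularity, the leg opposite $x_2$ is $\|x_s - O\| = r$, and the leg adjacent to $x_2$ is $\|x_2 - x_s\| = d(\gamma)$. Taking the tangent of the angle at $x_2$ yields $\tan\!\bigl((\pi-\gamma)/2\bigr) = r / d(\gamma)$, and since $\tan\!\bigl((\pi-\gamma)/2\bigr) = \cot(\gamma/2)$ this rearranges to $d(\gamma) = r\tan(\gamma/2)$. Finally I would invoke the half-angle identity $\tan(\gamma/2) = (1-\cos\gamma)/\sin\gamma$ to recover the stated closed form \eqref{eq:d_gamma_arc}.

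The only real subtlety, and the step I would guard most carefully, is the bookkeeping on the angle: one must recognize that $\gamma$ is the turning angle, so the half-angle appearing in the right triangle is $(\pi-\gamma)/2$ rather than $\gamma/2$. Reversing this would produce $r/\tan(\gamma/2)$ and hence the wrong expression. Everything else is routine trigonometry, and the limiting cases $\gamma \to 0$ (giving $d \to 0$, a straight pass-through with no curve offset) and $\gamma \to \pi$ (giving $d \to \infty$, an infeasibly tight reversal) serve as convenient sanity checks on the result.
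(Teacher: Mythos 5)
Your proof is correct, and it takes a genuinely different route from the paper's. You argue synthetically: the fillet circle is inscribed in the corner at $x_2$, its center lies on the bisector of the interior angle $\pi-\gamma$, and tangent--radius perpendicularity gives the right triangle $x_2$, $x_s$, $O$ from which $\tan\bigl((\pi-\gamma)/2\bigr) = r/d(\gamma)$, hence $d(\gamma) = r\tan(\gamma/2)$ and, by the half-angle identity, the stated formula. Your angle bookkeeping is right (and it is indeed the one place to be careful, since $\gamma$ is the turning angle, not the interior angle). The paper instead works in a coordinate frame anchored at $x_s$ with the $x$-axis along $\overrightarrow{x_1x_2}$, parameterizes the arc by its tangent angle as ${_f\Psi_{arc}}(\theta) = r\bigl[\sin\theta,\ \zeta(1-\cos\theta)\bigr]^T$, and solves for $d(\gamma)$ by equating two vector expressions for $x_2$: one as $\bigl[d(\gamma),0\bigr]^T$ and one reached from $x_e = {_f\Psi_{arc}}(\gamma)$ by backing up a distance $d(\gamma)$ along $\overline{x_2x_3}$. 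Your argument is shorter and more elementary, and it surfaces the clean intermediate form $d(\gamma) = r\tan(\gamma/2)$, which makes the singularities at $\gamma = 0$ and $\gamma = \pi$ transparent. The paper's heavier coordinate construction buys something your proof does not: the explicit parameterization ${_f\Psi_{arc}}(\theta)$ built during the proof is reused immediately afterwards to define the spatially indexed arc curve $\Psi_{arc}(s)$ in \eqref{eq:arc_equation}, so the proof and the subsequent path-generation machinery share the same setup.
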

\begin{proof}
  Assume that the coordinate frame $f$ is defined such that $x_s$ is at the origin and the $x$-axis is pointing along $\overrightarrow{x_1x_2}$.
  Using the pre-subscript $f$ to denote a point expressed in frame $f$ then ${_fx_s} = \begin{bmatrix} 0 & 0 \end{bmatrix}^T$.
  As $d(\gamma)$ represents the distance along $\overline{x_1x_2}$ from $x_s$ to $x_2$, the value of $x_2$ in frame $f$ is
  \begin{equation}
    \label{eq:f_x_2}
    {_fx_2} = \begin{bmatrix}
      d(\gamma) \\
      0
    \end{bmatrix}.
  \end{equation}
  As the orientation of the path must be tangential to the circle at $x_s$ for continuity, the center point of the circle will lie upon the $y$-axis at a distance $r$ from $x_s$, i.e.,
  \begin{equation*}
    {_fx_c} = \begin{bmatrix}
      0 \\
      r \zeta
    \end{bmatrix}\mbox{, } \zeta \in \{-1, 1\},
  \end{equation*}
  where $\zeta = 1$ corresponds to a counter-clockwise arc and $\zeta =-1$ to a clockwise arc.
  Parameterizing the arc by its tangent angle, the arc can be expressed in frame $f$ as
  \begin{equation*}
    {_f\Psi_{arc}}(\theta) =
      r
      \begin{bmatrix}
        \sin(\theta) \\
        \zeta \left(1 - \cos(\theta)\right)
      \end{bmatrix}
  \end{equation*}
  where ${_fx_s} = {_f\Psi_{arc}}(0)$.
  The parameter $\theta \in [0,\gamma)$ represents the orientation of the path in frame $f$.
  The tangent of ${_f\Psi_{arc}}\left(\gamma\right)$ should be parallel to $\overline{x_2x_3}$.
  The task is to solve for $d(\gamma)$ such that ${_fx_e} = {_f\Psi_{arc}}(\gamma)$.

  The unit vector from ${_fx_3}$ to ${_fx_2}$ can be written in terms of $\gamma$ as $-\begin{bmatrix} \cos(\gamma) & \sin(\gamma) \end{bmatrix}^T$.
  The point ${_fx_2}$ can be expressed as a combination of this unit vector and ${_fx_e}$ as
  \begin{equation}
  \label{eq:x_2_version2}
  \begin{split}
    {_fx_2} &= {_fx_e} - d(\gamma)
      \begin{bmatrix}
        \cos(\gamma) \\
        \sin(\gamma)
      \end{bmatrix} \\
    &=
      r
      \begin{bmatrix}
        \sin(\gamma) \\
        \zeta \left(1-\cos(\gamma)\right)
      \end{bmatrix}
    - d(\gamma)
      \begin{bmatrix}
        \cos(\gamma) \\
        \sin(\gamma)
      \end{bmatrix}
  \end{split}
  \end{equation}
  Equation \eqref{eq:d_gamma_arc} is found by setting \eqref{eq:f_x_2} equal to \eqref{eq:x_2_version2} and solving for $d(\gamma)$.
\end{proof}
\begin{remark}
  It is important to note the singularities of $d(\gamma)$ and their corresponding significance.
  A change of direction of $\gamma=0$ corresponds to executing a straight line.
  A value of $\gamma = \pm \pi$ would correspond to reversing direction.
\end{remark}

One advantage of the arc-fillet is that the fillet curve can be directly expressed using the spatial index $s$.
The orientation of $\overrightarrow{x_1x_2}$ can be written as
$$
\psi = atan2(u_{12, 2}, u_{12,1}).
$$
The point along the fillet curve can then be expressed in the inertial frame as
\begin{align}
  \label{eq:arc_equation}
  \Psi_{arc}(s) &= R(\psi) \cdot {_f\Psi_{arc}}\left(\frac{s}{r}\right) + x_s\mbox{,} &
  R(\psi) &=
    \begin{bmatrix}
      \cos(\psi) & -\sin(\psi) \\
      \sin(\psi) & \cos(\psi)
    \end{bmatrix}\mbox{,}
\end{align}
where $\theta$ has been replaced with $\frac{s}{r}$ as the arc-length of a circle is $s = \theta r$.
Note that $\Psi(0) = x_s$ and $\Psi(\gamma r) = x_e$, as expected.

Finally, the spatial switching indices from \eqref{eq:fillet_equation} are
\begin{equation*}
\begin{split}
	s_1 &= s_0 + \left\|x_s - x_1\right\| \\
	s_2 &= s_1 + \gamma r \\
	s_3 &= s_2 + \left\|x_3 - x_e\right\|
\end{split}.
\end{equation*}

\tikzsetnextfilename{arc_fillet}
\begin{figure}[t]
  \centering
    \begin{tikzpicture}[scale=0.7]
      \node[label={above:$x_1$},inner sep=0pt](x1) at (1.5,  0)  {};
      \node[label={-30:  $x_2$},inner sep=0pt](x2) at (10, 0)  {};
      \node[label={left: $x_3$},inner sep=0pt](x3) at ($(4,7)!0.15!(x2)$) {};
      \draw[black](x1) -- (x2);
      \draw[black](x2) -- (x3);

      \coordinate (arc_start) at ($(x2)!0.65!(0,0)$);
      \draw[red,thick] (arc_start) arc (-90:36:3cm) coordinate(arc_end) {};

      \node[label={80:$x_s$},inner sep=0pt]() at (arc_start) {};

      \node[label={30:$x_e$},inner sep=0pt]() at (arc_end) {};

      \coordinate (xc) at ($(arc_start) !3cm! ($(arc_start)!{sin(-90)}!90:(x1)$)$);

      \node[label={above:$x_c$},inner sep=0pt] at (xc) {};

      \coordinate (temp) at (500,0);
      \draw pic[draw,angle radius=0.5cm,"$\gamma$" shift={(3mm,3mm)}] {angle=temp--x2--x3};

      \draw[thick,cyan]  (x1) -- (arc_start) node[midway,inner sep=0pt,label={below:$b_2$}] {};
      \draw[thick,green] (x3) -- (arc_end)   node[midway,inner sep=0pt,label={30:   $e_2$}] {};

      \draw[black] (arc_start) -- ++(-0.4cm,0) coordinate[midway] (brace_start);
      \draw[black] (xc)        -- ++(-0.4cm,0) coordinate[midway] (brace_end);
      \draw[<->,black] (brace_start) -- (brace_end) node[black,midway,label={left:$r$},inner sep=0pt] {};

      \draw[black] (arc_start) -- ++(0,-0.4cm) coordinate[midway] (brace_start);
      \draw[black] (x2)        -- ++(0,-0.4cm) coordinate[midway] (brace_end);
      \draw[<->,black] (brace_start) -- (brace_end) node[black,midway,label={below:$d(\gamma)$},inner sep=0pt] {};

      \filldraw[black] (x1)        circle(1.5pt);
      \filldraw[black] (x2)        circle(1.5pt);
      \filldraw[black] (x3)        circle(1.5pt);
      \filldraw[black] (xc)        circle(1.5pt);
      \filldraw[black] (arc_end)   circle(1.5pt);
      \filldraw[black] (arc_start) circle(1.5pt);
    \end{tikzpicture}
  \caption{The arc-fillet generated to connect $x_1$ and $x_3$.}
  \label{fig:arc_fillet}
\end{figure}
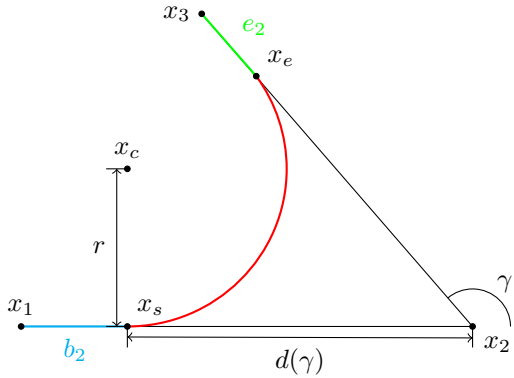

The development of the arc-fillet enables a definition of a new procedure for creating a path between three points:
\begin{procedure}[$X_{fillet} \leftarrow ArcFillet\left(x_0,x_1,x_2,x_3\right)$]
  The $ArcFillet$ procedure uses \eqref{eq:start_end_arc}, \eqref{eq:d_gamma_arc}, and \eqref{eq:arc_equation} to make an arc that connects $\overline{x_1x_2}$ to $\overline{x_2x_3}$.
  Feasibility checks are made with respect to the arc that connects $\overline{x_0x_1}$ to $\overline{x_1x_2}$ using \eqref{eq:d_gamma_arc} to define $d(\gamma)$ in \eqref{equ:conditions}.
  If the checks fail the null path is returned.
\end{procedure}

\begin{remark}
  The arc-fillet is made from straight-line segments and arcs assuming forward motion as is a Dubin's path.
  As a result, a path made using arc-fillets will have the same continuity properties as a Dubin's path, i.e. $\cont{1}$ in position and orientation with curvature assuming instantaneous changes between zero and maximum curvature \citep{Lavalle2006}.
\end{remark}

\subsection{The B\'ezier Fillet}
\label{section:bezier_curve_gen}
The arc-fillet assumes an instantaneous change in curvature which may be inappropriate for some scenarios that require higher levels of smoothness.
In this section, B\'ezier curves are used to generate the fillet, resulting in paths that are $\cont{2}$ continuous in position and orientation, and $\cont{1}$ continuous in curvature.
A detailed description of the curve is left to \citep{Yang2014} and the references therein.
The definition of the curve is shown as follows for the sake of completeness.

A B\'ezier curve connects two points and is defined as,
\begin{equation}
  \label{equ:bezier_curve}
  P_n(\tau) = \sum^{n}_{i=0} p_i B_{n,i}(\tau),
\end{equation}
where $p_i \in \mathbb{R}^2$ are control points, $n \in \mathbb{Z}_+$ is the degree of the polynomial, $\tau$ is a path parameterization index such that $0 \leq \tau \leq 1$, $P_n(0) = p_0$, and $P_n(1) = p_n$.
Note that there is no direct relationship between changes in $\tau$ and path length \citep{Gravesen1997}.
The functions $B_{n,i}(\tau)$ are Berstein polynomials defined as
\begin{equation}
  \label{equ:berstein_polynomials}
  B_{n,i}(\tau) = \binom{i}{n} \tau^i (1 - \tau)^{n-i}.
\end{equation}

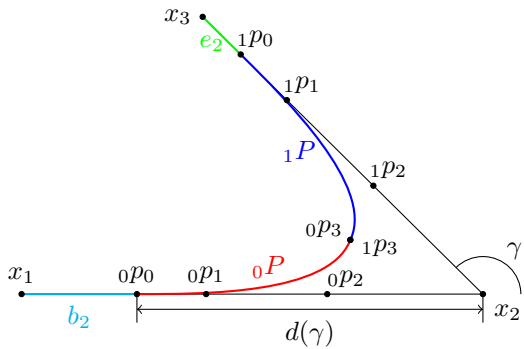
\begin{figure}[t]
\centering
\begin{tikzpicture}[scale=0.076]
  \newcount\Xonex;
  \newcount\Xoney;
  \newcount\Xtwox;
  \newcount\Xtwoy;
  \newcount\Xthreex;
  \newcount\Xthreey;
  \newcount\Bzerox;
  \newcount\Bzeroy;
  \newcount\Bonex;
  \newcount\Boney;
  \newcount\Btwox;
  \newcount\Btwoy;
  \newcount\Bthreex;
  \newcount\Bthreey;
  \newcount\Ezerox;
  \newcount\Ezeroy;
  \newcount\Eonex;
  \newcount\Eoney;
  \newcount\Etwox;
  \newcount\Etwoy;
  \newcount\Ethreex;
  \newcount\Ethreey;


  \Xonex = 20;
  \Xoney = 0;
  \Xtwox = 100;
  \Xtwoy = 0;
  \Xthreex = 40;
  \Xthreey = 60;
  \Bzerox = 40;
  \Bzeroy = 0;
  \Bonex = 52.0366;
  \Boney = 0;
  \Btwox = 73;
  \Btwoy = 0;
  \Bthreex = 77;
  \Bthreey = 10;
  \Ezerox = 58;
  \Ezeroy = 42;
  \Eonex = 66;
  \Eoney = 34;
  \Etwox = 81;
  \Etwoy = 19;
  \Ethreex = \Bthreex;
  \Ethreey = \Bthreey;

  \node[label={above:$x_1$},inner sep=0pt](x1) at (\Xonex,\Xoney) {};
  \node[label={-30:  $x_2$},inner sep=0pt](x2) at (\Xtwox,\Xtwoy) {};
  \node[label={left: $x_3$},inner sep=0pt](x3) at ($(x2)!0.81!(\Xthreex,\Xthreey)$) {};

  \node[label={above:$_0p_0$},inner sep=0pt](b0) at (\Bzerox,\Bzeroy) {};
  \node[inner sep=0pt](e0) at (\Ezerox,\Ezeroy) {};
  \node[label={above:$_1p_0$},xshift=0.2cm,yshift=-0.15cm]() at (e0) {};
  \node[label={above:$_0p_1$},inner sep=0pt](b1) at (\Bonex,\Boney) {};
  \node[inner sep=0pt](e1) at (\Eonex,\Eoney) {};
  \node[label={above:$_1p_1$},xshift=0.2cm,yshift=-0.15cm]() at (e1) {};
  \node[inner sep=0pt](b2) at (\Btwox,\Btwoy) {};
  \node[label={above:$_0p_2$},xshift=0.25cm,yshift=-0.15cm]() at (b2) {};
  \node[inner sep=0pt](e2) at (\Etwox,\Etwoy) {};
  \node[label={above:$_1p_2$},xshift=0.2cm,yshift=-0.15cm]() at (e2) {};
  \node[inner sep=0pt](b3) at ($(b2)!0.5!(e2)$) {};
  \node[label={left:$_0p_3$},xshift=0.15cm,yshift=0.15cm]() at (b3) {};
  \node[inner sep=0pt](e3) at (b3) {};
  \node[label={right:$_1p_3$},xshift=-0.1cm,yshift=-0.1cm]() at (b3) {};

  \draw[black](x1) -- (x2);
  \draw[black](x3) -- (x2);

  \draw[thick,red]  (b0) .. controls (b1) and (b2) .. (b3) {} node[midway,above,xshift=0.1cm] {$_0P$};
  \draw[thick,blue] (e0) .. controls (e1) and (e2) .. (e3) {} node[midway,left,xshift=0.05cm,yshift=-0.1cm] {$_1P$};
  \draw[thick,cyan] (x1) -- (b0);
  \node[label={below:\textcolor{cyan}{$b_2$}},xshift=0.0cm,yshift=0.1cm]() at ($(x1)!0.5!(b0)$) {};
  \draw[thick,green] (x3) -- (e0) node[midway,below,xshift=-0.13cm,yshift=0.13cm] {$e_2$};

  \filldraw[black] (x1) circle(13pt);
  \filldraw[black] (x2) circle(13pt);
  \filldraw[black] (x3) circle(13pt);

  \filldraw[black] (b0) circle(13pt);
  \filldraw[black] (b1) circle(13pt);
  \filldraw[black] (b2) circle(13pt);
  \filldraw[black] (b3) circle(13pt);
  \filldraw[black] (e0) circle(13pt);
  \filldraw[black] (e1) circle(13pt);
  \filldraw[black] (e2) circle(13pt);
  \filldraw[black] (e3) circle(13pt);

  \coordinate (temp) at (500,0);

  \draw pic[draw,angle radius=0.5cm,"$\gamma$" shift={(3mm,3mm)}] {angle=temp--x2--x3};

  \draw[black] (b0) -- ++(0,-5cm) coordinate[midway] (brace_start);
  \draw[black] (x2) -- ++(0,-5cm) coordinate[midway] (brace_end);
  \draw[<->,black] (brace_start) -- (brace_end) node[black,midway,label={below:$d(\gamma)$},inner sep=0pt] {};
\end{tikzpicture}\hfill
\caption{The fillet generated to connect $x_1$ and $x_3$.
         Note that sets $_0p_i$ and $_1p_i$ are control points that replace $p_i$ in \eqref{equ:bezier_curve} and in analogy to Figure \ref{fig:arc_fillet}, $_0p_0 = x_s$ and $_1p_0 = x_e$.}
\label{fig:bezier_curve}
\end{figure}

\cite{Yang2014} combines two cubic B\'{e}zier curves to generate the curve of the fillet.
In Figure \ref{fig:bezier_curve}, the curves are denoted as $_0P$ and $_1P$ with the connecting lines denoted as $b_2$ and $e_2$.
The fillet distance can now be stated.

\begin{lemma}
  The distance between the intermediary point, $x_2$, and the curve's start and end points can be expressed as
  \begin{equation}
    \label{equ:d_gamma}
    d(\gamma) = \frac{\nu_4 \sin\left(\frac{\gamma}{2}\right)}{\kappa_{max} \cos^2\left(\frac{\gamma}{2}\right)},
  \end{equation}
  where
  \begin{equation*}
  \begin{tabular}{c c c c}
    $\nu_1 = 7.2364$ & $\nu_2 = \frac{2}{5}\left(\sqrt{6} - 1\right)$ & $\nu_3 = \frac{\nu_2 + 4}{\nu_1 + 6}$ & $\nu_4 = \frac{\left(\nu_2 + 4\right)^2}{54 \nu_3}$
  \end{tabular}.
  \end{equation*}
\end{lemma}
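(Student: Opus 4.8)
The plan is to adopt the symmetric double cubic B\'ezier construction of \citep{Yang2014}, in which the fillet curve $\Psi$ is built from two cubic B\'ezier segments ${}_0P$ and ${}_1P$ that meet at their shared endpoint ${}_0p_3 = {}_1p_3$, and then to solve the maximum-curvature constraint for $d(\gamma)$. First I would fix a local frame whose horizontal axis is the bisector of the turn at $x_2$, so that $\overrightarrow{x_2x_1}$ and $\overrightarrow{x_2x_3}$ make angles $\pm(\pi-\gamma)/2$ with it; by the symmetry of the construction it then suffices to analyze the single segment ${}_0P$. I would place its four control points ${}_0p_0,\dots,{}_0p_3$ along and off the line $\overline{x_1 x_2}$, with offsets given by the weights $h$, $g$, and $k$, each a fixed multiple of $d(\gamma)$. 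The constants $\nu_1$ and $\nu_2$ are precisely the ratios that pin down these weights: $\nu_2$ (hence the $\sqrt 6$) is the admissible root of the quadratic that enforces $\cont{2}$ continuity at the join, and $\nu_1 = 7.2364$ is the numerically obtained value that makes the curvature profile unimodal with its maximum at the join. I would take these two values as given from \citep{Yang2014} rather than re-derive them.

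With the control points in hand, I would compute the signed curvature of the cubic from $\kappa(\tau) = \bigl(P_n'(\tau)\times P_n''(\tau)\bigr)/\lVert P_n'(\tau)\rVert^3$, using $P_n$ from \eqref{equ:bezier_curve} with $n=3$ and the Bernstein polynomials \eqref{equ:berstein_polynomials}. Differentiating a cubic B\'ezier reduces $P_3'$ and $P_3''$ to finite differences of the control points, so the cross product and norm become explicit polynomials in $\tau$ whose coefficients are products of the weights and of $\sin(\gamma/2)$, $\cos(\gamma/2)$ coming from the frame geometry. I would then evaluate $\kappa$ at the join $\tau = 1$, where the construction places the curvature maximum; after substituting the weight ratios fixed by $\nu_1,\nu_2,\nu_3$ and collecting the remaining scalar into $\nu_4$, this yields $\kappa_{max} = \nu_4\sin(\gamma/2)\big/\bigl(d(\gamma)\cos^2(\gamma/2)\bigr)$. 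Solving this identity for $d(\gamma)$ gives \eqref{equ:d_gamma}.

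The main obstacle is the curvature computation itself: establishing that the maximum of $\kappa(\tau)$ over $\tau\in[0,1]$ is attained exactly at the join $\tau=1$ (so that bounding the curvature there bounds it everywhere on the fillet), and then carrying the cross-product algebra through to the compact form in which everything except $\nu_4$, $\sin(\gamma/2)$, $\cos^2(\gamma/2)$, and $d(\gamma)$ cancels. Verifying unimodality is what forces the specific value $\nu_1 = 7.2364$; since that value is determined numerically in \citep{Yang2014}, I would cite it and concentrate the write-up on the endpoint curvature evaluation and the final inversion for $d(\gamma)$, which together constitute the genuinely new bookkeeping needed here.
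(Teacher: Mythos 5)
Your proposal is correct and takes the same route as the paper: the paper's entire proof of this lemma is the single line ``See Section 2.1 of \citep{Yang2014},'' and your sketch is a faithful reconstruction of exactly that reference's derivation (symmetric double-cubic B\'ezier construction, curvature $\kappa(\tau)$ of the cubic via the standard cross-product formula, maximum attained at the join $\tau=1$, then inversion of $\kappa_{max} = \nu_4\sin(\gamma/2)\bigl(d(\gamma)\cos^2(\gamma/2)\bigr)^{-1}$ for $d(\gamma)$), with the constants $\nu_1,\nu_2$ cited rather than re-derived just as the paper does. The only difference is that you supply the detail the paper delegates to the citation.
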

\begin{proof}
See Section 2.1 of \citep{Yang2014}.
\end{proof}

To define the fillet curve, four control points are needed for each of the two B\'{e}zier curves.
The control points for $_0P$ and $_1P$ are denoted as $_0p_i$ and $_1p_i$, respectively, and can be expressed as
\begin{equation}
  \begin{tabular}{l l}
    $_0p_0 = \;x_2     + d \cdot u_{12}$ & $_1p_0 = \;x_2     + d \cdot u_{32}$ \\
    $_0p_1 = {_0p_0} - g \cdot u_{12}$ & $_1p_1 = {_1p_0} - g \cdot u_{32}$ \\
    $_0p_2 = {_0p_1} - h \cdot u_{12}$ & $_1p_2 = {_1p_1} - h \cdot u_{32}$ \\
    $_0p_3 = {_0p_2} + k \cdot u_d$    & $_1p_3 = {_1p_2} - k \cdot u_d$
  \end{tabular},
\end{equation}
\noindent
where $u_d$ is the unit vector pointing from ${_0p_2}$ to ${_1p_2}$ and the weights, $h$, $g$, and $k$ are defined as
\begin{equation}
\begin{tabular}{c c c}
  $h = \nu_3 d$ & $g = \nu_2 \nu_3 d$ & $k = \frac{6 \nu_3 \cos\left(\frac{\gamma}{2}\right)}{\nu_2 + 4}d$
\end{tabular}.
\end{equation}



A procedure that differs from what is found in \citep{Yang2014} is now defined to generate curves.
In \citep{Yang2014}, a maximum curve angle, $\gamma_{max}$, is employed with an associated distance $d_{min} = d(\gamma_{max})$.
To ensure subsequent fillet curves do not overlap, connecting points are forced to be $2d_{min}$ apart.
We found the $2d_{min}$ node separation to be overly restrictive as small path refinements are not allowed under such a constraint.
These small path refinements prove necessary, especially around curves in the obstacles.
This limitation significantly reduces the ability to rewire, during which small refinements to the tree are common.

The underlying desired constraint enforced with the $2d_{min}$ spacing is path continuity. The conditions in \eqref{equ:conditions} give the path generation process more flexibility. An example is given in Figure \ref{fig:spline_reach} showing the points that could be considered under both the conditions in \eqref{equ:conditions} and under a $\gamma_{max}$ and $2d_{min}$ constraint. The curve generating procedure can now be stated.


\begin{procedure}[$X_{fillet} \leftarrow BezierFillet(x_0,x_1,x_2,x_3)$]
  $BezierFillet$ uses the cubic B\'ezier spline to generate a $\cont{2}$ continuous curve from $x_1$ to $x_3$ using \eqref{equ:bezier_curve} through \eqref{equ:d_gamma}.
  Feasibility checks are made with respect to the curve that starts at $x_0$ and goes to $x_2$ as is described by the combination of \eqref{equ:conditions} and \eqref{equ:d_gamma}.
  If the checks fail the null path is returned.
\end{procedure}

\subsection{A Comparison of Arc and B\'ezier Fillets}
Arc and B\'ezier fillets provide different advantages and disadvantages.
A big advantage of the arc-fillet is its simplicity and speed.
As can be seen in Table \ref{fig:edge_benchmark}, an arc-fillet can be generated in about half the time it takes to make a B\'ezier-fillet.
Another benefit of the arc-fillet is that it is less constrained than the B\'ezier-fillet, resulting in a larger reachability set for connecting points, as shown in Figure \ref{fig:fillet_reach}.
This fact is critical to RRT because it directly affects exploration and convergence.
\begin{table}[t]
  \centering
  \caption{Three points, $x_1,x_2,x_3$, were randomly sampled 1 million times with each $x,y$ component bounded between 0 and 10 at each sample.
           For Dubin's paths, the orientation of a point was set to be tangential to the vector pointing to it from its parent.
           The average length of the resulting paths and time it took to find them is presented with their respective standard deviations.}
  \label{fig:edge_benchmark}
  \begin{tabular}{|c|c|c|}
    \hline
    Motion Primitive & Length ($m$) & Computation Time ($\mu s$) \\
    \hline
    Straight-line    & $20.845 \pm 7.417$ & $\,\,\, 3.371 \pm \,\,\, 1.542$ \\
    \hline
    Dubin's path     & $23.797 \pm 7.729$ & $19.087 \pm \,\,\, 8.670$ \\
    \hline
    Arc-fillet       & $18.838 \pm 6.811$ & $\,\,\, 7.965 \pm \,\,\, 2.762$ \\
    \hline
    B\'ezier-fillet  & $18.909 \pm 6.242$ & $15.568 \pm 12.517$ \\
    \hline
  \end{tabular}
\end{table}

\begin{figure*}[ht]
  \centering
  \begin{raggedright}
    \begin{subfigure}[t]{0.48\linewidth}
      \centering
      \resizebox{0.9 \linewidth}{!}{
        \begin{tikzpicture}
          \begin{axis}[
            ymin=-0.75,
            ymax=6.5,
            xmin=-4.5,
            xmax=4.5,
            ylabel near ticks,
            xlabel near ticks,
            ylabel={$Y(m)$},
            xlabel={$X(m)$},
            axis on top=true,
            ]

            \addplot[color=green, fill] (5,7) -- (5,-2) -- (-5,-2) -- (-5,7) -- (5,7);

            \addplot[color=yellow, fill] table [x=x,y=y, col sep=comma] {sim_data/reachability/other_bezier_constraint_a.csv};
            \addplot[color=yellow, fill] table [x=x,y=y, col sep=comma] {sim_data/reachability/other_bezier_constraint_b.csv} -- (5,-2) -- (-5,-2)\closedcycle;

            \addplot[color=red, fill] table [x=x,y=y, col sep=comma] {sim_data/reachability/bezier_constraint_a.csv}\closedcycle;
            \addplot[color=red, fill] table [x=x,y=y, col sep=comma] {sim_data/reachability/bezier_constraint_b.csv} -- (5,-2) -- (-5,-2)\closedcycle;

            \node[label={right:{$x_1$}},inner sep=0pt](x1) at (0,0) {};
            \node[label={345:{$x_2$}},inner sep=0pt](x2) at (0,3) {};

            \draw[black,thick] (x1) -- (x2);
            \filldraw[black] (x1) circle [radius=0.75pt];
            \filldraw[black] (x2) circle [radius=0.75pt];
          \end{axis}
        \end{tikzpicture}
      }
      \caption{Comparison of the constraints in \citep{Yang2014} to those found in \eqref{equ:conditions} for B\'ezier curve fillets.
               The red shows the area that both sets of feasibility conditions deem invalid for $x_3$.
               The yellow is area that only \eqref{equ:conditions} deems valid.
               The green is area that both sets of conditions deem valid.}
      \label{fig:spline_reach}
    \end{subfigure}
  \end{raggedright}
  \begin{raggedleft}
    \begin{subfigure}[t]{0.48\linewidth}
      \centering
      \resizebox{0.9 \linewidth}{!}{
        \begin{tikzpicture}
          \begin{axis}[
            ymin=-0.75,
            ymax=6.5,
            xmin=-4.5,
            xmax=4.5,
            ylabel near ticks,
            xlabel near ticks,
            ylabel={$Y(m)$},
            xlabel={$X(m)$},
            axis on top=true,
            ]

            \addplot[color=green, fill] (5,7) -- (5,-2) -- (-5,-2) -- (-5,7) -- (5,7);

            \addplot[color=yellow, fill] table [x=x,y=y, col sep=comma] {sim_data/reachability/bezier_constraint_a.csv};
            \addplot[color=yellow, fill] table [x=x,y=y, col sep=comma] {sim_data/reachability/bezier_constraint_b.csv} -- (5,-2) -- (-5,-2)\closedcycle;

            \addplot[color=red, fill] table [x=x,y=y, col sep=comma] {sim_data/reachability/arc_constraint_a.csv}\closedcycle;
            \addplot[color=red, fill] table [x=x,y=y, col sep=comma] {sim_data/reachability/arc_constraint_b.csv};

            \node[label={right:{$x_1$}},inner sep=0pt](x1) at (0,0) {};
            \node[label={345:{$x_2$}},inner sep=0pt](x2) at (0,3) {};

            \draw[black,thick] (x1) -- (x2);
            \filldraw[black] (x1) circle [radius=0.75pt];
            \filldraw[black] (x2) circle [radius=0.75pt];
          \end{axis}
        \end{tikzpicture}
      }
      \caption{Visualizes the reachability regions of the arc-fillet and B\'ezier-fillet generation.
               The red shows the area that both fillets cannot reach.
               The yellow is area that arc-fillets can reach but B\'ezier-fillets cannot.
               The green is area that both fillets can reach.}
      \label{fig:fillet_reach}
    \end{subfigure}
  \end{raggedleft}
  \caption{Comparisons of different constraints on the position of $x_3$ if a fillet was made starting from $x_1$ through $x_2$ and to $x_3$.
    On the left, the constraints in \citep{Yang2014} are compared to those found in \eqref{equ:conditions} for the B\'ezier curve fillets.
    On the right, the difference in the definition of $d(\gamma)$ between arc-fillets and B\'ezier-fillets is expressed in terms of where \eqref{equ:conditions} is satisfied.
    Both figures assume $x_1$ is at the origin, $x_2 = \protect\begin{bmatrix} 0 & 3 \protect\end{bmatrix}^T$, $\kappa_{max} = 2m^{-1}$, $d(\gamma_1) = 0$, $d_{min} = 1.5m$, and $\gamma_{max} = 0.624\pi$ radians.
  }
    \label{fig:reachability}
\end{figure*}

The major advantage of the B\'ezier-fillet is the smoothness of the resulting path.
The arc-fillet, like Dubin's paths, guarantees only $\cont{1}$ continuity of pose.
The B\'ezir-fillet guaranties $\cont{2}$ continuity of pose and $\cont{1}$ continuity of curvature, as is shown in Figure \ref{fig:fillet_curvature}.
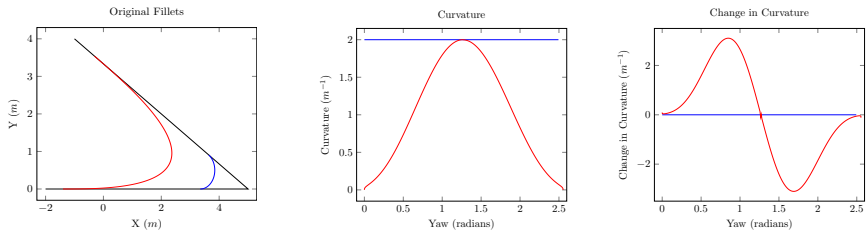
\begin{figure*}[ht]
  \centering
  \begin{subfigure}[t]{0.32\linewidth}
    \resizebox{0.9 \linewidth}{3.05cm}{
    \begin{tikzpicture}
      \begin{axis}[
          ymin=-0.3,
          ymax=4.3,
          xmin=-2.3,
          xmax=5.3,
          ylabel={Y ($m$)},
          xlabel={X ($m$)},
          title={Original Fillets},
          ylabel near ticks,
          xlabel near ticks,
          legend style={nodes={scale=1, transform shape},at={(1,1)},anchor=north east}
      ]
        \addplot[black] plot coordinates { (-2,0) (5,0) (-1,4) };
        \addplot[color=blue,thick] table [x=X,y=Y, col sep=comma] {sim_data/arc_fillet.csv};
        \addplot[color=red, thick] table [x=X,y=Y, col sep=comma] {sim_data/bezier_fillet.csv};
      \end{axis}
    \end{tikzpicture}
    }
    \caption{Fillets generated between the points $\begin{bmatrix} -2 & 0 \end{bmatrix}^T$\!, $\begin{bmatrix} 5 & 0 \end{bmatrix}^T$\!, and $\begin{bmatrix} -1 & 4 \end{bmatrix}^T$\!.}
  \end{subfigure}
  \begin{subfigure}[t]{0.32\linewidth}
    \centering
    \resizebox{0.9 \linewidth}{!}{
    \begin{tikzpicture}
      \begin{axis}[
          ymin=-0.15,
          ymax=2.15,
          xmin=-0.1,
          xmax=2.6,
          ylabel={Curvature ($m^{-1}$)},
          xlabel={Yaw (radians)},
          title={Curvature},
          ylabel near ticks,
          xlabel near ticks,
          legend style={nodes={scale=1, transform shape},at={(1,1)},anchor=north east}
      ]
        \addplot[color=blue,thick] table [x=YAW, y=Curvature, col sep=comma] {sim_data/arc_fillet.csv};
        \addplot[color=red, thick] table [x=YAW, y=Curvature, col sep=comma] {sim_data/bezier_fillet.csv};
      \end{axis}
    \end{tikzpicture}
    }
    \caption{The curvature of each fillet over the fillet curve.}
  \end{subfigure}
  \begin{subfigure}[t]{0.32\linewidth}
    \centering
    \resizebox{0.9 \linewidth}{!}{
    \begin{tikzpicture}
      \begin{axis}[
          ymin=-3.5,
          ymax=3.5,
          xmin=-0.1,
          xmax=2.6,
          ylabel={Change in Curvature ($m^{-1}$)},
          xlabel={Yaw (radians)},
          title={Change in Curvature},
          ylabel near ticks,
          xlabel near ticks,
          legend style={nodes={scale=1, transform shape},at={(1,1)},anchor=north east}
      ]
        \addplot[color=blue,thick] table [x=YAW, y=dCurvature, col sep=comma] {sim_data/arc_fillet.csv};
        \addplot[color=red, thick] table [x=YAW, y=dCurvature, col sep=comma] {sim_data/bezier_fillet.csv};
      \end{axis}
    \end{tikzpicture}
    }
    \caption{The first derivative of the curvature of each fillet type.}
  \end{subfigure}
  \caption{An example of the arc and B\'{e}zier fillets with the corresponding curvature and curvature rate. The arc-fillet is shown in blue and the B\'ezier-fillet is shown in red. Note that the arc has zero curvature change as there is an instantaneous jump from zero to maximum curvature.}
  \label{fig:fillet_curvature}
\end{figure*}



  \section{Fillet-based RRT*}
  \label{sec:fillet-rrt-star}
%

Fillet-based variants of the RRT and RRT* algorithms are proposed in this section.
The overall structure of the fillet-based variants is the same as their standard counterparts.
This section will only cover the procedures that change, namely: $Initialize$, $CostToCome$, $Extend$, $Extend^*\!$, and $Rewire$.
To construct the fillet-based variants, each of these procedures is redefined with a procedure of the same name but with the prefix \enquote{$FB$}, i.e. $Extend$ becomes $FB\text{-}Extend$.
The first four procedures have small changes to their standard counterparts and are presented first.
The $FB\text{-}Rewire$ procedure is then discussed in detail.
This section ends with a discussion of FB-RRT*'s advantages.
The FB-RRT* algorithm is not presented until Section \ref{section:improving_convergence} where a modified sampling procedure is discussed.

\subsection{Procedures with Minor Changes}
\label{sec:procedures_with_minor_changes}
The first procedure to be modified is the $Initialize$ procedure.
As defined, the $Initialize$ procedure has no consideration of the vehicle orientation, which must be respected to generate an executable path for the vehicle.
$FB\text{-}Initialize$ differs from $Initialize$ in that it initializes the search tree to have an edge of length $d_{init} \in \mathbb{R}_+$ extending from the root, $x_{r}$, to a point in the direction of the initial orientation of the robot, $\psi_{r} \in [-\pi,\pi)$.
Recall that connecting to a new point using a fillet requires both a parent and a grandparent node.
If $x_{r}$ is returned from $Nearest$ or $Near$ any attempt to make an edge with $x_{r}$ is ignored as it has no parent, ensuring that the starting orientation is respected\footnote{In our implementation, $x_{r}$ is left out of the search in $Nearest$ and $Near$.}.
The new initialization procedure is defined as follows.
\begin{procedure}[$T \leftarrow FB\text{-}Initialize_{d_{init}}(x_{r},\psi_{r})$]
  Returns a tree with two nodes and an edge of length $d_{init}$ based on the root node, $x_r$, and initial orientation, $\psi_r$, as defined in Algorithm \ref{alg:fb_initialize}.
\end{procedure}

\begin{algorithm}[t]
  \caption{\hbox{$T \leftarrow FB\text{-}Initialize_{d_{init}}(x_{r},\psi_{r})$}}
  \label{alg:fb_initialize}
  \begin{algorithmic}[1]
    \State $x_{n} \leftarrow x_{r} + d_{init} \begin{bmatrix} \cos\left(\psi_{r}\right) & \sin\left(\psi_{r}\right) \end{bmatrix}^T$
    \State $V \leftarrow \{x_{r},x_n\}$
    \State $E \leftarrow \{(x_{r},x_n)\}$
    \State $T \leftarrow \{V,E\}$
    \State \Return $T$
  \end{algorithmic}
\end{algorithm}

The $CostToCome$ procedure is redefined to accommodate the new fillet-based path generation.
Recalling Lemma \ref{lem:recursive_length}, this depends upon both the parent and grandparent nodes of the node in question.
The new procedure is given as follows.
\begin{procedure}[$c_{n} \leftarrow FB\text{-}CostToCome(x_{n},x_{p},x_{gp},T)$]
  Calculates the cost of $x_{n}$ if it is connected to the tree, T, through the parent, $x_{p}$, and the grandparent, $x_{gp}$, as in Algorithm \ref{alg:fillet_cost_to_come}.
  Note that the $Fillet$ procedure in Algorithm \ref{alg:fillet_cost_to_come} can be replaced by $ArcFillet$ or $BezierFillet$ depending upon the fillet being used.
\end{procedure}

\begin{algorithm}[t]
  \caption{\hbox{$c_n \leftarrow FB\text{-}CostToCome(x_{n},x_{p},x_{gp},T)$}}
  \label{alg:fillet_cost_to_come}
  \begin{algorithmic}[1]
    \State $X_{fillet} \leftarrow Fillet(\Revision{Parent(x_{gp},T)},x_{gp},x_{p},x_{n})$
    \If{$X_{fillet} \neq \varnothing \And CollisionFree(X_{fillet})$}
      \State \Return $Cost(x_{p},T) + c(X_{fillet}) - \left\|x_p - x_{gp}\right\|$ \Comment{Path length calculation}
    \Else
      \State \Return $\infty$
    \EndIf
  \end{algorithmic}
\end{algorithm}

The $Extend$ procedure is updated in two substantial ways.
The first is the use of $FB\text{-}CostToCome$.
The second is the use of a \enquote{node orientation} when performing the nearest neighbor search. By incorporating a sense of \enquote{node orientation}, infeasible sharp turns can be avoided in the nearest neighbor searching.
There is no actual \enquote{orientation} of a node as the nodes are 2D points that guide the creation of the path.
However, we can bias the nearest neighbor search to penalize turns by noting that a fillet ending at a node will be oriented with the line extending from the node's parent to the node.
After the fillet curve has been executed, the robot will be aligned with that orientation.
Given node $x_i$ and its parent $x_{i-1}$, the orientation of $x_i$ is defined as $\psi_i = atan2(u_{i-1\mbox{ }i,2},u_{i-1\mbox{ }i,1})$.
The nearest neighbor search is performed over $\begin{bmatrix} x_{i,1} & x_{i,2} & \cos(\psi_i) & \sin(\psi_i) \end{bmatrix}^T$ instead of $\begin{bmatrix} x_{i,1} & x_{i,2} \end{bmatrix}^T$.
The inclusion of a pseudo ``node orientation'' combined with the relaxed continuity constraints, depicted in Figure \ref{fig:spline_reach}, enables us to forgo the \mbox{$k$-nearest-neighbor} search that \cite{spline_rrt*} uses to aid in the success of $Extend$.
The updated procedure is now defined.

\begin{procedure}[$\{x_{n},x_{p},c_n\} \leftarrow FB\text{-}Extend(x_{rand},T)$]
  Given $x_{rand} \in X$ and a tree $T$, the $FB\text{-}Extend$ procedure finds the closest vertex to $x_{rand}$ in terms of a combined position and orientation metric and attempts to extend the tree in the direction of $x_{rand}$, as defined in Algorithm \ref{alg:fillet_extend}.
\end{procedure}

\begin{algorithm}[t]
  \caption{\hbox{$\{x_{n},x_{p},c_n\} \leftarrow FB\text{-}Extend(x_{rand},T)$}}
  \label{alg:fillet_extend}
  \begin{algorithmic}[1]
    \State $x_{nearest} \leftarrow Nearest(x_{rand},T)$
    \State $x_{n} \leftarrow Steer_\eta(x_{nearest},x_{rand})$ \Comment{Steer towards the nearest point}
    \State $x_{gp} \leftarrow Parent(x_{nearest},T)$
    \State $c_{n} \leftarrow$ $FB\text{-}CostToCome(x_{n},x_{nearest},x_{gp},T)$ \Comment{Evaluate cost of path}
    \If{$\infty \neq c_{n}$}
      \State \Return $\{x_{n},x_{nearest},c_{n}\}$
    \EndIf
    \State \Return $\{\varnothing,\varnothing,\inf\}$
  \end{algorithmic}
\end{algorithm}

The $FB\text{-}Extend^*$ procedure is identical to $Extend^*$ except for the use of $FB\text{-}Extend$ and $FB\text{-}CostToCome$.
The $FB\text{-}Extend$ and $FB\text{-}Extend^*$ procedures are illustrated in Figure \ref{fig:fillet_extend}.
\Revision{
Note that the numbers shown in Figure \ref{fig:fillet_extend} are the edge costs of the edges they are near.
The same is true of Figures \ref{fig:extend}, \ref{fig:optimal_extend}, and \ref{fig:rewire} except Figure \ref{fig:fillet_extend} uses the fillet cost calculation described in Lemma \ref{lem:recursive_length}.
}
\begin{procedure}[$\{x_{n},x_{p}\} \leftarrow FB\text{-}Extend^*\!(x_{rand},T)$]
  Given $x_{rand} \in X$ and a tree $T = \{V,E\}$, the $FB\text{-}Extend^*$ uses $FB\text{-}Extend$ to find a node for extending the tree and then finds the locally optimal path in $V$ for connecting to the new point. The procedure is given in Algorithm \ref{alg:fillet_optimal_extend}.
\end{procedure}
\begin{algorithm}[t]
  \caption{\hbox{$\{x_{n},x_{p}\} \leftarrow FB\text{-}Extend^*\!(x_{rand},T)$}}
  \label{alg:fillet_optimal_extend}
  \begin{algorithmic}[1]
    \State $\{x_{n},x_{p},c_{min}\} \leftarrow FB\text{-}Extend(x_{rand},T)$
    \If{$x_{n} \neq \varnothing$} \Comment{If an extension is possible}
      \State $X_{near} \leftarrow Near_{\rho,\alpha}(x_{n},T)$
      \ForAll{$x_{near} \in X_{near}$} \Comment{Check for a lower cost connection}
        \State $x_{gp} \leftarrow Parent(x_{near},T)$
        \State $c_{tmp} \leftarrow FB\text{-}CostToCome(x_{n},x_{near},x_{gp},\!T)$
        \If{$c_{tmp} < c_{min}$}
          \State $x_{p} \leftarrow x_{near}$
          \State $c_{min} \leftarrow c_{tmp}$
        \EndIf
      \EndFor
      \State \Return $\{x_{n},x_{p}\}$
    \EndIf
    \State \Return $\{\varnothing,\varnothing\}$;
  \end{algorithmic}
\end{algorithm}

\input{tikz_pics/fillet_rewire.tex}

\subsection{The Fillet-based Rewire Procedure}
\label{sec:fb_rewire}
In the FB-RRT* framework, care must be taken to ensure both path feasibility and cost improvement when rewiring.
Unlike its straight-line counterpart, it is not sufficient to choose a parent based purely on the path length to the node.
The following lemma presents a set of sufficient conditions to ensure that a rewiring will not be detrimental to the tree.
\begin{lemma}
  \label{lem:fb-rewire}
  Assume that a tree $T=\{V,E\}$ is given such that \eqref{equ:conditions} is satisfied for all consecutive nodes.
  Rewiring $E$ to make $x_n \in V$ the new parent of $x_{near}\in V$ will result in a continuous path with all node costs unchanged or lowered if the following three conditions are met:
  \begin{enumerate}
    \item \label{rewA} The resulting path to $x_{near}$ using $x_n$ as its parent is obstacle free, does not violate \eqref{equ:conditions}, and the cost of $x_{near}$ is improved, see Figure \ref{fig:fillet_rewire:sub3}.
    \item \label{rewB} The resulting path to each child of $x_{near}$ is obstacle free, does not violate \eqref{equ:conditions}, and the cost of the child is not increased, see Figure \ref{fig:fillet_rewire:sub4}.
    \item \label{rewC} The resulting path to each grandchild of $x_{near}$ does not violate \eqref{equ:conditions}, see Figure \ref{fig:fillet_rewire:sub5}.
  \end{enumerate}
\end{lemma}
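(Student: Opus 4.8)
The plan is to prove the two conclusions --- continuity of the rewired path and the non-increasing behaviour of every node cost --- separately, after first isolating precisely which fillets and which costs the rewiring can disturb. Let $x_{gp}$ denote the parent of $x_n$, so that after rewiring the relevant portion of the tree reads $x_{gp}\to x_n\to x_{near}\to x_c\to x_{gc}$ for each child $x_c$ and grandchild $x_{gc}$ of $x_{near}$. The governing observation is that switching the single parent pointer of $x_{near}$ perturbs the path geometry only locally: it replaces the incoming direction $\overrightarrow{x_n x_{near}}$ (hence changes $\gamma$ measured at $x_{near}$), and it introduces one new fillet centred at $x_n$ aimed at $x_{near}$; the positions of $x_{near}$, of its children, and of all deeper descendants are untouched, so every angle at or below the grandchildren is unchanged.

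For continuity I would walk the changed angles through the feasibility conditions \eqref{equ:conditions} and check that each surviving condition is covered by exactly one hypothesis. The new fillet at $x_n$ contributes the two inequalities of \eqref{equ:conditions} with $x_n$ as the middle node, which are exactly what condition \ref{rewA} (\enquote{path to $x_{near}$}) verifies. The altered angle at $x_{near}$ enters \eqref{equ:conditions} with $x_{near}$ as the middle node, once per child $x_c$, and these are verified by condition \ref{rewB} (\enquote{path to each child}). Finally, $d(\gamma)$ at $x_{near}$ also appears in the second inequality of \eqref{equ:conditions} with a child $x_c$ as the middle node --- this is the condition linking the segment $\overline{x_{near}x_c}$ to the fillet at $x_c$, and it therefore requires a grandchild to exist --- and it is precisely condition \ref{rewC} (\enquote{path to each grandchild}). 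The first inequality of \eqref{equ:conditions} at $x_c$ does not involve $x_{near}$'s angle and so is inherited unchanged, and every condition strictly below the grandchildren involves only unchanged angles and thus still holds by the hypothesis on $T$. This exhausts the conditions of \eqref{equ:conditions}, giving continuity.

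For the cost claim I would first use Corollary \ref{cor:no_descendant} to argue that only nodes in the subtree rooted at $x_{near}$ can have their cost altered, since the root-to-node path of any other vertex is structurally unchanged. Within that subtree, $x_{near}$ itself has strictly smaller cost by \ref{rewA}, and each child $x_c$ has non-increased cost by \ref{rewB}. For every deeper descendant $w$ (grandchild or below), both edges preceding $w$ lie in the untouched part of the subtree, so by Lemma \ref{lem:recursive_length} --- equivalently Corollary \ref{cor:unchanged} --- the quantities $\F$ and $\left\|\cdot\right\|$ appearing in \eqref{eq:recursive_fillet_length} are unchanged and the cost change of $w$ equals that of its parent. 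A short induction down the subtree, seeded by the grandchildren whose parents are the children covered by \ref{rewB}, then propagates the non-increase to all descendants.

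The step I expect to be the main obstacle is the bookkeeping in the continuity argument: one must argue rigorously that the perturbation \enquote{terminates} at the grandchildren, i.e.\ that no condition of \eqref{equ:conditions} beyond those three levels depends on a changed quantity, while simultaneously accounting for the fact that $x_{near}$ (and $x_n$) may carry several children, so that conditions \ref{rewB} and \ref{rewC} must be read as quantified over \emph{all} children and grandchildren. Getting the indexing of which $\gamma$ enters which inequality of \eqref{equ:conditions} exactly right --- and confirming that the leaf case, where a child has no grandchild, is correctly handled by \ref{rewB} alone --- is where the care is needed; the cost half is then essentially mechanical given Lemma \ref{lem:recursive_length} and Corollary \ref{cor:unchanged}.
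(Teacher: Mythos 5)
Your proposal is correct and follows essentially the same route as the paper's proof: localize the perturbation to the fillets at $x_n$, at $x_{near}$, and the non-overlap constraints at $x_{near}$'s children, map these to conditions \ref{rewA}--\ref{rewC}, and then settle the cost claim via condition \ref{rewA} for $x_{near}$, condition \ref{rewB} for the children, and Corollary \ref{cor:unchanged} propagated down the subtree for all deeper descendants. The paper's own proof is far terser, leaving implicit the inequality-by-inequality bookkeeping and the leaf-case handling that you spell out, but there is no substantive difference in approach.
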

\begin{proof}
  The only paths that will be affected by changing the parent of $x_{near}$ will be the fillet connecting $x_{near}$ to its grandparent and the fillets connecting $x_{n}$ to the children of $x_{near}$.
  Conditions \ref{rewA}, \ref{rewB}, and \ref{rewC} employ obstacle checking and \eqref{equ:conditions} to ensure that fillet curves do not overlap in the new section of path nor with the preceding or subsequent sections of the path.

  The path cost to $x_{near}$ will be improved due to \ref{rewA}.
  The path costs to the children are not increased per \ref{rewB}.
  As all other parent and grandparent nodes remain unchanged, their respective path costs will not increase due to Corollary \ref{cor:unchanged}.
\end{proof}
The $FB\text{-}Rewire$ procedure is now stated and illustrated in Figure \ref{fig:fillet_rewire}.
\begin{procedure}[$E \leftarrow FB\text{-}Rewire(x_{n},X_{near},T)$]
  Given a tree, $T=\{V,E\}$, with node $x_n \in V$ and set $X_{near} \subset V$, $FB\text{-}Rewire$ returns a modified tree with $E$ changed to have $x_n$ be the parent to elements of $X_{near}$ if conditions in Lemma \ref{lem:fb-rewire} are satisfied. The procedure is given in Algorithm \ref{alg:fillet_rewire}.
\end{procedure}

\begin{algorithm}[t]
  \caption{$T \leftarrow FB\text{-}Rewire(x_{n},X_{near},T)$}
  \label{alg:fillet_rewire}
  \begin{algorithmic}[1]
    \ForAll{$x_{near} \in X_{near}$}
      \State $x_{p} \leftarrow Parent(x_{n},T)$
      \State $c_{near} \leftarrow FB\text{-}CostToCome(x_{near},x_{n},x_{p},T)$
      \LineComment{Check $x_{near}$'s feasibility and that its cost is reduced}
      \If{$c_{near} \ge Cost(x_{near},T)$}
        \State \GoTo Continue
      \EndIf
      \ForAll{$x_{c} \in Children(x_{near},T)$}
        \State $c_c \leftarrow FB\text{-}CostToCome(x_{c},x_{near},x_{n},T)$
        \LineComment{Check the feasibility and cost of $x_{near}$'s children}
        \If{$c_c > Cost(x_{c},T)$}
          \State \GoTo Continue
        \EndIf
        \ForAll{$x_{gc} \in Children(x_{c},T)$}
          \State $X_{fillet} \leftarrow Fillet(\Revision{x_n,x_{near},x_{c},x_{gc}})$
          \If{$\varnothing = X_{fillet}$} \Comment{Check the feasibility of $x_{near}$'s grandchildren}
            \State \GoTo Continue
          \EndIf
        \EndFor
      \EndFor
      \State $x_{p} \leftarrow Parent(x_{near},T)$
      \State $E \leftarrow \left(E \setminus \{x_{p},x_{near}\} \right) \cup \{x_{n},x_{near}\}$ \Comment{Rewire around $x_{near}$}
      \State Continue:\:
    \EndFor
    \State \Return $T$
  \end{algorithmic}
\end{algorithm}

Note that the $Rewire$ procedure described above is different than that in \citep{spline_rrt*}.
In \citep{spline_rrt*}, the neighborhood set of $x_{n}$, $X_{near}$, is checked to ensure that:
\begin{enumerate}
  \item[a.] Connecting $x_{n}$ and $x_{near}$ will not violate their max angle and distance conditions.
  \item[b.] The curve formed between $x_{n}$ and $x_{near}$ is obstacle free.
  \item[c.] The cost of $x_{near}$ will be improved by the rewire operation.
\end{enumerate}
Thus, condition \ref{rewA} is met (with a conservative continuity condition), but conditions \ref{rewB} and \ref{rewC} are not considered.
If \cite{spline_rrt*}'s conditions hold, the children of $x_{near}$ are set to be the children of the parent of $x_{near}$ and the parent of $x_{near}$ is set to be $x_{n}$, see Figure \ref{fig:spline_rewire_counter:sub2}.
It is important to note that the $Rewire$ operation in \citep{spline_rrt*} could result in discontinuous paths due to not checking feasibility for all affected nodes.
There is no guarantee that connecting the parent of $x_{near}$ directly to the children of $x_{near}$ will result in a valid tree.
The angles and distances formed by that connection must first be checked as illustrated in Figure \ref{fig:spline_rewire_counter}.
Moreover, due to not checking costs on all affected nodes, the $Rewire$ operation in \citep{spline_rrt*} may actually increase costs to some nodes as shown in Lemma \ref{lem:rewiring_tests}.

\tikzstyle{node}     =[circle,draw=black!100,thick]
\tikzstyle{new-node} =[circle,draw=red!100,  thick]
\tikzstyle{near-node}=[circle,draw=green!100,thick]
\begin{figure}[h]
  \centering
  \begin{subfigure}[t]{0.47\linewidth}
    \centering
    \resizebox{\linewidth}{!}{
    \begin{tikzpicture}

      \node[node](A) at (0,0) {A};

      \node[node](B) at ($(A)!0.5!(0,-6)$) {B};
      \draw[->,thick](A) -- (B);

      \node[near-node](C) at ($(B)!0.298!(-10,-5)$) {C};
      \draw[->,thick](B) -- (C);

      \node[node](D) at ($(A)!0.3487429!(5,-7)$) {D};
      \draw[->,thick](A) -- (D);

      \node[node](E) at ($(C)!0.465!(-9,-1.1)$) {E};
      \draw[->,thick](C) -- (E);

      \node[new-node](F) at ($(A)!0.344!(-8,-3.5)$) {F};
      \draw[->,thick](A) -- (F);

      \coordinate (temp) at ($(A)!2!(B)$) {};
      \draw pic[draw,angle radius=0.5cm,"$\gamma_a$" shift={(-3mm,-3mm)}] {angle=C--B--temp};

      \draw[color=blue] ([xshift=2.9cm]F) arc (0:35:2.9);
      \draw[color=blue] ([xshift=2.9cm]F) arc (0:-215:2.9);
    \end{tikzpicture}
    }
    \caption{The Node F is being rewired around the preexisting tree, and node C is F's neighborhood set.}
    \label{fig:spline_rewire_counter:sub1}
  \end{subfigure}
  \begin{subfigure}[t]{0.47\linewidth}
    \centering
    \resizebox{\linewidth}{!}{
    \begin{tikzpicture}

      \node[node](A) at (0,0) {A};

      \node[node](B) at ($(A)!0.5!(0,-6)$) {B};
      \draw[->,thick](A) -- (B);

      \node[node](F) at ($(A)!0.344!(-8,-3.5)$) {F};
      \draw[->,thick](A) -- (F);

      \node[node](C) at ($(B)!0.298!(-10,-5)$) {C};
      \draw[->,thick](F) -- (C);

      \node[node](D) at ($(A)!0.3487429!(5,-7)$) {D};
      \draw[->,thick](A) -- (D);

      \node[node](E) at ($(C)!0.465!(-9,-1.1)$) {E};
      \draw[->,thick](B) -- (E);

      \coordinate (temp) at ($(A)!1.2!(B)$) {};
      \draw pic[draw,angle radius=0.5cm,"$\gamma_b$" shift={(-3mm,-3mm)}] {angle=E--B--temp};
    \end{tikzpicture}
    }
    \caption{Node C is rewired to have node F as its parent, and node E becomes a child of node B.}
    \label{fig:spline_rewire_counter:sub2}
  \end{subfigure}
  \caption{After node C has been rewired to node F the angle formed between nodes A, B, and B's child has increased, i.e. $\gamma_b > \gamma_a$.
    Without checking, there is no way to know if $\gamma_b$ is less then the max angle allowed.}
  \label{fig:spline_rewire_counter}
\end{figure}
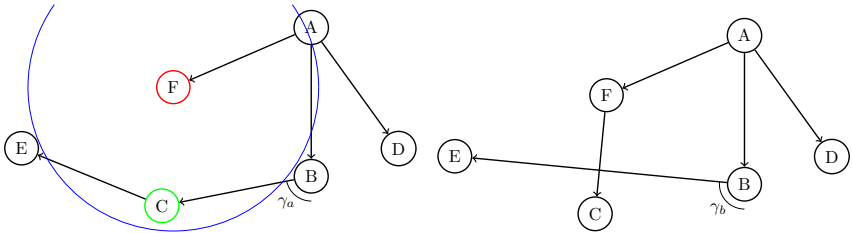

Note that the FB-RRT and FB-RRT* algorithms are not yet stated as additional improvements to the sampling and smoothing are first discussed in the following section.
\Revision{
Also note that reverse motion considerations are discussed in Appendix \ref{sec:reverse_fillet}.
}

  \section{Overcoming the Voronoi Property}
  \label{section:improving_convergence}
  Nonholonomic constraints exacerbate the slow convergence of RRT*.
This section introduces two refinements to \mbox{FB-RRT*} designed to reduce convergence time.
There are multiple variants of RRT* that aim to improve convergence by improving sampling \citep{Akgun2011, Gammell2014, Kuwata2009, Nasir2013}.
Two such variants are used as a basis for design herein: Informed RRT* \mbox{(I-RRT*)} \citep{Gammell2014} and Smart RRT* \mbox{(S-RRT*)} \citep{Nasir2013}.
These algorithms are identical to RRT* before the first solution is found.
However, after the first path to the goal is found, they use information about the solution to guide sampling toward space that will improve the final solution.
In addition, \mbox{S-RRT*} introduces a path smoothing procedure that continuously refines the solution path, further improving convergence.
This section first presents \mbox{I-RRT*} and \mbox{S-RRT*}.
These techniques are then combined in the Fillet-based Smart and Informed RRT* \mbox{(FB-SI-RRT*)} formulation.

\subsection{Informed RRT*}
\label{section:informed_rrt*}
Informed RRT* \mbox{(I-RRT*)} is an extension of RRT* that aims to reduce the amount of time spent sampling space that will not improve the final path.
It is observed that any time spent sampling outside of the set that will improve the path is wasted time. This set is referred to as the ``informed'' set and denoted as $X_{i} \subset X_{free}$.
Naturally, it would be best to directly sample $X_{i}$.
However, calculating $X_{i}$ can be difficult, if not impossible.

A conservative approximation of $X_{i}$, denoted as $X_{i}^\prime$, is defined in \citep{Gammell2014} based upon an approximation of the problem's optimal cost and the best cost found, $c_{best}$. The optimal cost is denoted as $c_{min}$ with its approximation denoted as $c_{min}^\prime$.
The approximation for $c_{min}^\prime$ is calculated as the distance between the root node and final state in the best-found path, i.e. $c_{min}^\prime = \| x_{r} - x_{t} \|$.
$X_i^\prime$ is defined as
\begin{equation}
  X_{i}^\prime = \mathcal{E}_{x_{r},x_t}
\end{equation}
where $\mathcal{E}_{x_{r},x_t}$ is the open set of states in an ellipse with the focal points set at $x_{r}$ and $x_{t}$.
The length of the major axis of the ellipse is $c_{best}$ and the length of the minor axis is $\sqrt{c_{best}^2 - c_{min}^{\prime2}}$, see Figure \ref{fig:ellipse}.
With this definition of $X_{i}^\prime$, it is guaranteed that the best solution found so far is entirely inside the informed set.

\begin{figure}[t]
  \centering
  \begin{tikzpicture}[scale=0.7]
    \draw[color=black,very thick] (0,0) ellipse (4cm and 2cm);

    \node[black,label={above:{$x_{r}$}},inner sep=0pt] (xi) at (-3, 0) {};
    \node[black,label={above:{$x_{t}$}},inner sep=0pt] (xt) at ( 3, 0) {};

    \filldraw[black] (xi) circle(1.5pt);
    \filldraw[black] (xt) circle(1.5pt);

    \draw[black] (xi) -- ++(0,-0.5) coordinate[midway] (brace_start);
    \draw[black] (xt) -- ++(0,-0.5) coordinate[midway] (brace_end);
    \draw[<->,black] (brace_start) -- (brace_end) node[black,midway,label={below:{$c_{min}^\prime$}},inner sep=0pt] {};

    \draw[black] (-4,2) -- ++(0,0.5) coordinate[midway] (brace_start);
    \draw[black] ( 4,2) -- ++(0,0.5) coordinate[midway] (brace_end);
    \draw[<->,black] (brace_start) -- (brace_end) node[black,midway,label={above:{$c_{best}$}},inner sep=0pt] {};

    \draw[black] (-4,-2) -- ++(-0.5,0) coordinate[midway] (brace_start);
    \draw[black] (-4, 2) -- ++(-0.5,0) coordinate[midway] (brace_end);
    \draw[<->,black] (brace_start) -- (brace_end) node[black,midway,label={[rotate=90]above:{$\sqrt{c_{best}^2 - c_{min}^{\prime2}}$}},inner sep=0pt] {};
  \end{tikzpicture}
  \caption{The ellipse that defines $X_i^\prime$.}
  \label{fig:ellipse}
\end{figure}
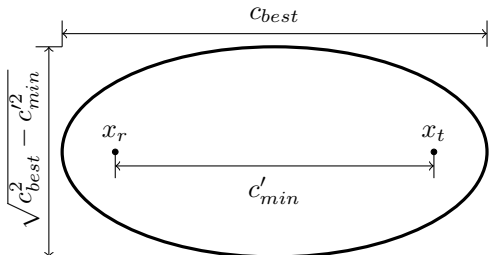
\mbox{I-RRT*} samples solely within $X_{i}^\prime$ once the first solution is found, reducing the configuration space sampled and improving the probability that a given sample will improve the solution.
Note that \mbox{I-RRT*} is identical to RRT* with the exception of using $X_{i}^\prime$ instead of $X_s$ for sampling new points after the first solution is found. 
As long as $c_{min}^\prime \le c_{min}$, it is shown in \citep{Gammell2014} that \mbox{I-RRT*} maintains asymptotic optimality.

In an environment with small convex obstacles, I-RRT* is shown to converge much faster than RRT* \citep{Gammell2014}.
If there are larger nonconvex obstacles then I-RRT* degrades in performance to RRT*.
This is because the optimal cost, $c_{min}$, is much more than $c_{min}^\prime$, causing the informed subset to be very large.

\subsection{Smart RRT*}
\label{section:rrt*_smart}
Smart RRT* \mbox{(S-RRT*)} proposes two alternative approaches to improving convergence \citep{Nasir2013}.
First, similar to \mbox{I-RRT*}, the sampling set is reduced once a solution is found.
Second, \mbox{S-RRT*} proposes a path smoothing procedure that is used as an integral part of the algorithm to further improve convergence.

The sampling set is reduced by biasing sampling around the nodes that form the shortest found path. The idea being that improvements near the path can help to refine the chosen route around obstacles.
These nodes are referred to as the beacon set, $X_b$, and can be defined as
\begin{equation*}
  X_{b} = Solution(x_{t},T)
\end{equation*}
where $x_{t} \in X_{t}$ is the end of the shortest path to the target set.
Subsequent iterations then bias the sampling towards the union of balls of radius $r_b$ around each beacon, i.e.
\begin{equation*}
  X_s = \bigcup_{x_b \in X_b} \mathcal{B}_{x_b,r_b}
\end{equation*}
A major philosophical difference between \mbox{S-RRT*} and \mbox{I-RRT*} is that \mbox{S-RRT*} biases the sampling around the beacon set whereas \mbox{I-RRT*} seeks to reduce the size of the sampling space.
The biasing encourages local path refinement with a continued sampling of $X$ for exploration.

Significant improvements are also made through the addition of a path smoothing procedure, referred to as $OptimizePath$ in \citep{Nasir2013}. $OptimizePath$ seeks to straighten out the path each time a better path is found. This avoids waiting for the sampling to straighten the path, something that becomes decreasingly probable as the number of samples increases due to the Voronoi property.
The $OptimizePath$ procedure does this by performing rewire operations on each beacon with every other beacon, as detailed in Algorithm \ref{alg:optimize_path}.
Note that the $RemoveBetween$ procedure removes all of the nodes in $X_b$ that lie between $x_{near}$ and $x_{ittr}$ not including $x_{near}$ and $x_{ittr}$.
This removes nodes from the solution/beacon set that are not needed.
Which straightens out the solution and reduces the number of beacons that have to be sampled.
\begin{algorithm}[t]
  \caption{$X_{b}^\prime \leftarrow OptimizePath(X_{b},T)$}
  \label{alg:optimize_path}
  \begin{algorithmic}[1]
    \State $X_{b}^\prime \leftarrow \varnothing$ \label{alg:optimize_path:init_output}
    \ForAll{$x_{ittr} \in X_{b}$} \label{alg:optimize_path:first_for}
      \ForAll{$x_{near} \in X_{b} \cap X_{b}^\prime$} \label{alg:optimize_path:second_for} \Comment{Beacons that have not been used}
        \State $T \leftarrow Rewire(x_{ittr},\{x_{near}\},T)$ \label{ln:opt_path_rewire}
        \If{$Parent(x_{near},T) = x_{ittr}$} \Comment{Remove unneeded beacons}
          \State $X_b \leftarrow RemoveBetween(x_{near},x_{ittr},X_b)$
        \EndIf
      \EndFor \label{alg:optimize_path:end_second_for}
      \State $X_{b}^\prime \leftarrow X_{b}^\prime \cup \{x_{ittr}\}$
    \EndFor
    \State \Return $X_{b}^\prime$
  \end{algorithmic}
\end{algorithm}

Before the first solution is found, \mbox{S-RRT*} performs identically to RRT*.
However, after the first solution is found, the convergence of \mbox{S-RRT*} is much faster than RRT*, especially for straight-line motion primitives.
It is important to note that \mbox{S-RRT*} has a tendency to spend more time converging on local minima than RRT*.
In fact, if only the space near the beacon set were to be sampled after the first solution was found, then \mbox{S-RRT*} would lose its asymptotic optimality because it would only refine the first path found.
We found that the beacon radius could be relatively small using straight-line paths, but needed to be increased significantly for curvature constrained paths.

\subsection{Smart and Informed Sampling}
\mbox{I-RRT*} performs well in small path planning problems where the obstacles are convex and $c_{min}^\prime$ is a good approximation of $c_{min}$.
\mbox{S-RRT*} converges impressively fast, especially when planning using straight-line paths.
Both techniques add parameters that need to be determined -- an estimate of the best cost for informed and the beacon size for smart sampling.
This work develops smart-and-informed sampling that combines \mbox{S-RRT*'s} fast convergence with an adaptive sample set similar to \mbox{I-RRT*}.
It is designed specifically for the nonholonomic motion primitives to provide a sampling heuristic that does not require fine tuning of additional parameters.

As is the case with both of its predecessors, the sampling will be identical to RRT* until the first solution is found.
At that point, the $OptimizePath$ procedure, Algorithm \ref{alg:optimize_path}, is called on the initial solution.
Instead of using a constant radius around each beacon \mbox{(S-RRT*)}, or an adaptive set based upon the optimality of the entire path \mbox{(I-RRT*)}, the beacon set, $X_{b} \subset V$, is used to generate ellipses around each adjacent pair of beacons along the path that leads from $x_{r}$ to $X_{t}$, as illustrated in Figure \ref{fig:beacons}.
The axes of the ellipse are adapted based on the local optimality of the path, producing a larger sampling space when path refinement is needed and a smaller space when the local path is near optimal.
Similar to the sampling in \mbox{S-RRT*}, the sampling after the beacons are found is biased towards the set formed by these ellipses,
\begin{equation*}
  \mathcal{E}_{X_b} = \bigcup_{i=0}^{\mid X_b \mid-1} \mathcal{E}_{x_{b,i},x_{b,i+1}} \subset X.
\end{equation*}
This sampling bias encourages path refinement.
The full configuration space is still sampled periodically for sake of exploration.

\newcommand{\makeEllipseForArcs}[4]{%
  \draw[#4,fill=#4,fill opacity=0.1]%
    let \p1=($(#3)-(#2)$),%
        \p2=($(#2)-(#1)$),%
        \n1={atan2(\y1,\x1)},%
        \n2={mod(\n1 - atan2(\y2,\x2) + (4*acos(0)),2*acos(0))},%
        \n3={veclen(\y1,\x1)},%
        \n4={1.5 * ((\n2 * acos(0)) / 180)},%
        \n5={1.5 * (1 - cos(\n2))/sin(\n2)},%
        \n6={\n3 + \n4 - \n5},%
        \n7={\fpeval{sqrt((\n6 * \n6 * (7227/254)) - (\n3 * \n3 * (7227/254)))} * 1pt/1cm}%
    in [rotate around={\n1:(#3)}] ($(#2)!0.5!(#3)$) ellipse (0.5 * \n6 and 0.5 * \n7);%
}

\begin{figure}[t]
  \centering
    \begin{tikzpicture}[scale=0.4]
      \coordinate (x0) at (11,-7);
      \coordinate (x1) at (9,-7);
      \coordinate (x2) at (6, 6);
      \coordinate (x3) at (-7, 5);
      \coordinate (x4) at (-9,-5);

      \makeEllipseForArcs{x0}{x1}{x2}{blue}
      \makeEllipseForArcs{x1}{x2}{x3}{green}
      \makeEllipseForArcs{x2}{x3}{x4}{orange}

      \node[label={below:$x_{b,0}$},inner sep=0pt] at (x0) {};
      \node[label={225:  $x_{b,1}$},inner sep=0pt] at (x1) {};
      \node[label={45:   $x_{b,2}$},inner sep=0pt] at (x2) {};
      \node[label={135:  $x_{b,3}$},inner sep=0pt] at (x3) {};
      \node[label={-45:  $x_{b,4}$},inner sep=0pt] at (x4) {};

      \draw[black] (x0) -- (x1) -- (x2) -- (x3) -- (x4);

      \draw[red,   thick] (x0) -- (x1);
      \draw[blue,  thick,rounded corners=0.75cm] ($(x1) !1.5cm! (x0)$) -- (x1) -- (x2);
      \draw[green, thick,rounded corners=0.75cm] ($(x2) !1.5cm! (x1)$) -- (x2) -- (x3);
      \draw[orange,thick,rounded corners=0.75cm] ($(x3) !1.5cm! (x2)$) -- (x3) -- (x4);

      \foreach \point in {x0,x1,x2,x3,x4}
      {
        \filldraw[black] (\point) circle(0.1);
      }

      \draw[black,line width=0.25cm] (-5,-7.5) -- (5,-7.5) -- (5,4) -- (-5,4) -- cycle;

    \end{tikzpicture}\hfill
  \caption{The four nodes, $x_{b,0}$ through $x_{b,4}$, are connected with arc-fillets and each sampling ellipse, $\mathcal{E}_{x_{b,0},x_{b,1}}$, $\mathcal{E}_{x_{b,1},x_{b,2}}$, $\mathcal{E}_{x_{b,2},x_{b,3}}$, and $\mathcal{E}_{x_{b,3},x_{b,4}}$, is shown in red, blue, green, and orange respectively.
           Note that the volume of $\mathcal{E}_{x_{b,0},x_{b,1}}$ is $0$, because $c_{best,0} = c_{min,0}$.}
  \label{fig:beacons}
\end{figure}
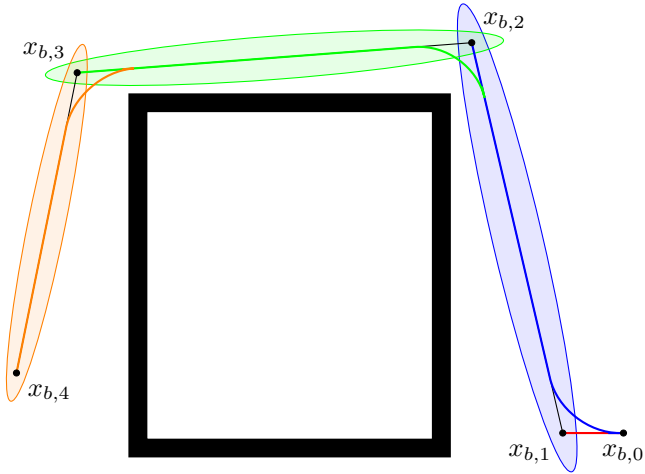

For each beacon ellipse, $c_{min,i}^\prime$ is defined as the distance between the two adjacent beacons that act as the focal points for that ellipse
\begin{equation*}
  c_{min,i}^\prime = \| x_{b,i+1} - x_{b,i} \|
\end{equation*}
and $c_{best,i}$ is the cost differential across the two beacons:
\begin{equation*}
  c_{best,i} = Cost(x_{b,i+1}) - Cost(x_{b,i})
\end{equation*}
Note that for straight-line primitives $c_{best,i} = c_{min,i}^\prime$, causing each ellipse to degenerate to the line between $x_{b,i}$ and $x_{b,i+1}$.
While sampling along this line can be beneficial, we show an example where it slows convergence because it neglects the exploration half of the exploration-exploitation paradigm.
With the fillets, the ellipse rarely degenerates to a straight-line.
Even when it does the $OptimizePath$ procedure removes the redundant intermediary points.
The smart-and-informed sampling procedure is now presented.
\begin{procedure}[$x_{rand} \leftarrow SI\text{-}Sample(i, b_t, b_b, X_b, X_t)$]
  Returns a random point at iteration $i \in \mathbb{Z}_+$ given the sampling biases $b_t \in \mathbb{Z}_+$ and $b_b \in \mathbb{Z}_+$, the beacon set $X_b$, and the target set $X_t$ as described in Algorithm \ref{alg:si_sampling}.
\end{procedure}
The $SI\text{-}Sample$ procedure includes a biasing towards the target set and a random sampling of the configuration space for sake of exploration.
Additionally, as is the case with \mbox{S-RRT*}'s sampling, some percentage of the samples must be drawn from the full configuration space to maintain asymptotic optimality.
This is because SI-RRT* makes no guaranties about its ellipses containing the optimal solution as \mbox{I-RRT*} does.
Sampling within the ellipses of \mbox{SI-RRT*} and the beacons of \mbox{S-RRT*} has the effect of biasing the search to locally refine the current best path.
The sampling of the beacon ellipses works in conjunction with the $OptimizePath$ procedure to seek improvements to the current best path found.
The $OptimizePath$ procedure works to straighten paths while the sampling of the beacon ellipses works to see if local perturbations will improve path length.
\begin{algorithm}[t]
  \caption{$x_{rand} \leftarrow\!SI\text{-}Sample(i,\!b_t,\!b_b,\!X_b,\!X_t)$}
  \label{alg:si_sampling}
  \begin{algorithmic}[1]
    \If{$i \divby b_{t}$} \Comment{Bias towards target set}
      \State $x_{rand} \leftarrow Sample(X_{t})$
    \ElsIf{$X_{b} \neq \varnothing \land i \divby b_{b}$} \Comment{Bias towards beacon ellipses}
      \State $x_{rand} \leftarrow Sample(\mathcal{E}_{X_b})$
    \Else \Comment{Otherwise sample a random point}
      \State $x_{rand} \leftarrow Sample(X)$
    \EndIf
    \State \Return $x_{rand}$
  \end{algorithmic}
\end{algorithm}

\subsection{Fillet-based Smart and Informed RRT*}
With the $SI\text{-}Sample$ procedure in hand, all of the components are in place for presenting the Fillet-Based Smart and Informed RRT* \mbox{(FB-SI-RRT*)} algorithm.
The \mbox{FB-SI-RRT*} algorithm is defined in Algorithm \ref{alg:si_rrt*}.
The straight-line counterpart can be expressed by removing the $FB$ prefix in all of the procedures.
Furthermore, the fillet generation procedure used in $FB\text{-}CostToCome$ can be replaced with either the arc or B\'ezier fillets.

The \mbox{FB-SI-RRT*} algorithm is very similar to the traditional RRT* algorithm as defined in Algorithm \ref{alg:rrt*}.
The major differences are the use of the fillet-based procedures in place of the straight-line counterparts and the $SI\text{-}Sample$ procedure in place of the $Biased\text{-}Sample$.
The path optimization procedure from \mbox{S-RRT*} is also included in \mbox{SI-RRT*} but is absent in traditional RRT*.

The \mbox{FB-SI-RRT*} algorithm begins on line \ref{ln:fb_initialize} by initializing the tree to consider the initial orientation of the vehicle and setting $X_b$ and $c_b$ to the empty set.
As with RRT*, a prefixed number of iterations is specified for refining the path.
Each iteration begins with the sampling of a new point using the $SI\text{-}Sample$ procedure.
This balances biasing towards the target set to find the goal, biasing towards the beacon set to refine the best path found, and sampling from the general obstacle free space for exploration.
The sampled point is then used within $FB\text{-}Extend^*$ in line \ref{ln:fb_extend} to grow the tree in the direction of the new sample while respecting the fillet continuity constraints.
If a connection to the tree is found, a new node is then inserted into the tree on line \ref{ln:add_node}.
The edge set is then rewired around the new node on line \ref{ln:fb_rewire} using the $FB\text{-}Rewire$ to consider the fillet continuity and cost improvement requirements.
If the first path or a shorter path to the target has been found, then the beacon set is updated in line \ref{ln:fb_beacons}.
If $c_b$ has changed, and thus $X_b$ has changed, a $FB\text{-}OptimizePath$ procedure is used on line \ref{ln:fb_opt_path} in an attempt to refine the beacon set.
Note that the $FB\text{-}OptimizePath$ has not been defined, but it can be expressed by changing line \ref{ln:opt_path_rewire} of Algorithm \ref{alg:optimize_path} to use the $FB\text{-}Rewire$ procedure.

\begin{algorithm}[t]
  \caption{\mbox{$X_{sol}\!\leftarrow\!FB\text{-}SI\text{-}RRT^*\!(x_{r},\!\psi_{r},\!X_{t},\!b_{t},\!b_{b},\!n)$}}
  \label{alg:si_rrt*}
  \begin{algorithmic}[1]
    \State $T \leftarrow FB\text{-}Initialize_{d_{init}}(x_{r},\psi_{r})$ \label{ln:fb_initialize}
    \State $X_{b} \leftarrow \varnothing$
    \State $c_b \leftarrow \varnothing$
    \For{$i = 1,\ldots,n$}
      \State $x_{rand} \leftarrow SI\text{-}Sample(i, b_t, b_b, X_b, X_t)$ \label{ln:si_sampling} \Comment{Smart-and-informed sampling}
      \State $\{x_{n},x_{p},c_n\} \leftarrow FB\text{-}Extend^*\!(x_{rand},T)$ \label{ln:fb_extend} \Comment{Extend tree towards new point}
      \If{$x_{n} \neq \varnothing$}
        \State $T \leftarrow InsertNode(x_{n},x_{p},T)$ \label{ln:add_node} \Comment{Add point to tree}
        \State $T \leftarrow FB\text{-}Rewire(x_{n},Near_{\rho,\alpha}(x_{n},T),T)$ \label{ln:fb_rewire} \Comment{Rewire edges around $x_{n}$}
        \If{$x_{n}\!\in\!X_{t} \land \left(X_{b} = \varnothing \lor Cost(x_{n},T) < c(X_{b})\right)$}
          \State $X_{b} \leftarrow Solution(x_{n},T)$ \label{ln:fb_beacons}  \Comment{Update best path found}
        \EndIf
        \If{$X_{b} \neq \varnothing \land c_b \neq c(X_{b})$} \Comment{If the best path has changed}
          \State $X_{b} \leftarrow FB\text{-}OptimizePath(X_{b},T)$ \label{ln:fb_opt_path}
          \State $c_b \leftarrow c(X_b)$
        \EndIf
      \EndIf
    \EndFor
    \State \Return $X_{b}$
  \end{algorithmic}
\end{algorithm}

Both an informed variation and smart variation to the \mbox{FB-RRT}* algorithm can be created with small variations to Algorithm \ref{alg:si_rrt*}. The informed algorithm can be formed by replacing line \ref{ln:si_sampling} with informed sampling and removing line \ref{ln:fb_opt_path}. The smart algorithm can be formed by replacing line \ref{ln:si_sampling} with smart sampling.

  \section{Examples}
  \label{sec:results}

A series of examples are now shown to demonstrate the Fillet-based RRT* approach.
Three environments were chosen to illustrate the performance of various RRT*-based planners.
Within each environment, 16 series of simulations were conducted to illustrate and evaluate the sampling and motion primitive variations.
\mbox{RRT*}, \mbox{I-RRT*}, \mbox{S-RRT*}, and \mbox{SI-RRT*} planned using straight-line, arc-fillet, B\'ezier-fillet, and Dubin's path motion primitives.

Note that a comparison between motion primitive types is not meant to show that one motion primitive is better than another.
Planning with straight-line paths is going to have better convergence characteristics due to their simplicity and lack of dynamic constraints.
In fact, the only primitives that can be directly compared in this fashion are arc-fillets and the Dubin's paths as both assume the same dynamic constraints.
The straight-line primitive is included to show a best-case scenario, providing a pseudo cost of considering the additional dynamic constraints.
This section proceeds with details on the simulations followed by a description of the different environments.
A comparison is made between the fillet formulation presented herein and that of \citep{spline_rrt*} followed by an example that justifies the need to consider curvature constraints while path planning.
Finally, the section ends with a discussion of the results.

\subsection{Simulation Details}
The obstacles are represented with an occupancy grid with each pixel corresponding to one square millimeter.
When performing obstacle collision checks clearance of $0.5m$ is required on all sides.
As orientation is well defined in the case of Dubin's paths and fillets, the points that are $0.5m$ to the right and left of the paths are checked.
In the case of straight-line paths, orientation is not well-defined at the nodes so points are checked $0.5m$ in every cardinal direction.
Paths are generated and checked at one-centimeter resolution.

The steering constant, $\eta$, and neighbor search radius, $\rho$, are both $3m$.
The max number of neighbors to search, $\alpha$, is $100$.
The check target period, $b_t$, is $50$ and the target set, $X_{t}$, is a circle of radius $0.1m$.
The Dubin's radius is set to $0.5m$ and likewise, the maximum curvature constraint imposed on the fillet generation is $2m^{-1}$.
The root node to first node distance, $d_{init}$, is $1m$. Note that these values are somewhat aggressive for some curvature constrained applications, but they allow the straight-line primitive to provide a tighter bound on the possible performance characteristics of the curvature constrained planners.
For \mbox{S-RRT*}, the beacon radius is $3m$ unless otherwise stated.
Note that this is not necessarily the best choice of beacon radius, as shown in Figure \ref{fig:s_rrt_beacon_size}.
However, smaller radius values are very detrimental to the Dubin's path results.
The beacon bias, $b_b$, is $3$ for both \mbox{S-RRT*} and \mbox{SI-RRT*}.

Results are gathered using OMPL \citep{ompl}.
Simulation code can be found in our open-source repository \url{https://gitlab.com/utahstate/robotics/fillet-rrt-star}.
As the sampling is random, each simulation series consists of 100 individual simulations with the average results being presented.
The results were gathered on an AMD Ryzen\texttrademark Threadripper\texttrademark 2990WX processor.
Convergence plots were made by fitting a 10\textsuperscript{th} order polynomial using a least-squares fitting algorithm as described in \citep{Venables2002}.

A least-squares approach is used as the sampling times for path length are not uniform across all simulations and not all simulations find the initial path at the same time.
Note that while the path length for any one run will be monotonically decreasing with time, the least squares fitted plot does not always have the same monotonic property.
The reason is that a particular run may not find a solution until well after other runs and the initial solution it finds may be much larger than the current solution of the other runs, effectively causing the average to increase at the time the run first produces path length data.

\subsection{Environments}
The three environments shown in Figure \ref{fig:all_worlds} were chosen to present and evaluate the performance of differing RRT* approaches.
The environments are referred to as the Spiral world, the Cluttered world, and the Maze world.

The Spiral world is made up of one narrow passage that twists around the starting point.
The world is $40m$ by $40m$, $x_r = \begin{bmatrix} 0 & 0 \end{bmatrix}^T$, and the center of $X_t$ is at $\begin{bmatrix} -15 & -15 \end{bmatrix}^T$.
The only path from $x_r$ to $X_t$ is through a narrow hallway forming a \enquote{bug trap} like set of obstacles.
The Spiral world tests planners' abilities to find a way out of the confined starting area and then converge through all of the passageways.
The \enquote{bug trap} like design makes it difficult for Dubin's paths based planners to find an initial solution, and \mbox{I-RRT*}'s cost heuristic is a poor estimate of $c_{min}$ in this environment. The environment is well suited for smart sampling as there is only one path and refinements are beneficial at each turn.

The Cluttered world is composed of many overlapping circular obstacles.
The world is $100m$ by $100m$, $x_r = \begin{bmatrix} -40 & -40 \end{bmatrix}^T$, and the center of $X_t$ is at $\begin{bmatrix} 40 & 40 \end{bmatrix}^T$.
The abundance of small obstacles results in many small local minima, but there are still large open areas for exploration.
The Cluttered world tests the planners' ability to break out of local minima.
\mbox{I-RRT*} based sampling is well-suited in the environment as the \mbox{I-RRT*} heuristic is a good estimate of the optimal path length.

The Maze world features a series of narrow passages and dead ends.
The world is $50m$ by $50m$, $x_r = \begin{bmatrix} -11 & -22.5 \end{bmatrix}^T$, and the center of $X_t$ is at $\begin{bmatrix} 2.5 & 12.5 \end{bmatrix}^T$.
The Maze world has fewer local minima than the Cluttered world and consists of long narrow corridors.
This world tests the planners' ability to find high-quality initial solutions quickly and then converge past those initial solutions.
While the local minima can be detrimental to beacon-based sampling, the optimal cost heuristic in \mbox{I-RRT*}'s sampling is poor in this case, proving detrimental to I-RRT*'s convergence.

\begin{sidewaysfigure}[p]
  \newcommand{\W}{0.325\linewidth}
  \begin{subfigure}[t]{\W}
    \centering
    \resizebox{\linewidth}{!}{
      \begin{tikzpicture}
        \begin{axis}[
          enlargelimits=false,
          axis lines=none,
          xmin=-20,ymin=-20,xmax=20,ymax=20,
          scale only axis,
          axis equal image,
          axis equal=true,
          ]
          \addplot graphics [xmin=-20,ymin=-20,xmax=20,ymax=20] {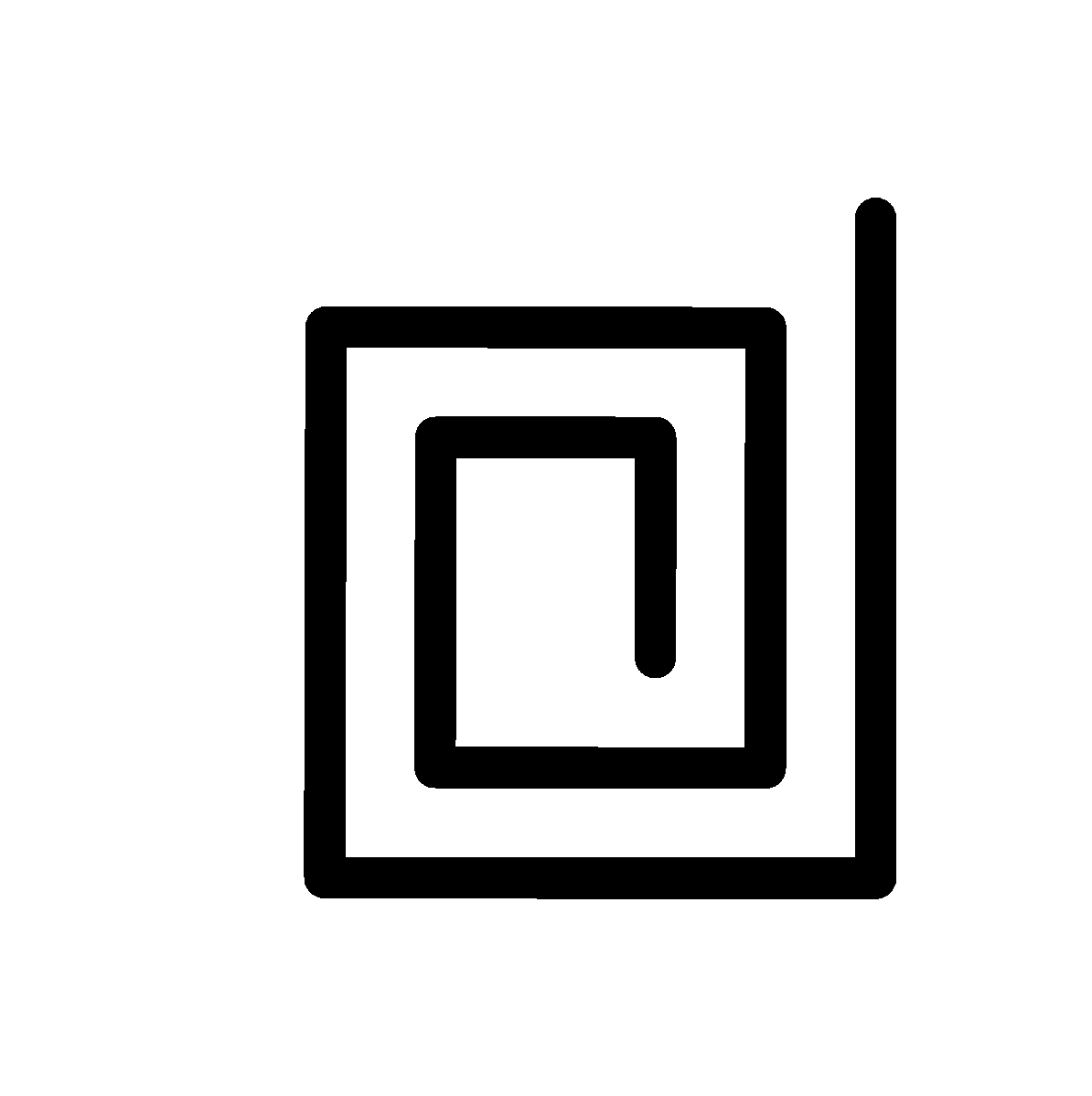};

          \addplot[color=green] table [x=x,y=y, col sep=comma] {pics/spiral/straight_line.csv};
          \addplot[color=red]   table [x=x,y=y, col sep=comma] {pics/spiral/dubins.csv};
          \addplot[color=blue]  table [x=x,y=y, col sep=comma] {pics/spiral/arc_fillet.csv};
          \addplot[color=brown] table [x=x,y=y, col sep=comma] {pics/spiral/bezier_fillet.csv};

          \node[label={left: {\small$x_r$}},inner sep=0pt] () at (0,0) {};
          \node[label={right:{\small$X_t$}},inner sep=0pt] () at (-15,-15) {};
          \draw[thick] (-20,-20) -- (-20,20) -- (20,20) -- (20,-20) -- cycle;
        \end{axis}
      \end{tikzpicture}
    }
  \end{subfigure}
  \begin{subfigure}[t]{\W}
    \centering
    \resizebox{\linewidth}{!}{
      \begin{tikzpicture}
        \begin{axis}[
          enlargelimits=false,
          axis lines=none,
          xmin=-50,ymin=-50,xmax=50,ymax=50,
          scale only axis,
          axis equal image,
          axis equal=true,
          ]
          \addplot graphics [xmin=-50,ymin=-50,xmax=50,ymax=50] {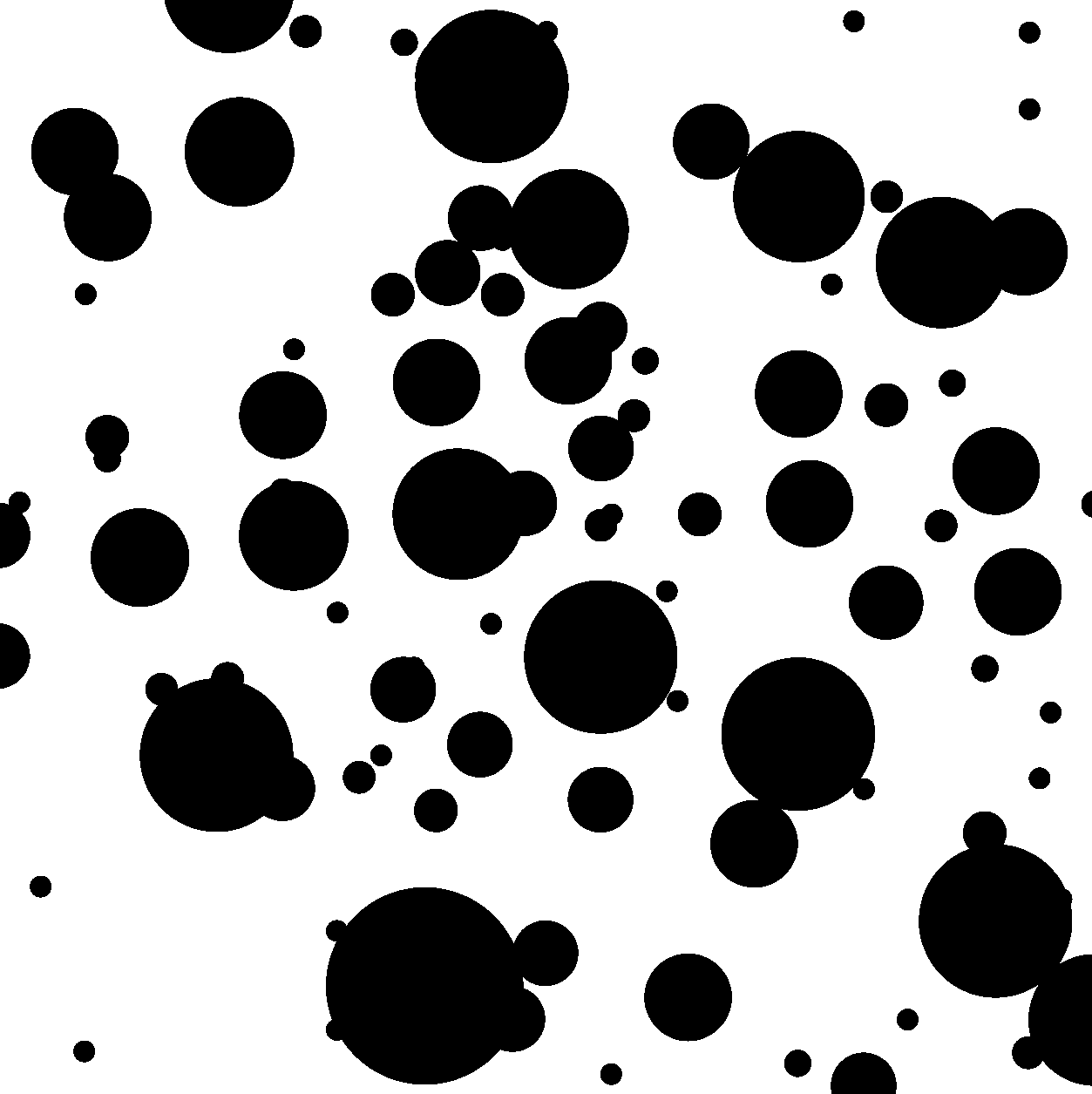};

          \addplot[color=green] table [x=x,y=y, col sep=comma] {pics/cluttered/straight_line.csv};
          \addplot[color=red]   table [x=x,y=y, col sep=comma] {pics/cluttered/dubins.csv};
          \addplot[color=blue]  table [x=x,y=y, col sep=comma] {pics/cluttered/arc_fillet.csv};
          \addplot[color=brown] table [x=x,y=y, col sep=comma] {pics/cluttered/bezier_fillet.csv};

          \node[label={left: {\small$x_r$}},inner sep=0pt] () at (-39.5,-40) {};
          \node[label={above:{\small$X_t$}},inner sep=0pt] () at (40,39.5) {};
          \draw[thick] (-50,-50) -- (-50,50) -- (50,50) -- (50,-50) -- cycle;
        \end{axis}
      \end{tikzpicture}
    }
  \end{subfigure}
  \begin{subfigure}[t]{\W}
    \centering
    \resizebox{\linewidth}{!}{
      \begin{tikzpicture}
        \begin{axis}[
          enlargelimits=false,
          axis lines=none,
          xmin=-25,ymin=-25,xmax=25,ymax=25,
          scale only axis,
          axis equal image,
          axis equal=true,
          ]
          \addplot graphics [xmin=-25,ymin=-25,xmax=25,ymax=25] {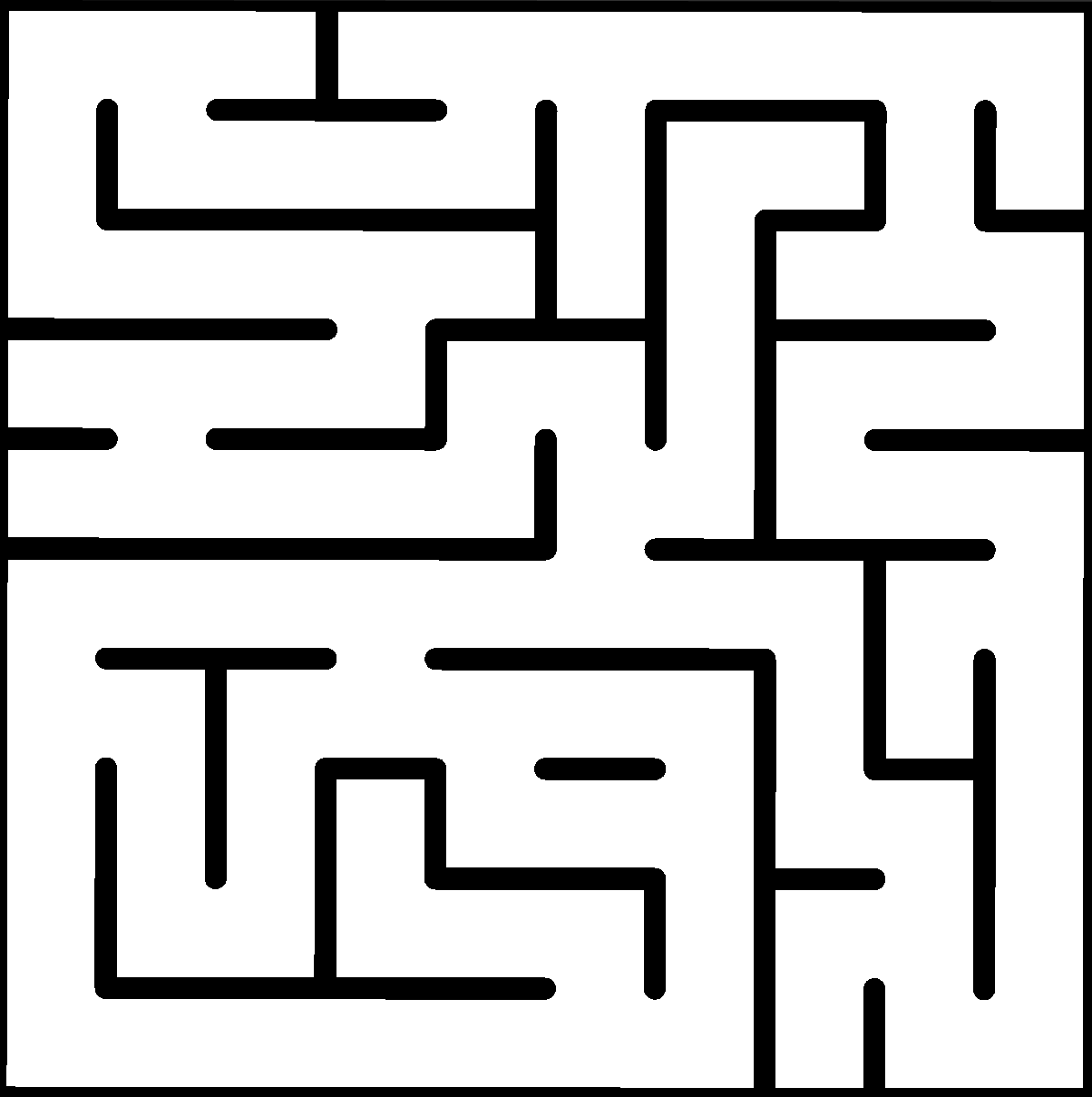};

          \addplot[color=green] table [x=x,y=y, col sep=comma] {pics/maze/straight_line.csv};
          \addplot[color=red]   table [x=x,y=y, col sep=comma] {pics/maze/dubins.csv};
          \addplot[color=blue]  table [x=x,y=y, col sep=comma] {pics/maze/arc_fillet.csv};
          \addplot[color=brown] table [x=x,y=y, col sep=comma] {pics/maze/bezier_fillet.csv};

          \node[label={left: {\small$x_r$}},inner sep=0pt] () at (-10.5,-22.5) {};
          \node[label={below:{\small$X_t$}},inner sep=0pt] () at (2.5,13.5) {};
          \draw[thick] (-25,-25) -- (-25,25) -- (25,25) -- (25,-25) -- cycle;
        \end{axis}
      \end{tikzpicture}
    }
  \end{subfigure}
  \caption{The resulting paths from running RRT* for 30 seconds in the Spiral world (left), Cluttered world (middle), and the Maze world (right).
           Straight-line, Dubin's path, Arc-fillet, and B\'ezeir-fillet paths are shown in green, red, blue, and brown respectively.}
  \label{fig:all_worlds}
\end{sidewaysfigure}

\subsection{Comparison with Previous Work}
\Revision{
Note that this work's \mbox{FB-RRT*} differs from what is given in \citep{Yang2014,spline_rrt*} in the following ways:
\begin{itemize}
  \item The generalization of \citep{Yang2014,spline_rrt*}'s $\gamma_{max}$ and $d_{min}$ based path continuity constraints to the less restrictive form given in \eqref{equ:conditions}.
        See Figure \ref{fig:spline_reach} for an illustration of how \eqref{equ:conditions} is less restrictive than using the $\gamma_{max}$/$d_{min}$ constraints.
  \item The addition of a pseudo ``node orientation'' in the nearest neighbor search heuristically penalizes turns and enables us to forgo the k-nearest-neighbor search that \citep{spline_rrt*} uses.
        See the explanation of $FB\text{-}Extend$ in Section \ref{sec:procedures_with_minor_changes} for more information.
  \item A newly developed rewiring procedure that ensures continuity and cost improvement in the resulting path.
        See Section \ref{sec:fb_rewire} for more information on $FB\text{-}Rewire$.
  \item The generalization of the fillet-based planner structure to make use of any fillet type instead of just B\'ezeir-fillets.
\end{itemize}
}

This section uses convergence plots to compare the formulation of the fillet constraints in this work to the formulation given in \citep{Yang2014,spline_rrt*}.
Specifically, instead of constraining node addition in the tree with \eqref{equ:conditions} we use the constants given in \citep{Yang2014}.
\citep{Yang2014} defines a max node-to-node angle, $\gamma_{max}$, and then uses that angle to define a minimum node-to-node distance, $d_{min} = d\!\left(\gamma_{max}\right)$.
Any nodes that form an angle greater than $\gamma_{max}$ or are closer together than $2d_{min}$ are deemed invalid.
This is a conservative approximation of the constraints defined in \eqref{equ:conditions}\Revision{, see Section \ref{section:bezier_curve_gen} for more details}.
We refer to the version of \mbox{FB-RRT*} that uses \citep{Yang2014}'s constraints as \mbox{SB-RRT*}.

Note that \mbox{SB-RRT*} differs from what is given in \citep{spline_rrt*} because the $FB\text{-}Rewire$ procedure is used to avoid the invalid tree configurations that result from the $Rewire$ procedure in \citep{spline_rrt*}, see Figure \ref{fig:spline_rewire_counter}.
The B\'ezeir-fillet is used for comparison because that is the fillet used in \citep{Yang2014,spline_rrt*}.
The Cluttered world was chosen for this simulation because it is similar to the simulated environment used in \citep{Yang2014}.
The max allowed curvature is kept at $2m^{-1}$, $\gamma_{max} = \frac{\pi}{2} rad.$, and $d_{min} = 0.7938 m$.

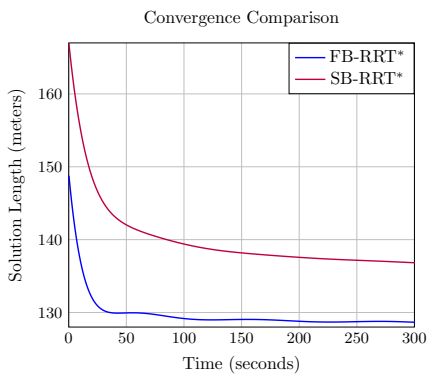
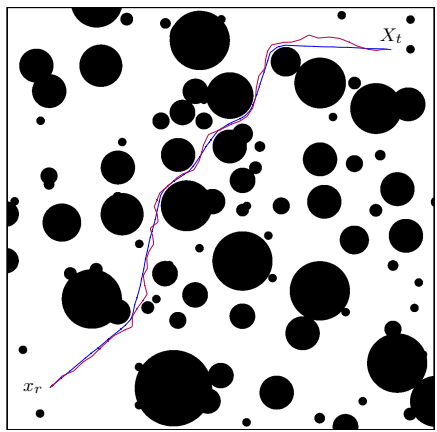
\begin{figure}[h]
  \centering
  \newcommand{\W}{0.49\linewidth}
  \begin{subfigure}[t]{\W}
    \centering
    \resizebox{\linewidth}{!}{
      \begin{tikzpicture}
        \begin{axis}[
          ymin=128,
          ymax=167,
          xmin=0,
          xmax=300,
          ticklabel style={font=\small},
          grid=both,
          title={Convergence Comparison},
          ylabel={Solution Length (meters)},
          xlabel={Time (seconds)},
          ytick style={opacity=0},
          xtick style={opacity=0},
          ylabel near ticks,
          xlabel near ticks,
          legend style={nodes={scale=1, transform shape},at={(1,1)},anchor=north east},
          ]

          \addplot[color=blue,  thick] table [x=x,y=y, col sep=comma] {sim_data/cluttered_old_vs_new/fb_rrt_star.csv};
          \addplot[color=purple,thick] table [x=x,y=y, col sep=comma] {sim_data/cluttered_old_vs_new/old_fb_rrt_star.csv};
          \legend{FB-RRT$^*$, SB-RRT$^*$}
       \end{axis}
      \end{tikzpicture}
    }
    \caption{Convergence of solution cost in the Cluttered world over 5 minutes.
             \mbox{FB-RRT*} is shown in blue and \mbox{SB-RRT*} is shown in purple.}
    \label{fig:sb_rrt_convergence}
  \end{subfigure}
  \begin{subfigure}[t]{\W}
    \centering
    \resizebox{\linewidth}{!}{
      \begin{tikzpicture}
        \begin{axis}[
          enlargelimits=false,
          axis lines=none,
          xmin=-50,ymin=-50,xmax=50,ymax=50,
          scale only axis,
          axis equal image,
          axis equal=true,
          ]
          \addplot graphics [xmin=-50,ymin=-50,xmax=50,ymax=50] {pics/cluttered/cluttered_world.png};

          \addplot[color=blue]   table [x=x,y=y, col sep=comma] {sim_data/cluttered_old_vs_new/fb_rrt_star_path.csv};
          \addplot[color=purple] table [x=x,y=y, col sep=comma] {sim_data/cluttered_old_vs_new/old_fb_rrt_star_path.csv};

          \node[label={left: {\small$x_r$}},inner sep=0pt] () at (-39.5,-40) {};
          \node[label={above:{\small$X_t$}},inner sep=0pt] () at (40,39.5) {};
          \draw[thick] (-50,-50) -- (-50,50) -- (50,50) -- (50,-50) -- cycle;
        \end{axis}
      \end{tikzpicture}
    }
    \caption{Example solutions found in the Cluttered world after 5 minutes.
             \mbox{FB-RRT*} is shown in blue and \mbox{SB-RRT*} is shown in purple.}
    \label{fig:sb_rrt_paths}
  \end{subfigure}
  \caption{A comparison of the solution quality found by \mbox{FB-RRT*} and \mbox{SB-RRT*}.
           \mbox{FB-RRT*} converges faster and to a shorter solution then \mbox{SB-RRT*}.
           \mbox{FB-RRT*} is shown in blue and \mbox{SB-RRT*} is shown in purple.}
  \label{fig:sb_rrt_comp}
\end{figure}

Figure \ref{fig:sb_rrt_convergence} shows the convergence of \mbox{SB-RRT*} and \mbox{FB-RRT*} in the scenario described.
\mbox{FB-RRT*} far outperforms \mbox{SB-RRT*} in both initial convergence speed and the solution to which it settles over time.
\mbox{FB-RRT*} converges \Revision{to a much shorter path} then \mbox{SB-RRT*} as the constraints given in \citep{Yang2014} force each node along the solution to be at least $2 d_{min}$ distance away from each other.
As the solution converges, it becomes difficult to \Revision{shorten the path} further without \Revision{reducing the number of nodes that make up} the solution path.
\Revision{
We emphasize that this simulation only shows the benefit of using the constraints in \eqref{equ:conditions}.
A significant benefit is also received from the updated rewiring procedure that ensures continuous paths are produced.
}

\subsection{Curvature Constrained Paths}
\label{sec:curvature_constrained_paths}
This section provides an example of when considering curvature constraints during path planning is advantageous.
A common approach to curvature constrained path planning is to plan using straight-line motion primitives and then smooth the path after planning; see, for example, Chapter 11 of \citep{Beard2012}.
However, this can lead to invalid paths.

\begin{figure}[h]
  \centering
  \resizebox{0.5\linewidth}{!}{
    \begin{tikzpicture}
      \begin{axis}[
        enlargelimits=false,
        axis lines=none,
        xmin=-15,ymin=-15,xmax=15,ymax=15,
        scale only axis,
        axis equal image,
        axis equal=true,
        ]
        \clip  (15,15) rectangle (-15,-8);

        \addplot graphics [xmin=-15,ymin=-15,xmax=15,ymax=15] {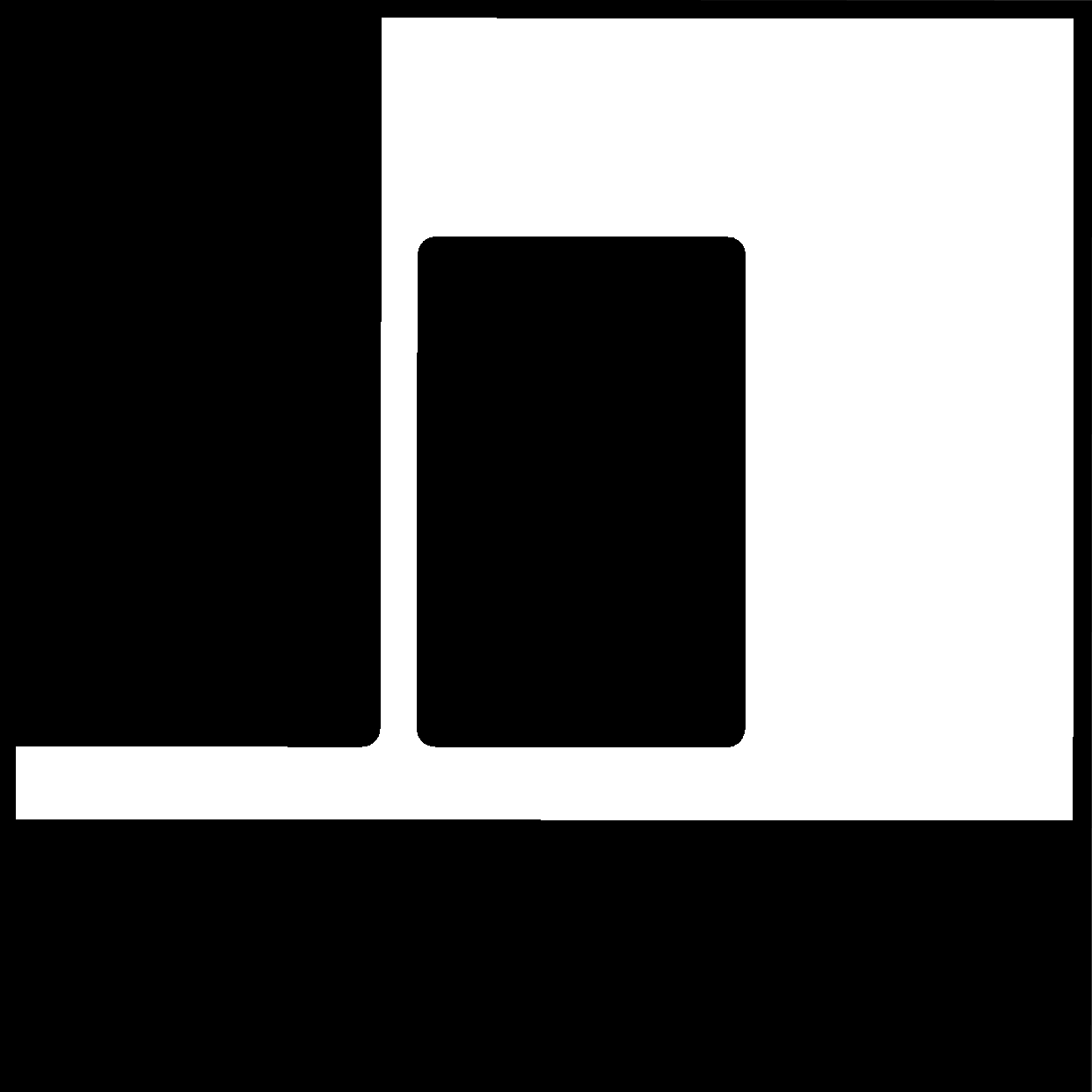};

        \addplot[color=green] table [x=x,y=y, col sep=comma] {pics/curvature_constrained_path/straight_line.csv};
        \addplot[color=blue]  table [x=x,y=y, col sep=comma] {pics/curvature_constrained_path/arc_fillet.csv};

        \node[label={left: {\small$x_r$}},inner sep=0pt] () at (-12.5,-6.1) {};
        \node[label={above:{\small$X_t$}},inner sep=0pt] () at (-3,12) {};

      \end{axis}
    \end{tikzpicture}
  }
  \caption{The resulting paths from running RRT* in an environment where the straight-line path turns sharply down a corridor.
           Straight-line and arc-fillet paths are shown in green and blue respectively.}
  \label{fig:curvature_demo_world}
\end{figure}

One of the scenarios where this would be a problem is shown in Figure \ref{fig:curvature_demo_world}.
Path planning using straight-line primitives converges to the solution shown in green.
This is the shortest path between the start and goal locations while ensuring a minimum distance is maintained between the path and obstacles.

However, a path that goes through this hallway can not satisfy the curvature constraints of the problem without hitting the walls or performing a complex, multi-turn maneuver.
\Revision{
See Appendix \ref{sec:reverse_fillet} for details on generating a multi-turn maneuver with \mbox{FB-RRT*}.
}
If curvature constants are considered during path planning, as is the case for \mbox{FB-RRT*}, then the path planner will return a solution through the second corridor, which is wide enough to make the turn.

\subsection{Results}
The different motion primitives are now compared in terms of the initial solution, the convergence, and the effects of the various sampling techniques.

\subsubsection{Initial Solution Comparison}
Table \ref{tab:initialization_results} shows the performance of each motion primitive in finding the initial solution in terms of the mean time and length of the initial solution for each environment.
The results in Table \ref{tab:initialization_results} apply regardless of the sampling technique used as each sampling approach behaves identically to RRT* before the first solution is found.

The Spiral and Maze worlds take longer on average to find an initial solution than the Cluttered world.
There are a couple of reasons that this may occur.
Both worlds have more narrow passages, resulting in a large number of iterations that fail to extend the tree because their paths become invalidated by obstacles.
Furthermore, the Maze world has many dead-ends, allowing the tree to spend time growing in a direction that will not lead to the target set.

Planners using the arc-fillet primitive find an initial solution in a time comparable to that of the planners using straight-line paths in all of the environments considered, performing slightly faster in the Maze world.
Similarly, the path lengths found are comparable, with the straight-line based paths being slightly shorter in all cases.
The added cost of considering continuity in curvature is seen as the planners using B\'ezier-fillets require roughly twice the time to find an initial solution as their arc-fillet counterparts in all three environments.
Table \ref{tab:init_cost} shows the B\'ezeir-fillet based planners finding a slightly shorter initial solution than their arc-fillet counterparts.
This may be due to the fact that the B\'ezier-fillet planners take more time and iterations to find an initial solution.
During that same time, the planners using arc-fillets are refining their solutions, always having a shorter path than the B\'ezier-fillet planners.

It takes significantly longer for planners using Dubin's paths to find an initial solution in each of the worlds tested here.
The only world where Dubin's paths based planners found an initial solution in a comparable amount of time is the Cluttered world, although the initial path is also significantly longer.
This is, in part, due to loops in the path, similar to that depicted in Figure \ref{fig:fillet_vs_dubins}.
In the Cluttered world, the first path found has many loops and bends, resulting in a longer path length on average.
These loops prove even more detrimental in the Spiral and Maze worlds with their narrow passageways.
The path length does not increase as dramatically for Dubin's paths in the Spiral and Maze worlds as it does in the Cluttered world because there is not as much room for looping paths.
However, there is a significant impact to the initial solution time as planning with Dubin's paths requires over 2 to 9 times as long as B\`ezier-fillets and 5 to 15 times as long as arc-fillets.

It is important to note that the least squares fitting distorts the average transients plots in Figure \ref{fig:transients}.
The average initial solution length for each planner is equivalent for any particular motion primitive.
However, the rapid convergence of some planners cause the least squares solution to appear lower at the initial solution time.

\begin{table}[t]
  \caption{Initial solution results.}
  \label{tab:initialization_results}
  \centering
  \begin{subtable}{\linewidth}
    \centering
      \begin{tabular}{|c|c|c|c|}
        \hline
        Motion Primitive & Spiral              & Cluttered         & Maze                \\ \hline
        Straight-line    & $\,\,\, 4.328 \pm \,\,\, 1.158$ & $0.498 \pm 0.196$ & $11.183 \pm \,\,\, 3.455$ \\ \hline
        Dubin's path     & $96.792       \pm 38.124$       & $1.585 \pm 1.357$ & $40.849 \pm 17.587$ \\ \hline
        Arc-fillet       & $\,\,\, 6.373 \pm \,\,\, 2.124$ & $0.586 \pm 0.527$ & $\,\,\, 7.473 \pm \,\,\, 2.057$ \\ \hline
        B\'ezeir-fillet  & $10.288       \pm \,\,\, 3.292$ & $1.081 \pm 1.966$ & $14.002 \pm \,\,\, 4.04 \,\,\,$ \\ \hline
      \end{tabular}
    \caption{The means and standard deviations of the time in seconds it takes for RRT* to find an initial solution.}
    \label{tab:init_time}
  \end{subtable}
  \begin{subtable}{\linewidth}
    \centering
    \begin{tabular}{|c|c|c|c|}
      \hline
      Motion Primitive & Spiral            & Cluttered          & Maze              \\ \hline
      Straight-line    & $136.05 \pm 3.91$ & $167.42 \pm 12.14$ & $142.25 \pm 7.84$ \\ \hline
      Dubin's path     & $161.68 \pm 7.73$ & $275.07 \pm 39.55$ & $162.65 \pm 9.86$ \\ \hline
      Arc-fillet       & $144.45 \pm 6.47$ & $184.54 \pm 21.81$ & $148.56 \pm 8.4 \,\,\,$ \\ \hline
      B\'ezeir-fillet  & $137.76 \pm 5.45$ & $181.45 \pm 23.76$ & $147.2 \,\,\, \pm 8.03$ \\ \hline
    \end{tabular}
    \caption{The means and standard deviations of the initial path length in meters after RRT* has found an initial solution.}
    \label{tab:init_cost}
  \end{subtable}
\end{table}

\input{tikz_pics/convergence_plots.tex}
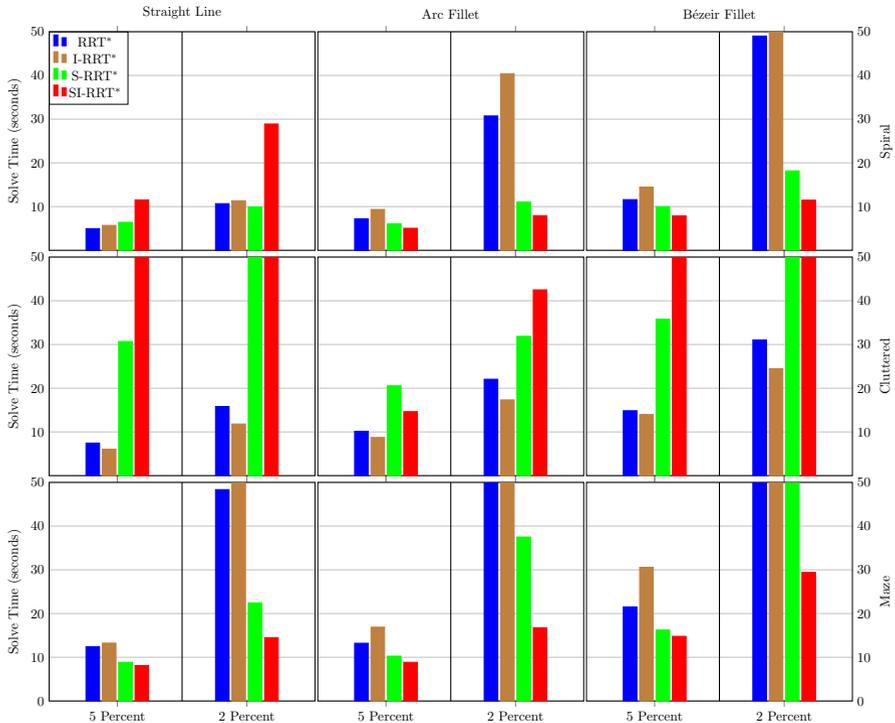
\begin{figure*}[ht]
  \centering
  \resizebox{\linewidth}{!}{
    \begin{tikzpicture}
      \begin{groupplot}[
        ybar=2.5cm,
        xticklabels={,,,},
        group style={
          group size=3 by 3,
          vertical sep=5pt,
          horizontal sep=2pt,},
        ymin=0,
        ymax=50,
        samples=2,
        y tick style={draw=none},
        ymajorgrids = true,
        bar shift auto,
        ylabel near ticks,
        xlabel near ticks,
        xtick=data,
        enlarge x limits={abs=5*\pgfplotbarwidth},
        legend style={nodes={scale=1, transform shape},at={(0,1)},anchor=north west}
        ]

        \nextgroupplot[
          title={Straight Line},
          ylabel={Solve Time (seconds)},
          yticklabels={,,10,20,...,50},
          ]
        \addplot[fill,color=blue]  coordinates {(1, 4.9657) (2, 10.6723)};
        \addplot[fill,color=brown] coordinates {(1, 5.7059) (2, 11.3475)};
        \addplot[fill,color=green] coordinates {(1, 6.4003) (2, 9.8973)};
        \addplot[fill,color=red]   coordinates {(1, 11.5416) (2, 28.8949)};
        \legend{RRT$^*$, I-RRT$^*$, S-RRT$^*$, SI-RRT$^*$}
        \draw[black,thick] (1.5,0) -- (1.5,50);

        \nextgroupplot[
          title={Arc Fillet},
          yticklabels={,,},
          ]
        \addplot[fill,color=blue]  coordinates {(1, 7.1999) (2, 30.736)};
        \addplot[fill,color=brown] coordinates {(1, 9.3522) (2, 40.4054)};
        \addplot[fill,color=green] coordinates {(1, 6.0875) (2, 11.0555)};
        \addplot[fill,color=red]   coordinates {(1, 5.0466) (2, 7.9373)};
        \draw[black,thick] (1.5,0) -- (1.5,50);

        \nextgroupplot[
          title={B\'ezeir Fillet},
          yticklabel pos=right,
          ylabel=Spiral,
          yticklabels={,,10,20,...,50},
          ylabel near ticks,
          ]
        \addplot[fill,color=blue]  coordinates {(1, 11.5964) (2, 49.0211)};
        \addplot[fill,color=brown] coordinates {(1, 14.4602) (2, 60)};
        \addplot[fill,color=green] coordinates {(1, 9.9919) (2, 18.1695)};
        \addplot[fill,color=red]   coordinates {(1, 7.8861) (2, 11.4723)};
        \draw[black,thick] (1.5,0) -- (1.5,50);

        \nextgroupplot[
          ylabel={Solve Time (seconds)},
          yticklabels={,,10,20,...,50},
          ]
        \addplot[fill,color=blue]  coordinates {(1, 7.4537) (2, 15.8235)};
        \addplot[fill,color=brown] coordinates {(1, 6.0504) (2, 11.8192)};
        \addplot[fill,color=green] coordinates {(1, 30.6909) (2, 50.5781)};
        \addplot[fill,color=red]   coordinates {(1, 60) (2, 60)};
        \draw[black,thick] (1.5,0) -- (1.5,50);

        \nextgroupplot[
          yticklabels={,,},
          ]
        \addplot[fill,color=blue]  coordinates {(1, 10.1554) (2, 22.0524)};
        \addplot[fill,color=brown] coordinates {(1, 8.7317) (2, 17.3256)};
        \addplot[fill,color=green] coordinates {(1, 20.5472) (2, 31.8692)};
        \addplot[fill,color=red]   coordinates {(1, 14.6412) (2, 42.502)};
        \draw[black,thick] (1.5,0) -- (1.5,50);

        \nextgroupplot[
          yticklabel pos=right,
          ylabel=Cluttered,
          yticklabels={,,10,20,...,50},
          ylabel near ticks,
          ]
        \addplot[fill,color=blue]  coordinates {(1, 14.8482) (2, 31.0479)};
        \addplot[fill,color=brown] coordinates {(1, 13.9818) (2, 24.4715)};
        \addplot[fill,color=green] coordinates {(1, 35.7861) (2, 60)};
        \addplot[fill,color=red]   coordinates {(1, 60) (2, 60)};
        \draw[black,thick] (1.5,0) -- (1.5,50);

        \nextgroupplot[
          ylabel={Solve Time (seconds)},
          xticklabels={{5 Percent},{2 Percent}},
          ]
        \addplot[fill,color=blue]  coordinates {(1, 12.4019) (2, 48.3512)};
        \addplot[fill,color=brown] coordinates {(1, 13.2362) (2, 60)};
        \addplot[fill,color=green] coordinates {(1, 8.8587) (2, 22.4392)};
        \addplot[fill,color=red]   coordinates {(1, 8.0956) (2, 14.4597)};
        \draw[black,thick] (1.5,0) -- (1.5,50);

        \nextgroupplot[
          yticklabels={,,},
          xticklabels={{5 Percent},{2 Percent}},
          ]
        \addplot[fill,color=blue]  coordinates {(1, 13.2054) (2, 60)};
        \addplot[fill,color=brown] coordinates {(1, 16.8828) (2, 60)};
        \addplot[fill,color=green] coordinates {(1, 10.2612) (2, 37.4922)};
        \addplot[fill,color=red]   coordinates {(1, 8.8486) (2, 16.7486)};
        \draw[black,thick] (1.5,0) -- (1.5,50);

        \nextgroupplot[
          yticklabel pos=right,
          ylabel=Maze,
          ylabel near ticks,
          xticklabels={{5 Percent},{2 Percent}},
          ]
        \addplot[fill,color=blue]  coordinates {(1, 21.5203) (2, 60)};
        \addplot[fill,color=brown] coordinates {(1, 30.5757) (2, 60)};
        \addplot[fill,color=green] coordinates {(1, 16.2296) (2, 60)};
        \addplot[fill,color=red]   coordinates {(1, 14.7872) (2, 29.4243)};
        \draw[black,thick] (1.5,0) -- (1.5,50);
      \end{groupplot}
    \end{tikzpicture}
  }
  \caption{The average time it took each planner to find a solution that was within 5 and 2 percent of the best averaged solution found after 50 seconds.
           Images from left to right show the results of the straight-line, arc-fillet, and B\'ezeir-fillet motion primitives.
           RRT* is shown in blue, I-RRT* in brown, S-RRT* in green, and SI-RRT* in red.}
  \label{fig:convergence_bar_graph}
\end{figure*}

\subsubsection{Convergence Times of Motion Primitives}
Once an initial solution is found, the algorithms work to converge on the shortest path.
In this section, the resulting path lengths generated from planning using different motion primitives are directly compared.
Note that this is inherently an unfair comparison as the primitives have different kinematic constraints.
The only two motion primitives that use the same kinematic constraints are Dubin's paths and arc-fillets.
The straight-line motion primitive does not respect curvature constraints while B\'ezeir-fillet primitive considers a curvature rate constraint in addition to the curvature constraint considered by the arc-fillet primitive.
Each scenario has been designed such that the optimal path for each motion primitive will be similar, unlike the scenario in Figure \ref{fig:curvature_demo_world}.
This allows the straight-line convergence rate to be a pseudo best-case solution.
The results will show that the fillet-based planners have comparable results to the straight-line planners.

Figure \ref{fig:transients} shows the performance of each motion primitive with the four different sampling techniques.
The green shaded areas start at the path length of the best averaged solution found for that world in 50 seconds.
The yellow shaded area starts once the path length is within 2 percent of the best averaged solution for the respective environment.
Similarly, the red shaded area starts once the path length is within 5 percent of the best averaged solution for the respective environment.

Table \ref{tab:best_planners} shows the best performing planners in each world and type of motion primitive.
In the Spiral world, the best averaged solution found within 50 seconds was found by \mbox{S-RRT*} planning with straight-lines with a path length of $116.14 m$.
For the Cluttered world, the best was found by \mbox{I-RRT*} planning with straight-lines with a path length of $128.79 m$.
For the Maze world, the best was found by \mbox{SI-RRT*} planning with straight-lines with a path length of $129.54 m$.
Note that in each case, the planner that found the shortest averaged solution was planning with straight-line motion primitives.
This is expected because the straight-line primitive is the least constrained.

\begin{table}[t]
  \caption{The best performing planner and associated average path length for that planner in each environment and motion primitive combination.
           Results are calculated after running the planners for 50 seconds.}
  \label{tab:best_planners}
  \centering
  \begin{tabular}{|c|c|c|c|c|c|c|}
    \hline
    Motion Primitive & \multicolumn{2}{c|}{Spiral} & \multicolumn{2}{c|}{Cluttered} & \multicolumn{2}{c|}{Maze} \\ \hline
    Straight-line    & \; S-RRT* & $116.14$        & I-RRT* & $128.79$              & SI-RRT* & $129.54$ \\ \hline
    Dubin's path     & SI-RRT*   & $122.6$ \;      & I-RRT* & $145.54$              & SI-RRT* & $143.17$ \\ \hline
    Arc-fillet       & SI-RRT*   & $116.34$        & I-RRT* & $129.46$              & SI-RRT* & $130.31$ \\ \hline
    B\'ezeir-fillet  & SI-RRT*   & $116.49$        & I-RRT* & $129.98$              & SI-RRT* & $130.92$ \\ \hline
  \end{tabular}
\end{table}

Figure \ref{fig:convergence_bar_graph} shows the 5 and 2 percent convergence times in a bar graph for a quick comparison of results.
It is worth noting that planners using the Dubin's path primitive struggle to approach the 5 percent convergence region and are subsequently left out of Figure \ref{fig:convergence_bar_graph}.
On the other hand, the arc-fillet planners perform quite well despite assuming the same motion constraints as the Dubin's primitive.
The arc-fillet based planners perform comparably, and in some cases better, than the straight-line primitives in converging to the 5 percent threshold.
Convergence begins to suffer for the 2 percent threshold, with some arc-fillet planners unable to cross that threshold in the Maze world.

The B\'ezier-fillet planners show an increase in convergence time, underscoring the cost of requiring continuity in curvature.
\mbox{SI-RRT*} is the only planner that is able to cross the 2 percent threshold when planning with B\`ezier-fillets in the Maze world.
However, when planning with B\'ezier-fillets in the Cluttered world, the smart sampling techniques (\mbox{S-RRT*} and \mbox{SI-RRT*}) fail to cross the 2 percent threshold while both \mbox{RRT*} and \mbox{I-RRT*} are able.

\subsubsection{The Effect of Sampling Techniques}
The environments have little effect on the trends of ranking the performance of the motion primitives.
The planning with straight-lines typically outperforms the arc-fillets, which outperforms the B\`ezier-fillets, which in turn outperforms planning with Dubin's paths.
However, the environment has a significant effect on the performance of the sampling procedures.

Table \ref{tab:best_planners} shows that each environment has a sampling procedure that works best in that environment.
For the Cluttered world, \mbox{I-RRT*} performs the best across all motion primitives.
Whereas in the Spiral and Maze worlds, the greedier sampling heuristics perform better.
In the Maze world, \mbox{SI-RRT*} performs the best for all motion primitives.
In the Spiral world, \mbox{S-RRT*} performs the best when planning with straight-lines but \mbox{SI-RRT*} performs best for all other motion primitives.
This shows that \mbox{S-RRT*}'s sampling works very well when planning with straight-lines but tends to struggle more when planning with kinematic constraints.

As mentioned, the Cluttered world is ideal for \mbox{I-RRT*}'s sampling approach and, as expected, the \mbox{I-RRT*} based planners perform the best in terms of 5 and 2 percent convergence.
However, the transient plots in Figure \ref{fig:transients} show a very interesting trend.
The smart approaches (\mbox{S-RRT*} and \mbox{SI-RRT*} based planners) show significantly faster initial convergence.
This is due to the fact that the smart approaches focus on refining the best solution instead of searching the environment, which also means that they tend to spend more time in local minima.
This is particularly noticeable in the Cluttered world where the smart approaches plateau for a time before dropping again.
The Spiral and Maze worlds present environments in which there are fewer local minima and as such the smart approaches continue rapidly refining the solution until the 5 and 2 percent thresholds.

Figures \ref{fig:transients} and \ref{fig:convergence_bar_graph} show that the \mbox{SI-RRT*} based planners focus heavily on refining the current shortest solution.
In the Cluttered world, this proves detrimental as it causes the planner to focus on local minima instead of searching for other paths through the obstacle topology.
\mbox{SI-RRT*}'s greedy convergence to local solutions is why \mbox{SI-RRT*}'s solution cost plateaus in the Cluttered world, see Figure \ref{fig:transients}.
In the Maze and Spiral worlds, it proves beneficial and results in the fastest convergence, both initially and to the 5 and 2 percent thresholds for all but the straight-line motion primitives.
This may be because there are fewer local minima in the obstacle topology of these worlds.
As expected, the smart-and-informed sampling does quite poorly for straight-line primitives.

Figure \ref{fig:s_rrt_beacon_size} shows that the \mbox{SI-RRT*} approach provides greedy refinement for fillet-based primitives without requiring the tuning of an extra beacon-size parameter.
Figure \ref{fig:s_rrt_beacon_size} compares the results from \mbox{S-RRT*} based planners over various beacon sizes to the results of the respective \mbox{SI-RRT*} based planner.
In the Cluttered world, \mbox{SI-RRT*} spends most of its time refining local minima and results in the worst convergence times.
In both the Maze and the Spiral worlds, the smart-and-informed sampling performs near the best, except when considering straight-line primitives.
While an iterative search over beacon sizes for \mbox{S-RRT*} can produce similar convergence results to \mbox{SI-RRT*}, the smart-and-informed sampling does not require additional tuning to find the best beacon size.
Note that the resulting best beacon size for \mbox{S-RRT*} sampling is dependent upon both the environment and the motion primitive, making such a search difficult prior to execution.

\begin{figure*}[ht]
  \centering
  \resizebox{\linewidth}{!}{
    \begin{tikzpicture}
      \begin{groupplot}[
        ybar=0.01cm,
        xticklabels={{5 Percent},{2 Percent}},
        group style={
          group size=3 by 3,
          vertical sep=5pt,
          horizontal sep=2pt,
        },
        ymin=0,
        ymax=50,
        y tick style={draw=none},
        ymajorgrids = true,
        ylabel near ticks,
        xlabel near ticks,
        xtick=data,
        legend style={nodes={scale=1, transform shape},at={(0,1)},anchor=north west},
        enlarge x limits={abs=5*\pgfplotbarwidth},
        ]

        \nextgroupplot[
          title={Straight Line},
          ylabel={Solve Time (seconds)},
          yticklabels={,,10,20,...,50},
          xticklabels={,,},
          ]
        \addplot[fill,color=brown]   coordinates {(1, 8.0463) (2, 12.331)};
        \addplot[fill,color=cyan]    coordinates {(1, 7.0806) (2, 10.6848)};
        \addplot[fill,color=magenta] coordinates {(1, 6.965) (2, 10.3689)};
        \addplot[fill,color=green]   coordinates {(1, 6.4003) (2, 9.8973)};
        \addplot[fill,color=orange]  coordinates {(1, 5.1057) (2, 8.5723)};
        \addplot[fill,color=purple]  coordinates {(1, 4.1901) (2, 7.4445)};
        \addplot[fill,color=violet]  coordinates {(1, 3.9705) (2, 8.0274)};
        \addplot[fill,color=red]     coordinates {(1, 11.5416) (2, 28.8949)};
        \legend{S-RRT$^*\text{,}0.1$,S-RRT$^*\text{,}0.5$,S-RRT$^*\text{,}1$,S-RRT$^*\text{,}3$,S-RRT$^*\text{,}5$,S-RRT$^*\text{,}10$,S-RRT$^*\text{,}15$,SI-RRT$^*$}
        \draw[black,thick] (1.5,0) -- (1.5,50);

        \nextgroupplot[
          title={Arc Fillet},
          yticklabels={,,},
          xticklabels={,,},
          ]
        \addplot[fill,color=brown]   coordinates {(1, 6.7317) (2, 24.8239)};
        \addplot[fill,color=cyan]    coordinates {(1, 7.2282) (2, 11.8611)};
        \addplot[fill,color=magenta] coordinates {(1, 6.5875) (2, 10.4048)};
        \addplot[fill,color=green]   coordinates {(1, 6.0875) (2, 11.0555)};
        \addplot[fill,color=orange]  coordinates {(1, 6.0147) (2, 13.5343)};
        \addplot[fill,color=purple]  coordinates {(1, 5.2763) (2, 17.5104)};
        \addplot[fill,color=violet]  coordinates {(1, 5.6666) (2, 19.5439)};
        \addplot[fill,color=red]     coordinates {(1, 5.0466) (2, 7.9373)};
        \draw[black,thick] (1.5,0) -- (1.5,50);

        \nextgroupplot[
          title={B\'ezeir Fillet},
          yticklabel pos=right,
          ylabel=Spiral,
          xticklabels={,,},
          ylabel near ticks,
          yticklabels={,,10,20,...,50},
          ]
        \addplot[fill,color=brown]   coordinates {(1, 9.9115) (2, 60)};
        \addplot[fill,color=cyan]    coordinates {(1, 9.3817) (2, 18.7032)};
        \addplot[fill,color=magenta] coordinates {(1, 9.5014) (2, 14.5848)};
        \addplot[fill,color=green]   coordinates {(1, 9.9919) (2, 18.1695)};
        \addplot[fill,color=orange]  coordinates {(1, 8.8802) (2, 22.896)};
        \addplot[fill,color=purple]  coordinates {(1, 7.5298) (2, 32.6979)};
        \addplot[fill,color=violet]  coordinates {(1, 8.1776) (2, 32.5038)};
        \addplot[fill,color=red]     coordinates {(1, 7.8861) (2, 11.4723)};
        \draw[black,thick] (1.5,0) -- (1.5,50);

        \nextgroupplot[
          ylabel={Solve Time (seconds)},
          yticklabels={,,10,20,...,50},
          xticklabels={,,},
          ]
        \addplot[fill,color=brown]   coordinates {(1, 12.3318) (2, 25.7019)};
        \addplot[fill,color=cyan]    coordinates {(1, 17.9295) (2, 32.7013)};
        \addplot[fill,color=magenta] coordinates {(1, 23.8988) (2, 38.146)};
        \addplot[fill,color=green]   coordinates {(1, 30.6909) (2, 60)};
        \addplot[fill,color=orange]  coordinates {(1, 19.127) (2, 45.0646)};
        \addplot[fill,color=purple]  coordinates {(1, 3.4543) (2, 12.2896)};
        \addplot[fill,color=violet]  coordinates {(1, 3.1376) (2, 7.0322)};
        \addplot[fill,color=red]     coordinates {(1, 60) (2, 60)};
        \draw[black,thick] (1.5,0) -- (1.5,50);

        \nextgroupplot[
          yticklabels={,,},
          xticklabels={,,},
          ]
        \addplot[fill,color=brown]   coordinates {(1, 9.6862) (2, 17.7157)};
        \addplot[fill,color=cyan]    coordinates {(1, 10.0484) (2, 17.6377)};
        \addplot[fill,color=magenta] coordinates {(1, 11.0817) (2, 18.4407)};
        \addplot[fill,color=green]   coordinates {(1, 20.5472) (2, 31.8692)};
        \addplot[fill,color=orange]  coordinates {(1, 16.1276) (2, 31.1069)};
        \addplot[fill,color=purple]  coordinates {(1, 7.1594) (2, 21.4392)};
        \addplot[fill,color=violet]  coordinates {(1, 4.5537) (2, 11.0599)};
        \addplot[fill,color=red]     coordinates {(1, 14.6412) (2, 42.502)};
        \draw[black,thick] (1.5,0) -- (1.5,50);

        \nextgroupplot[
          yticklabel pos=right,
          ylabel=Cluttered,
          ylabel near ticks,
          yticklabels={,,10,20,...,50},
          xticklabels={,,},
          ]
        \addplot[fill,color=brown]   coordinates {(1, 16.396) (2, 27.4526)};
        \addplot[fill,color=cyan]    coordinates {(1, 16.4866) (2, 29.1886)};
        \addplot[fill,color=magenta] coordinates {(1, 20.2217) (2, 33.4395)};
        \addplot[fill,color=green]   coordinates {(1, 35.7861) (2, 60)};
        \addplot[fill,color=orange]  coordinates {(1, 33.8309) (2, 60)};
        \addplot[fill,color=purple]  coordinates {(1, 15.8627) (2, 48.3957)};
        \addplot[fill,color=violet]  coordinates {(1, 7.9593) (2, 21.0883)};
        \addplot[fill,color=red]     coordinates {(1, 60) (2, 60)};
        \draw[black,thick] (1.5,0) -- (1.5,50);

        \nextgroupplot[
          ylabel={Solve Time (seconds)},
          ]
        \addplot[fill,color=brown]   coordinates {(1, 11.2885) (2, 26.7953)};
        \addplot[fill,color=cyan]    coordinates {(1, 5.7907) (2, 17.0431)};
        \addplot[fill,color=magenta] coordinates {(1, 7.3808) (2, 15.6016)};
        \addplot[fill,color=green]   coordinates {(1, 8.8587) (2, 22.4392)};
        \addplot[fill,color=orange]  coordinates {(1, 11.2695) (2, 32.6978)};
        \addplot[fill,color=purple]  coordinates {(1, 9.5233) (2, 35.9615)};
        \addplot[fill,color=violet]  coordinates {(1, 10.5697) (2, 38.5224)};
        \addplot[fill,color=red]     coordinates {(1, 8.0956) (2, 14.4597)};
        \draw[black,thick] (1.5,0) -- (1.5,50);

        \nextgroupplot[
          yticklabels={,,},
          ]
        \addplot[fill,color=brown]   coordinates {(1, 13.5596) (2, 60)};
        \addplot[fill,color=cyan]    coordinates {(1, 10.2883) (2, 22.6984)};
        \addplot[fill,color=magenta] coordinates {(1, 9.2505) (2, 20.5735)};
        \addplot[fill,color=green]   coordinates {(1, 10.2612) (2, 37.4922)};
        \addplot[fill,color=orange]  coordinates {(1, 11.1931) (2, 60)};
        \addplot[fill,color=purple]  coordinates {(1, 10.8287) (2, 60)};
        \addplot[fill,color=violet]  coordinates {(1, 12.0021) (2, 60)};
        \addplot[fill,color=red]     coordinates {(1, 8.8486) (2, 16.7486)};
        \draw[black,thick] (1.5,0) -- (1.5,50);

        \nextgroupplot[
          yticklabel pos=right,
          ylabel=Maze,
          ylabel near ticks,
          ]
        \addplot[fill,color=brown]   coordinates {(1, 23.5269) (2, 60)};
        \addplot[fill,color=cyan]    coordinates {(1, 17.4971) (2, 42.1236)};
        \addplot[fill,color=magenta] coordinates {(1, 15.2718) (2, 34.7308)};
        \addplot[fill,color=green]   coordinates {(1, 16.2296) (2, 60)};
        \addplot[fill,color=orange]  coordinates {(1, 17.6971) (2, 60)};
        \addplot[fill,color=purple]  coordinates {(1, 19.1168) (2, 60)};
        \addplot[fill,color=violet]  coordinates {(1, 19.4786) (2, 60)};
        \addplot[fill,color=red]     coordinates {(1, 14.7872) (2, 29.4243)};
        \draw[black,thick] (1.5,0) -- (1.5,50);
      \end{groupplot}
    \end{tikzpicture}
  }
  \caption{The average time it took each planner to find a solution what was within 5 and 2 percent of the best averaged solution found after 50 seconds.
           Images from left to right show the results of the straight-line, arc-fillet, and B\'ezeir-fillet motion primitives.
           S-RRT* with a beacon sizes of $0.1m$, $0.5m$, $1m$, $3m$, $5m$, $10m$, and $15m$ are shown in brown, cyan, magenta, green, orange, purple, and violet respectfully.
           SI-RRT* is shown in red.}
  \label{fig:s_rrt_beacon_size}
\end{figure*}
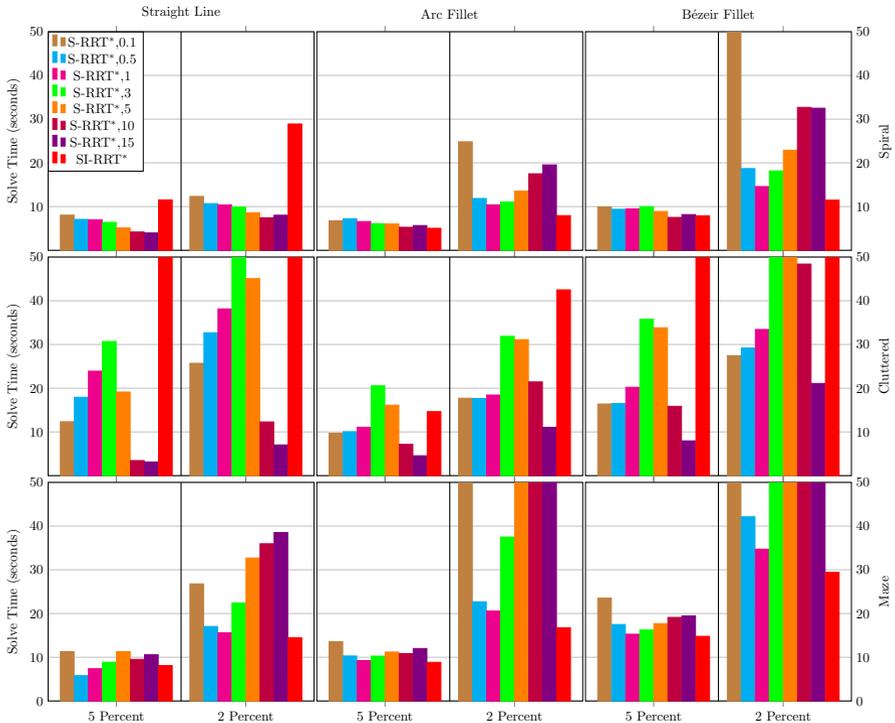



  \section{Conclusion}
  \label{sec:conclusion}
  In this work, an RRT-based path planning algorithm is proposed that uses general fillets as motion primitives.
  An arc-fillet is designed to provide path continuity similar to a Dubin's path while a B\`ezier-fillet is developed to provide continuity in path curvature.
  RRT*-like procedures are developed to accommodate fillet-based motion primitives.
  Simulation results show that planning with arc-fillets significantly outperforms the use of Dubin's paths as a motion primitive.
  Planning with arc-fillets is shown to perform almost as well as straight-line motion primitives.
  Planning with B\`ezier-fillets exhibits slightly worse performance than arc-fillets, although it far outperformed planning with Dubin's despite considering more complex dynamic constraints.
  A comparison is made between informed sampling, smart sampling, and a new smart-and-informed sampling technique.
  Like their straight-line counterparts, the fillet-based planners perform better with informed sampling when the heuristic for the shortest path is valid and better with smart sampling when there are fewer local minima.
  The smart-and-informed sampling performed well for fillet-based motion primitives and was found to be most applicable to environments with fewer local minima.
  In such environments, it performed on par with the best smart beacon size without the need for an iterative search for that beacon size.


  \section{Declarations}
    \subsection{Funding}
      The authors declare that no funds, grants, or other support were received during the preparation of this manuscript.
    \subsection{Competing Interests}
      The authors have no relevant financial or non-financial interests to disclose.
    \subsection{Code Availability}
      Simulation code can be found in our open-source repository \url{https://gitlab.com/utahstate/robotics/fillet-rrt-star}.
    \subsection{Author Contributions}
      All authors contributed to the work's conception and design.
      Software development, material preparation, and data collection were performed by James Swedeen.
      Data analyses were performed by James Swedeen and Dr. Greg Droge. James Swedeen is the primary author with major editorial and conceptual contributions made by Dr. Greg Droge and Dr. Randall Christensen at various stages of the writing process.
      All authors read and approved the final manuscript.
    \subsection{Ethics Approval}
      Not applicable.
    \subsection{Consent to Participate}
      Not applicable.
    \subsection{Consent for Publication}
      Not applicable.

  \bibliography{main}

  \begin{appendices}
    \section{\Revision{Reverse Fillet}}
    \label{sec:reverse_fillet}
    
This section describes a novel reverse fillet formulation that enables the ability to plan paths that go forward and backward.
The reverse fillet formulation uses a generic one-directional fillet internally that can be replaced by any fillet primitive desired.
This section ends with an example planning problem that necessitates reverse and forward motion to follow the shortest path possible.

When using the reverse fillet motion primitive the direction state, $d$, is added to the state space.
$d=1$ when the vehicle is moving forward and $d=-1$ when the vehicle is moving backward.
It can be determined if a fillet should keep the direction of travel the same or change it by comparing the $d$ values of the nodes that the fillet connect.
When using \mbox{FB-RRT*} with the reverse fillet formulation, $d$ is randomly sampled from a uniform distribution when the state space is sampled.

Before the $ReverseFillet$ procedure can be given a few pieces of notation must be defined.
We use the notion $x_{0,d}$ to denote the $d$ value of node $x_0$ and $x_{0,xy}$ to denote the position vector of node $x_0$.
The function $R\left(\cdot\right)$ consumes an angle and produces a two-by-two right handed rotation matrix.

\begin{algorithm}[t]
  \caption{\hbox{$X_{fillet} \leftarrow ReverseFillet\left(x_0,x_1,x_2,x_3\right)$}}
  \label{alg:reverse_fillet}
  \begin{algorithmic}[1]
    \If{$x_{2,d} = x_{3,d}$} \Comment{Fillet with uniform path direction}
    \State $X_{unidir} \leftarrow Fillet\left(x_{0,xy},x_{1,xy},x_{2,xy},x_{3,xy}\right)$ \label{alg:reverse_fillet:normal_fillet}
    \State $X_{fillet} \leftarrow \left\{\begin{bmatrix} x_{u,x} & x_{u,y} & x_{3,d} \end{bmatrix}^T, x_u \in X_{unidir}\right\}$ \label{alg:reverse_fillet:normal_fillet_dir}
    \Else \Comment{Fillet with switching path direction}
      \LineComment{Rotation matrix for the orientation of $x_2$}
      \State $R_2 \leftarrow R\left(-atan2\left(x_{2,y} - x_{1,y}, x_{2,x} - x_{1,x}\right)\right)$ \label{alg:reverse_fillet:rot_mat}
      \LineComment{Put each point in the frame with $x_2$ at the origin and $x_1$ on the x-axis}
      \State $_2x_{0} \leftarrow R_2 \cdot \left[x_{0,xy} - x_{2,xy}\right]$
      \State $_2x_{1} \leftarrow R_2 \cdot \left[x_{1,xy} - x_{2,xy}\right]$
      \State $_2x_2 \leftarrow \begin{bmatrix} 0 & 0 \end{bmatrix}^T$
      \State $_2x_{3} \leftarrow R_2 \cdot \left[x_{3,xy} - x_{2,xy}\right]$ \label{alg:reverse_fillet:end_frame_shift}
      \State $_2x_{3f} \leftarrow \begin{bmatrix} -{_2x_{3,x}} & {_2x_{3,y}}\end{bmatrix}^T$ \label{alg:reverse_fillet:flip_x3} \Comment{Flip $_2x_{3}$ across y-axis}
      \State $X_{unidir} \leftarrow Fillet\left({_2x_{0}}, {_2x_{1}},{_2x_2},{_{2}x_{3f}}\right)$ \label{alg:reverse_fillet:forward_fill} \Comment{Make unidirectional fillet}
      \LineComment{Flip any part of the fillet that is in front of $x_2$ over the y-axis}
      \State $X_{fillet} \leftarrow \left\{
        \begin{array}{lr}
          \begin{bmatrix}  x_{u,x} & x_{u,y} & x_{2,d} \end{bmatrix}^T & \text{for } x_{u,x} \leq 0 \\
          \begin{bmatrix} -x_{u,x} & x_{u,y} & x_{3,d} \end{bmatrix}^T & \text{for } x_{u,x} > 0
        \end{array}, x_u \in X_{unidir} \right\}$ \label{alg:reverse_fillet:flip_loop}
      \LineComment{Move the fillet back to the original coordinate frame}
      \State $X_{fillet} \leftarrow \left\{\begin{bmatrix} R_2^T x_{f,xy} + x_{2,xy} \\ x_{f,d} \end{bmatrix}, x_f \in X_{fillet}\right\}$ \label{alg:reverse_fillet:trans_back}
    \EndIf
    \State \Return $X_{fillet}$
  \end{algorithmic}
\end{algorithm}

\begin{figure}[h]
  \centering
  \newcommand{\CaptionWidth}{0.3\linewidth}
  \newcommand{\FigureWidth}{\linewidth}
  \newcommand{\MyScale}{1}
  \newcommand{\LabelSize}{\Huge}
  \newcommand{\PointSize}{2pt}

  \newcommand{\DrawAxis}{
    \draw[black] (-5.1,0) -- (5.1,0);%
    \draw[black] (0,-5.1) -- (0,5.1);%
  }

  \tikzstyle{node-point}=[circle,draw=black,fill=black,black,inner sep=\PointSize]

  \begin{subfigure}[t]{\CaptionWidth}
    \centering
    \resizebox{\FigureWidth}{!}{
      \tikzsetnextfilename{reverse_fillet_demo_a}
      \begin{tikzpicture}[scale=\MyScale]
        \DrawAxis
        \node[node-point,label={above:     {\LabelSize $x_0$}}] (x_0) at (3, -5) {};%
        \node[node-point,label={left:      {\LabelSize $x_1$}}] (x_1) at (3.5, -3.5) {};%
        \node[node-point,label={below:     {\LabelSize $x_2$}}] (x_2) at (1, -1) {};%
        \node[node-point,label={above left:{\LabelSize $x_3$}}] (x_3) at (5, -1.5) {};%
      \end{tikzpicture}
    }
    \caption{The four points that the fillet will be made between.}
    \label{fig:rev_fill_demo:sub0}
  \end{subfigure}
  \begin{subfigure}[t]{\CaptionWidth}
    \centering
    \resizebox{\FigureWidth}{!}{
      \tikzsetnextfilename{reverse_fillet_demo_b}
      \begin{tikzpicture}[scale=\MyScale]
        \DrawAxis
        \begin{scope}[rotate around z=45]
          \node[node-point,label={above:     {\LabelSize $_2x_0$}}] (_2x_0) at ($(3, -5)-(1,-1)$) {};%
          \node[node-point,label={above:     {\LabelSize $_2x_1$}}] (_2x_1) at ($(3.5, -3.5)-(1,-1)$) {};%
          \node[node-point,label={below left:{\LabelSize $_2x_2$}}] (_2x_2) at ($(1, -1)-(1,-1)$) {};%
          \node[node-point,label={above:     {\LabelSize $_2x_3$}}] (_2x_3) at ($(5, -1.5)-(1,-1)$) {};%
        \end{scope}
      \end{tikzpicture}
    }
  \caption{The four points after they have been transformed.} 
    \label{fig:rev_fill_demo:sub1}
  \end{subfigure}
  \begin{subfigure}[t]{\CaptionWidth}
    \centering
    \resizebox{\FigureWidth}{!}{
      \tikzsetnextfilename{reverse_fillet_demo_c}
      \begin{tikzpicture}[scale=\MyScale]
        \DrawAxis
        \begin{scope}[rotate around z=45]
          \node[node-point,label={above:     {\LabelSize $_2x_0$}}] (_2x_0) at ($(3, -5)-(1,-1)$) {};%
          \node[node-point,label={above:     {\LabelSize $_2x_1$}}] (_2x_1) at ($(3.5, -3.5)-(1,-1)$) {};%
          \node[node-point,label={below left:{\LabelSize $_2x_2$}}] (_2x_2) at ($(1, -1)-(1,-1)$) {};%
          \node (_2x_3) at ($(5, -1.5)-(1,-1)$) {};%
        \end{scope}
        \draw let \p1=(_2x_3) in node[node-point,label={above:{\LabelSize $_{2}x_{3f}$}}] (flipped_2x_3) at (-\x1,\y1) {};%
      \end{tikzpicture}
    }
    \caption{$_2x_3$ is flipped over the y-axis.}
    \label{fig:rev_fill_demo:sub2}
  \end{subfigure}
  \begin{subfigure}[t]{\CaptionWidth}
    \centering
    \resizebox{\FigureWidth}{!}{
      \tikzsetnextfilename{reverse_fillet_demo_d}
      \begin{tikzpicture}[scale=\MyScale]
        \DrawAxis
        \begin{scope}[rotate around z=45]
          \node[node-point,label={above:     {\LabelSize $_2x_0$}}] (_2x_0) at ($(3, -5)-(1,-1)$) {};%
          \node[node-point,label={above:     {\LabelSize $_2x_1$}}] (_2x_1) at ($(3.5, -3.5)-(1,-1)$) {};%
          \node[node-point,label={below left:{\LabelSize $_2x_2$}}] (_2x_2) at ($(1, -1)-(1,-1)$) {};%
          \node (_2x_3) at ($(5, -1.5)-(1,-1)$) {};%
        \end{scope}
        \draw let \p1=(_2x_3) in node[node-point,label={above:{\LabelSize $_{2}x_{3f}$}}] (flipped_2x_3) at (-\x1,\y1) {};%
        \draw[blue,thick]%
          let \p1=($(_2x_1)-(_2x_2)$),%
              \p2=($(_2x_2)-(flipped_2x_3)$),%
              \n1={atan2(\y1,\x1)+deg(pi/2)},%
              \n2={atan2(\y2,\x2)+deg(pi/2)},%
              \n3={deg(pi) - abs(mod(abs(\n2 - \n1), deg(2*pi)) - deg(pi))},%
              \n4={abs(-8cm * ((1-cos(\n3))/sin(\n3)))},%
              \p3=($(_2x_2) !\n4! (_2x_1)$),%
              \p4=($(_2x_2) !\n4! (flipped_2x_3)$)%
           in (_2x_1) -- (\p3) arc (\n1:\n2:-8cm) -- (flipped_2x_3);%
      \end{tikzpicture}
    }
    \caption{The unidirectional fillet is generated.}
    \label{fig:rev_fill_demo:sub3}
  \end{subfigure}
  \begin{subfigure}[t]{\CaptionWidth}
    \centering
    \resizebox{\FigureWidth}{!}{
      \tikzsetnextfilename{reverse_fillet_demo_e}
      \begin{tikzpicture}[scale=\MyScale]
        \DrawAxis
        \begin{scope}[rotate around z=45]
          \node[node-point,label={above:     {\LabelSize $_2x_0$}}] (_2x_0) at ($(3, -5)-(1,-1)$) {};%
          \node[node-point,label={above:     {\LabelSize $_2x_1$}}] (_2x_1) at ($(3.5, -3.5)-(1,-1)$) {};%
          \node[node-point,label={below left:{\LabelSize $_2x_2$}}] (_2x_2) at ($(1, -1)-(1,-1)$) {};%
          \node[node-point,label={above:     {\LabelSize $_2x_3$}}] (_2x_3) at ($(5, -1.5)-(1,-1)$) {};%
        \end{scope}
        \draw let \p1=(_2x_3) in node[node-point,label={above:{\LabelSize $_{2}x_{3f}$}}] (flipped_2x_3) at (-\x1,\y1) {};%
        \begin{scope}
          \clip (0,0) rectangle (5,5);
          \draw[blue,thick]%
            let \p1=($(_2x_1)-(_2x_2)$),%
                \p2=($(_2x_2)-(flipped_2x_3)$),%
                \n1={atan2(\y1,\x1)+deg(pi/2)},%
                \n2={atan2(\y2,\x2)+deg(pi/2)},%
                \n3={deg(pi) - abs(mod(abs(\n2 - \n1), deg(2*pi)) - deg(pi))},%
                \n4={abs(-8cm * ((1-cos(\n3))/sin(\n3)))},%
                \p3=($(_2x_2) !\n4! (_2x_1)$),%
                \p4=($(_2x_2) !\n4! (flipped_2x_3)$)%
             in (_2x_1) -- (\p3) arc (\n1:\n2:-8cm) -- (flipped_2x_3);%
        \end{scope}
        \begin{scope}
          \clip (0,0) rectangle (5,5);
          \draw[blue,thick]%
            let \p1=($(_2x_1)-(_2x_2)$),%
                \p2=($(_2x_2)-(flipped_2x_3)$),%
                \n1={atan2(\y1,\x1)+deg(pi/2)},%
                \n2={atan2(\y2,\x2)+deg(pi/2)},%
                \n3={deg(pi) - abs(mod(abs(\n2 - \n1), deg(2*pi)) - deg(pi))},%
                \n4={abs(-8cm * ((1-cos(\n3))/sin(\n3)))},%
                \p3=($(_2x_2) !\n4! (_2x_1)$),%
                \p4=($(_2x_2) !\n4! (flipped_2x_3)$),%
                \p5=(_2x_1),%
                \p6=(flipped_2x_3)%
             in (-\x5,\y5) -- (-\x3,\y3) arc (-\n1:-\n2:8cm) -- (-\x6,\y6);%
        \end{scope}
        \filldraw[black] (_2x_3) circle(\PointSize);
      \end{tikzpicture}
    }
    \caption{The second half of the unidirectional fillet is flipped to reverse it.}
    \label{fig:rev_fill_demo:sub4}
  \end{subfigure}
  \begin{subfigure}[t]{\CaptionWidth}
    \centering
    \resizebox{\FigureWidth}{!}{
      \tikzsetnextfilename{reverse_fillet_demo_f}
      \begin{tikzpicture}[scale=\MyScale]
        \DrawAxis
        \node[node-point,label={above:     {\LabelSize $x_0$}}] (x_0) at (3, -5) {};%
        \node[node-point,label={left:      {\LabelSize $x_1$}}] (x_1) at (3.5, -3.5) {};%
        \node[node-point,label={below:     {\LabelSize $x_2$}}] (x_2) at (1, -1) {};%
        \node[node-point,label={above left:{\LabelSize $x_3$}}] (x_3) at (5, -1.5) {};%
        \begin{scope}[rotate around z=45]
          \node[inner sep=\PointSize] (_2x_0) at ($(3, -5)-(1,-1)$) {};%
          \node[inner sep=\PointSize] (_2x_1) at ($(3.5, -3.5)-(1,-1)$) {};%
          \node[inner sep=\PointSize] (_2x_2) at ($(1, -1)-(1,-1)$) {};%
          \node[inner sep=\PointSize] (_2x_3) at ($(5, -1.5)-(1,-1)$) {};%
        \end{scope}
        \draw let \p1=(_2x_3) in node (flipped_2x_3) at (-\x1,\y1) {};%
        \begin{scope}[transform canvas={rotate around={-45:(0,0)},shift={(x_2)}}]
          \begin{scope}
            \clip (0,0) rectangle (5,5);
            \draw[blue,thick]%
              let \p1=($(_2x_1)-(_2x_2)$),%
                  \p2=($(_2x_2)-(flipped_2x_3)$),%
                  \n1={atan2(\y1,\x1)+deg(pi/2)},%
                  \n2={atan2(\y2,\x2)+deg(pi/2)},%
                  \n3={deg(pi) - abs(mod(abs(\n2 - \n1), deg(2*pi)) - deg(pi))},%
                  \n4={abs(-8cm * ((1-cos(\n3))/sin(\n3)))},%
                  \p3=($(_2x_2) !\n4! (_2x_1)$),%
                  \p4=($(_2x_2) !\n4! (flipped_2x_3)$)%
               in (_2x_1) -- (\p3) arc (\n1:\n2:-8cm) -- (flipped_2x_3);%
          \end{scope}
          \begin{scope}
            \clip (0,0) rectangle (5,5);
            \draw[blue,thick]%
              let \p1=($(_2x_1)-(_2x_2)$),%
                  \p2=($(_2x_2)-(flipped_2x_3)$),%
                  \n1={atan2(\y1,\x1)+deg(pi/2)},%
                  \n2={atan2(\y2,\x2)+deg(pi/2)},%
                  \n3={deg(pi) - abs(mod(abs(\n2 - \n1), deg(2*pi)) - deg(pi))},%
                  \n4={abs(-8cm * ((1-cos(\n3))/sin(\n3)))},%
                  \p3=($(_2x_2) !\n4! (_2x_1)$),%
                  \p4=($(_2x_2) !\n4! (flipped_2x_3)$),%
                  \p5=(_2x_1),%
                  \p6=(flipped_2x_3)%
               in (-\x5,\y5) -- (-\x3,\y3) arc (-\n1:-\n2:8cm) -- (-\x6,\y6);%
          \end{scope}
        \end{scope}
        \filldraw[black] (x_3) circle(\PointSize);
      \end{tikzpicture}
    }
    \caption{The fillet is transformed back into the original coordinate frame.}
    \label{fig:rev_fill_demo:sub5}
  \end{subfigure}
  \caption{An illustration of the $ReverseFillet$ procedure.}
  \label{fig:rev_fill_demo}
\end{figure}
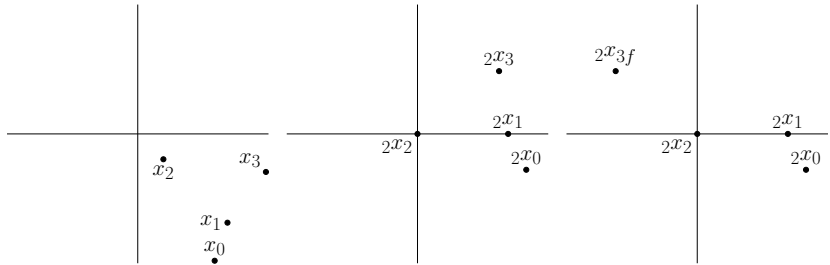
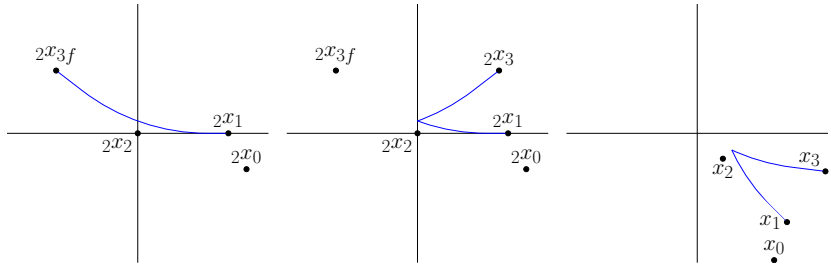

Algorithm \ref{alg:reverse_fillet} gives the procedure for generating a reverse fillet and Figure \ref{fig:rev_fill_demo} illustrates the process.
If the direction of travel for the two points being connected, $x_2$ and $x_3$, are the same then a normal fillet can be made to connect them, see lines \ref{alg:reverse_fillet:normal_fillet} and \ref{alg:reverse_fillet:normal_fillet_dir}.
If the direction of travel changes, more logic is needed to make use of a unidirectional fillet primitive to make a fillet that changes direction.
The process, shown on lines \ref{alg:reverse_fillet:rot_mat} through \ref{alg:reverse_fillet:trans_back}, involves flipping $x_3$ over the plane that intersects $x_2$ and is perpendicular to $\overline{x_1 x_2}$ to produce a new point, $x_{3f}$.
An unidirectional fillet can be designed using $x_{3f}$.
The fillet to execute is obtained by flipping the portion of the unidirectional fillet from $x_2$ to $x_{3f}$ back so that the path ends at $x_3$.
The portion from $x_2$ to $x_{3}$ will then be executed in the opposite direction of that from $x_1$ to $x_2$.

First $x_0$, $x_1$, and $x_3$ are transformed into a coordinate frame with $x_2$ at the origin and $x_1$ on the x-axis, see lines \ref{alg:reverse_fillet:rot_mat} through \ref{alg:reverse_fillet:end_frame_shift} and Figure \ref{fig:rev_fill_demo:sub1}.
This frame is defined to make flipping $x_3$ easier.
The prescript $2$ is used to denote a point in this frame, i.e., $_2x_0$ is the point $x_{0,xy}$ in the new frame.
On line \ref{alg:reverse_fillet:flip_x3}, $_2x_3$ is flipped across the y-axis, see Figure \ref{fig:rev_fill_demo:sub2}.
The transformed set of points $_2x_{0}$, $_2x_{1}$, $_2x_2$, and $_{2}x_{3f}$ form a chain of nodes that do not change direction.
Line \ref{alg:reverse_fillet:forward_fill} generates a unidirectional fillet called $X_{unidir}$ with these modified points, see Figure \ref{fig:rev_fill_demo:sub3}.
Line \ref{alg:reverse_fillet:flip_loop} fills $X_{fillet}$ with a fillet that starts moving in the same direction at $x_2$ while the x component of $X_{unidir}$ is negative.
When the x component of $X_{unidir}$ hits the y-axis the fillet switches to the direction of travel of $x_3$ and flips the fillet across the y-axis.
The result is a fillet that comes to a point on the y-axis and switches direction at that point, see Figure \ref{fig:rev_fill_demo:sub4}.
Line \ref{alg:reverse_fillet:trans_back} transforms fillet back to the original coordinate frame, as shown in Figure \ref{fig:rev_fill_demo:sub5}.

\begin{figure}[h]
  \centering
  \resizebox{0.5\linewidth}{!}{
    \begin{tikzpicture}
      \begin{axis}[
        enlargelimits=false,
        axis lines=none,
        xmin=-15,ymin=-15,xmax=15,ymax=15,
        scale only axis,
        axis equal image,
        axis equal=true,
        ]
        \clip  (15,15) rectangle (-15,-8);

        \addplot graphics [xmin=-15,ymin=-15,xmax=15,ymax=15] {pics/curvature_constrained_path/curvature_constrained_path.png};

        \addplot[color=blue]  table [x=x,y=y, col sep=comma] {pics/curvature_constrained_path/arc_fillet.csv};
        \addplot[color=green] table [x=x,y=y, col sep=comma] {pics/curvature_constrained_path/reverse_arc_fillet.csv};
        \addplot[color=red]   table [x=x,y=y, col sep=comma] {pics/curvature_constrained_path/reverse_arc_fillet_reverse.csv};

        \node[label={left: {\small$x_r$}},inner sep=0pt] () at (-12.5,-6.1) {};
        \node[label={above:{\small$X_t$}},inner sep=0pt] () at (-3,12) {};

      \end{axis}
    \end{tikzpicture}
  }
  \caption{The resulting paths from running \mbox{FB-RRT*} in an environment where the shortest path turns sharply down a corridor.
           Solutions from planning with arc-fillet paths are shown in blue.
           Solutions from planning with reverse-arc-fillet paths are shown in green when $d=1$, forward, and red when $d=-1$, reverse.}
  \label{fig:reverse_curvature_demo_world}
\end{figure}
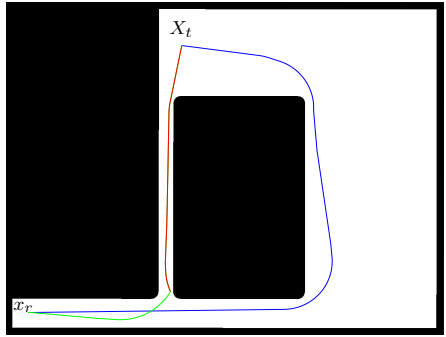

One scenario where the ability to plan forward and reverse fillets is beneficial is shown in Figure \ref{fig:reverse_curvature_demo_world}.
Figure \ref{fig:reverse_curvature_demo_world} uses the same planning configuration as Figure \ref{fig:curvature_demo_world} from Section \ref{sec:curvature_constrained_paths}.
The only difference between Figures \ref{fig:reverse_curvature_demo_world} and \ref{fig:curvature_demo_world} is that \ref{fig:reverse_curvature_demo_world} shows the result of path planning using the $ReverseFillet$ procedure in green and red instead of the solution found with straight-line primitives.
The solution found from planning without the $ReverseFillet$ procedure is shown in blue.
Both planners are using arc-fillets with the same maximum curvature constraint, but the red path makes use of the $ReverseFillet$ procedure.

As is described in Section \ref{sec:curvature_constrained_paths}, a path that goes through the narrow hallway cannot satisfy the curvature constraints of the problem without hitting walls when solely forward motion is considered.
Figure \ref{fig:reverse_curvature_demo_world} shows that it is possible using the $ReverseFillet$ procedure.
Following the green and red solution, generated with forward and reverse motion, the path turns partially into the narrow hallway.
When it nears the wall, the path stops and continues the turn in reverse.
The path follows the hallway in reverse until it gets to $X_t$.
Without the functionality added with the $ReverseFillet$ procedure, the blue solution is unable to follow the hallway and instead must plan a significantly longer path that goes around the obstacles.
Note that the inclusion of the reverse motion causes an increase in convergence time due to the added dimension in the sampling space.
Future work could include methods to reduce this complexity and also to penalize long stretches of reverse motion.

  \end{appendices}
\end{document}